\DeclareFontFamily{U}{wncy}{}
\DeclareFontShape{U}{wncy}{m}{n}{<->wncyr10}{}
\DeclareSymbolFont{mcy}{U}{wncy}{m}{n}
\DeclareMathSymbol{\Sh}{\mathord}{mcy}{"58} 
\newcommand{\minus}{\scalebox{0.75}[1.0]{$-$}}
\newtheorem{theo}{Theorem}
\newtheorem{lem}{Lemma}
\newtheorem{rem}{Remark}
\newcommand{\ie}{i\/.\/e\/.,\/~}
\newcommand{\eg}{e\/.\/g\/.,\/~}
\newcommand{\cf}{cf\/.\/~}
\newcommand{\fig}{Fig\/.\/~}
\newcommand{\alg}{Alg\/.\/~}
\newcommand{\sect}{Sec\/.\/~}
\newcommand*{\diffeo}{% 
  \mathrel{\vcenter{\offinterlineskip
  \hbox{$\sim$}\vskip-.35ex\hbox{$\sim$}\vskip-.35ex\hbox{$\sim$}}}}
\newcommand{\mieds}{L-ICDS\xspace}
\DeclareMathOperator*{\argmin}{arg\,min}
\newcommand{\icds}{ICDS\xspace}
\newcommand{\licds}{L-ICDS\xspace}
\title{A Local Information Criterion for Dynamical Systems}
\author{Arash Mehrjou$^\dagger$\\
Department of Empirical Inference\\
Max Planck Institute for Intelligent Systems\\
\texttt{arash.mehrjou@tuebingen.mpg.de}\\
\And
Friedrich Solowjow$^\dagger$\\
Intelligent Control Systems Group \\
Max Planck Institute for Intelligent Systems\\
\texttt{fsolowjow@is.mpg.de}\\
\And
Sebastian Trimpe\\
Intelligent Control Systems Group\\
Max Planck Institute for Intelligent Systems\\
\texttt{trimpe@is.mpg.de}\\
\And
Bernhard Sch\"{o}lkopf\\
Department of Empirical Inference\\
Max Planck Institute for Intelligent Systems\\
\texttt{bs@tuebingen.mpg.de} \\
}
\begin{document}

\maketitle

\begin{abstract}
Encoding a sequence of observations is an essential task with many applications. The encoding can become highly efficient when the observations are generated by a dynamical system. A dynamical system imposes regularities on the observations that can be leveraged to achieve a more efficient code. We propose a method to encode a given or learned dynamical system. Apart from its application for encoding a sequence of observations, we propose to use the compression achieved by this encoding as a criterion for model selection. Given a dataset, different learning algorithms result in different models. But not all learned models are equally good. We show that the proposed encoding approach can be used to choose the learned model which is closer to the true underlying dynamics. We provide experiments for both encoding and model selection, and theoretical results that shed light on why the approach works.
\end{abstract}

\section{Introduction}
Objects are of various complexities in nature. A round stone looks simpler than a convoluted rough piece of rock; a constant beep-like sound is simpler than an orchestra. We humans have internal ideas about how complex are objects. The complexity can also be defined for abstract objects such as mathematical creatures. The focus of this paper is on the complexity of dynamical systems that model the laws of nature~\citep{newton1833philosophiae}. To our eyes, a dynamical system is nothing more than a temporal sequence of observations.  We might use the data sequence to infer a model.  But what is the better representation of the dynamical system -- the data or the model, and which model should we use?  In this paper, we take a closer look at
%we take a closer look at the description of a dynamical system; 
%we propose an 
efficient encoding of dynamical systems and, based on that, propose a model selection criterion with practical use in empirical inference.
%in the presence and absence of a model and based on that, we propose a model selection criterion with practical use in empirical inference.

%To ease the presentation, 
For illustration, assume the following scenario: Alice and Bob are friends and they are talking over the phone. Alice is watching a dynamical system and wants to share her experience with Bob. Alice knows what Bob knows about the nature, math, etc. She is watching a temporal sequence of observations $\mathcal{S}=[x(1), x(2), \ldots]$ caused by an underlying mathematical expression $\dot{x}(t)=f(x(t))$. 
Unfortunately, the transmission cord from Alice to Bob charges her for every voltage pulse. Therefore, Alice would like to transmit her experience to Bob with the least phone cost. Due to the physical constraints, Alice can observe samples from the model with sampling frequency $f_s=1/T_s$ where $T_s$ is the time interval between two consecutive observations. Assume the phone call starts at time $t_0$ and Alice can describe each of her observations with $m$ bits. One trivial solution is that Alice talks constantly with Bob and tells him every observation at each time instant $\{t_0, t_0+T_s, t_0+2T_s, \ldots\, t_0+nT_s\}$ for an indeterminate amount of time. Despite its simplicity, this approach will cost Alice a horrible amount $mn$ that increases without bound as $n\to \infty$. More cleverly, Alice can use her prior assumptions about nature and her belief that her observations are not totally random. Hence, she is able to infer the underlying dynamics by a \emph{nonparametric} model $\hat{f}$ from her observations $\mathcal{S}$. 
%This means that she infers the function $\hat{f}$ from observations $\mathcal{S}$. 
Assume this model is chosen from a hypothesis set $\mathcal{H}$ and both Alice and Bob agree on the members of $\mathcal{H}$. Thus, Alice only needs to inform Bob about the initial state $\hat{x}(0)=x(0)$ of the system and the model $\hat{f}$ she has learned about the dynamics. Bob can reconstruct the sequence $\hat{\mathcal{S}}\approx\mathcal{S}$ on his side by running $\dot{\hat{x}}(t)=\hat{f}(\hat{x}(t))$ starting from the initial state $\hat{x}(0)$. Notice that the state dynamics may cover only a small subset of the state space, which removes the need to model $f$ on its whole domain. We use this property of dynamical systems for compressing their information and obtaining an optimal local trade-off between model complexity and prediction accuracy.

% In this paper, we take a closer look at the description of a dynamical system in the presence and absence of a model and based on that, we propose a model selection criterion with practical use in empirical inference.

% Therefore, how can this sequence be encoded into a universal computing machine such as human brain~\citep{ornstein1975experience} But where does this internal feeling come from? Let’s assume human brain is an encoding machine that converts perceptible signals into the codes realized by neural activities. It makes sense that complicated neural activities were caused by complicated objects~\citep{barlow1961possible}. The objects can be physical concrete objects or abstract mathematical creatures~\citep{copeland1970children} such as dynamical systems that model the laws of nature~\citep{newton1833philosophiae}.To our eyes, a dynamical system is nothing more than a temporal sequence of observations. Therefore, how can this sequence be encoded into a universal computing machine such as human brain~\citep{ornstein1975experience}. In this paper, we take a closer look at the description of a dynamical system in the presence and absence of a model and based on that, we propose a model selection criterion with practical use in empirical inference.

The underlying questions of this example are highly relevant also for 
%engineered 
artificial intelligent systems.  Imagine autonomous vehicles flying or driving in a formation \citep{alam2015heavy}, or multiple robots coordinating their actions \citep{rubenstein2014programmable}.  These systems need to know of each other; that is, agents need to transmit dynamics information between each other.
%, more precisely, of the other agents' state.  
An intelligent agents, however, will use its resources wisely and thus communicate only when and what is necessary.  In this scenario, better encoding of dynamical systems means reduced communication, lower bandwidth requirements, and thus reduced cost.  Likewise, intelligent agents may store various internal models for the purpose of simulation, prediction, or control \citep{camacho2013model}.  
Better representations here may mean improved performance, reduced memory requirements, and faster computation.

% \fr{Write paragraph about doing something between the two extremes - transmitting solely states and a full blown model. Instead do local models + some states to iterate from.}

{\it Contribution ---} In this paper, we propose to encode dynamical systems through local representations, which are computed to yield (locally) an optimal trade-off between model complexity and predictive performance.  The criterion automatically chooses the `right' complexity -- locally simple dynamics are represented by low-order models, while higher-order representations are automatically taken in areas with more complex dynamics. Because the representation thus adapts to the local information content of the dynamics, the proposed encoding scheme represents a novel information criterion for dynamical systems, which we call \emph{Local Information Criterion for Dynamical Systems} (\licds).  

\licds is motivated by compressing information through local representations.  Since there are theories and empirical evidence in machine learning confirming the relation between generalization and compression~\citep{vapnik2013nature, luxburg2004compression},
% \seb{@Arash: can we say something like this (was trying to recall what you once said in our discussions)?  Can you check and possibly add a reference?}\ar{yes. There are literature on this. how about this sentences? Since there are theories and empirical evidence in machine learning confirming the link between generalization and compression~\citep{vapnik2013nature, luxburg2004compression}, we hypothesized that \licds be used for model selection as well.} \seb{I'm fine with the sentence; feel free to change accordingly and remove comments},
we hypothesized \licds is also useful for model selection.  
Indeed, we show that the information criterion can be used (in addition to efficient encoding) to choose among different models learned from a given dataset. 
%In addition to efficient encoding, we thus show that the information criterion can be used to choose among different models learned from a given dataset. 
In particular, we show that it is possible to choose between different architectures of neural networks (NNs) and to compare different types of learned models (e.g., NNs versus GPs) solely with the aid of the compression score and not with test data.  We extend our empirical findings with theoretical results, which confirm the correctness of our method for certain function classes and provide insight into why \licds is a useful criterion for model selection.
Fig.~\ref{fig:cartoons} illustrate the two proposed applications of \licds: encoding and model selection.

% Previous version:
%In this paper, we introduce a new information criterion, based on the proposed local encoding scheme. We call it local information criterion for dynamical systems (\licds) and motivate it by compressing information through the local representation. The schematics are depicted in Fig.~\ref{fig:cartoons}.
%
%In addition to efficient encoding, we show empirical results that the information criterion can be used to choose among different models learned from a given dataset. 
%In particular, we show that it is possible to choose between different architectures of neural networks and compare various types of learned models solely with the aid of the compression score and not with test data, e.g. neural networks vs GPs. Additionally, we extend our empirical findings with theoretical results, which confirm the correctness of our method for certain function classes. 

\begin{figure}[t!]
	\centering
	\hspace{-0.3cm}
\subfigure[Encoding]{
\includegraphics[width=0.45\linewidth]{./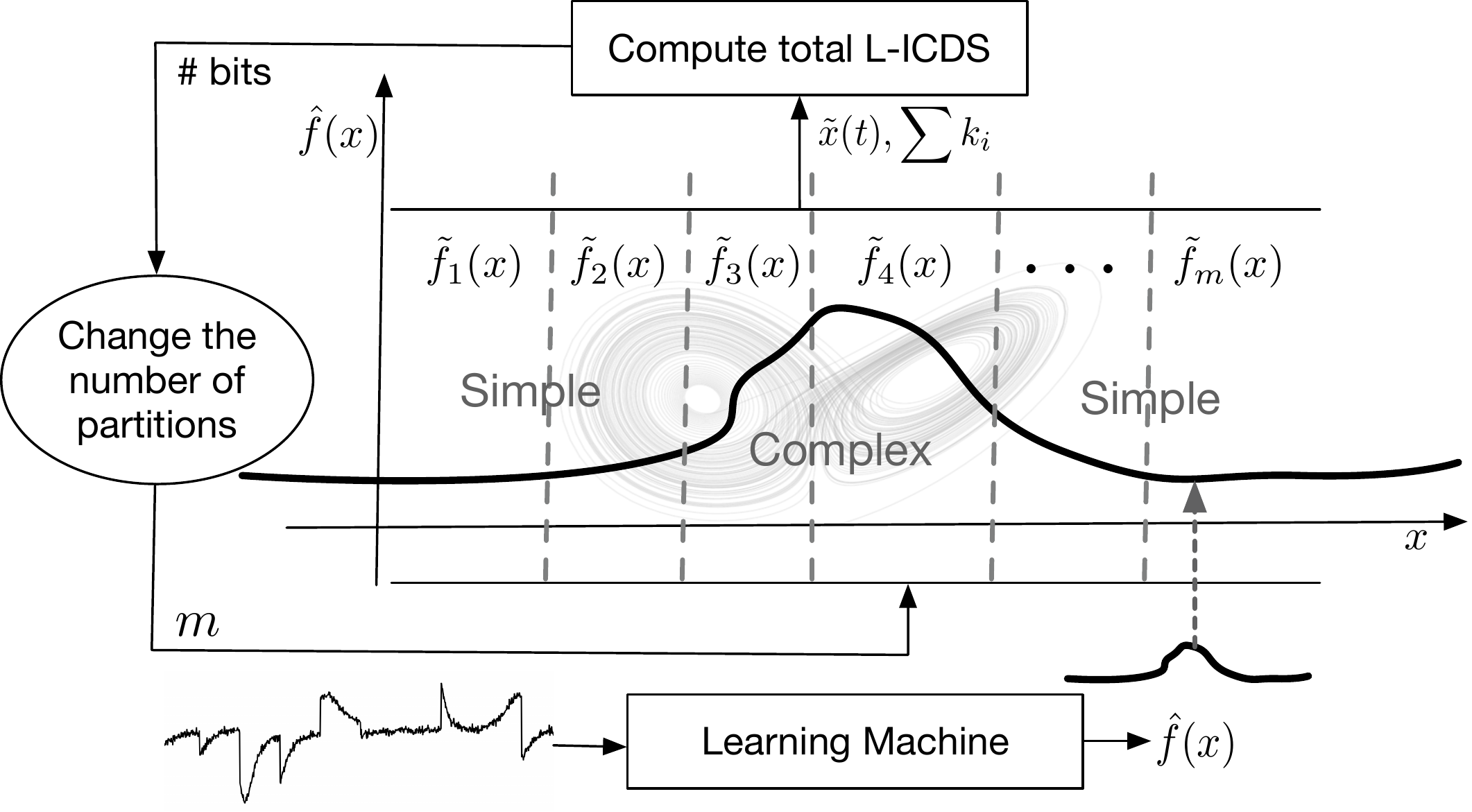}}
\subfigure[Model selection]{
\includegraphics[width=0.45\linewidth]{./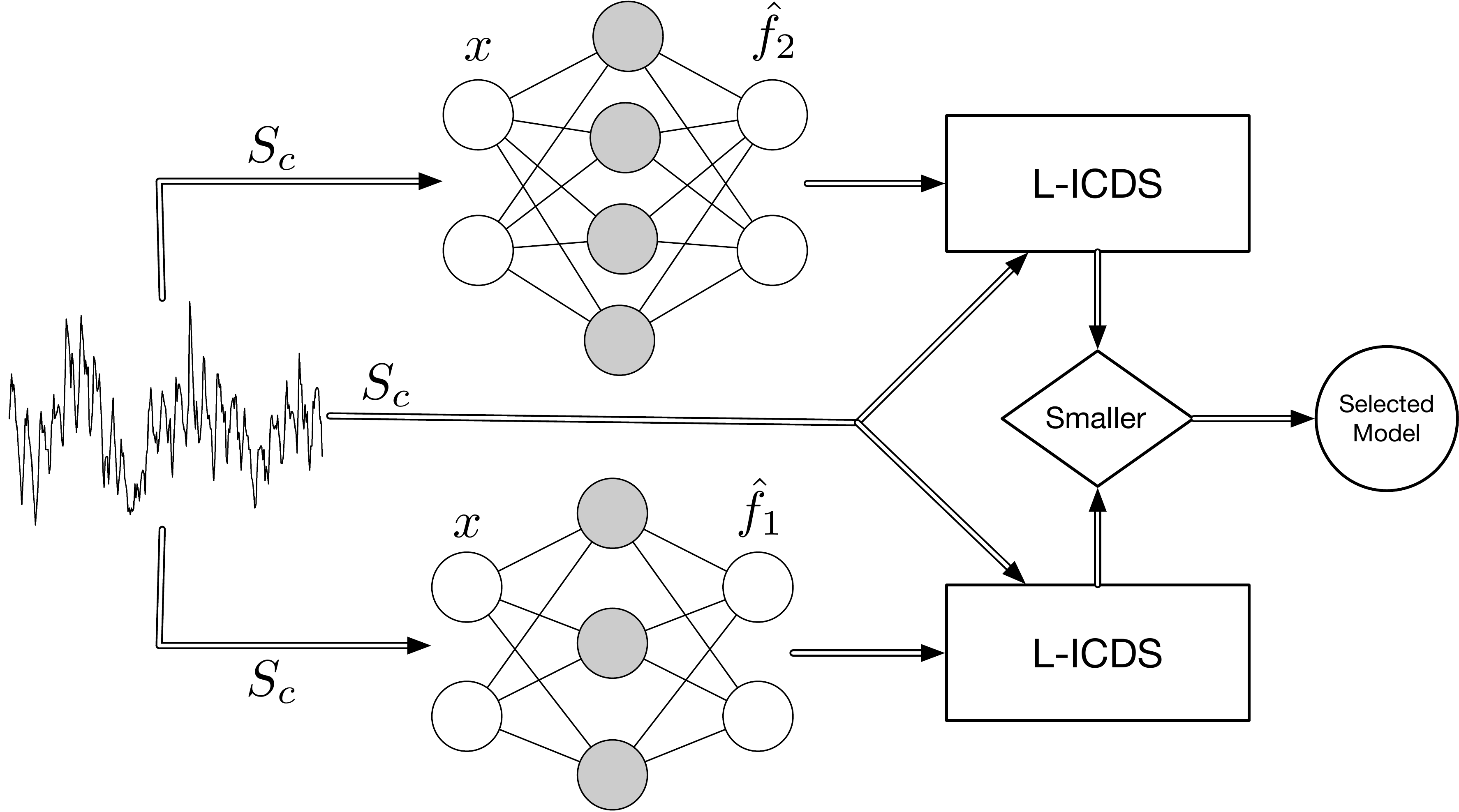}}
\caption{\small Two cartoons for two ideas presented in this paper. 
(a) Encoding a time series by first learning a model from the observations (bottom) and then locally encoding the model by \licds. (b) Using the \licds score as an information criterion to perform model selection. Two different neural network architectures are trained from a time series and \licds score is computed for each. 
%The model that achieves the smaller \licds score gives the best compression of $(S_c, \hat{f})$ and is selected based on MML principle.
}
\vspace{-0.2cm}
\label{fig:cartoons}
\end{figure}

{\it Related Work ---} 
The subject of obtaining a representation of a dynamical system from its input-output data is known as system identification \citep{ljung1999system,nelles2013nonlinear} or model learning \citep{nguyen2011model}. 
Two major approaches in system identification are gray-box and black-box approaches~\citep{ljung1999system,nelles2013nonlinear}. 
Gray-box methods learn the parameters of a known model~\citep{tulleken1993grey}, where parameters typically have a physical interpretation. However, black-box methods need to identify both the structure and parameters of the model~\citep{sjoberg1995nonlinear}. In black-box system identification, or machine learning in general, choosing the appropriate structure is usually done by investigating model performance on a left-out validation set. Information criterion is a different approach to model selection by taking into account model complexity and data explanation at the same time~\citep{yamaoka1978application}. Many information criteria have been proposed and used for supervised~\citep{fogel1991information} and unsupervised learning~\citep{mehrjou2016improved}. Despite some recent work~\citep{darmon2018information, mangan2017model} on the the information criterion approach towards dynamical systems, the field is not explored well yet. This work is proposing a compression method for dynamical systems that can be used as an information criterion and for model selection as well.  

% of t General approaches to learning/identifying dynamic models from data; we can
% point to some classic Sys ID books (e.g. Ljung), as well as to some results more
% from the ML community on model learning (there is one survey on model learning by
% Nguyen-Tuong + Jan Peters that I’m aware of, but there are probably more).

Models of dynamical systems take very different representations.  On the one end of the spectrum, there are classical parametric models such as linear transfer functions and state-space models \citep{ljung1999system}, as well as nonlinear gray-box models with known structure (e.g., from first principles) and some free parameters.  In these, the model structure is relatively rigid and information is encoded in a small number of parameters, often with some physical interpretation.  
%While being parametric likewise, 
Neural networks (NNs)~\citep{wang2016data,narendra1990identification} can also learn model structure and encode information in a large number of weight parameters without direct physical interpretation.
Fuzzy models such as Takagi-Sugeno \citep{takagi1993fuzzy} encode dynamical systems as a set of fuzzy rules or sets and associated models.
% other ref for fuzzy: \citep{angelov2004approach}
Nonparametric methods such as Gaussian process (GP) models \citep{frigola2013bayesian,doerr2017optimizing,eleftheriadis2017identification} and classical time- or frequency-domain methods \citep{wellstead1981non}
%\citep[Cha.~]{ljung1999system} 
represent dynamical system information essentially in a dataset (in time or frequency domain).  Herein, we propose to encode dynamical systems in local models whose complexity is adapted to the data stream.  Our encoding thus provides a middle ground between encoding in a dataset and a single (global) parametric model.
%The encoding that we propose herein strikes a balance between the two ends of 
%
%Black-box learning of dynamical systems has been studied from different approaches such as: GP~\citep{archer2015black}, Neural networks~\citep{wang2016data}, Fuzzy models~\citep{angelov2004approach}, and etc. 

The benefits of local modeling approaches for dynamical systems have long been recognized \citep{atkeson1997locally,nelles1996basis,TiMeViSc16}.  These include, in particular, the abilities to learn fast and incrementally from a continuous stream of (possibly large) data, while allowing for non-stationary distributions \citep{TiMeViSc16}.
% meier2014incremental
This is critical in real-time learning such as robot control \citep{ScAt10}.  While in these works the complexity of the local model must be chosen a priori (most often, locally linear models are used), our method allows for determining the optimal model complexity, which is adapted to the data.
%
%They usually take a global approach to learning the sysyem and   Discuss some local learning approaches in the context of dynamic systems; for
%example, the work by Stefan Schaal (a series of papers on locally weighted learning,
%locally weighted receptive field, etc.).  They argue about why it can be beneficial
%to do local learning in the context of dynamic systems + control.  What
%distinguishes us from them is that we reason about the complexity of the local model
%in a principled way, while they typically fix the local model order (often linear).

The proposed encoding scheme for dynamical systems was first considered in \citep{solowjowCDCsubm_arxiv},
% \footnote{Reference omitted for anonymous review. The paper is currently under review at a conference and, if accepted, will be properly cited for the potential final version of this paper.}, 
but in a different context from the one herein.  While in \citep{solowjowCDCsubm_arxiv},
the true dynamics are assumed \emph{known} and \licds is used to efficiently communicate state information between agents in a networked system, we consider encoding of dynamics models \emph{learned} from data.
%a setting where the true dynamics are unknown and to be learned from data.  
Moreover, the proposal of \licds for model selection (\sect \ref{subsec:info_modelselection}), and all theoretical (Theorems 1-3) and empirical results (\sect \ref{subsec:experiments_local_encoding} and \ref{subsec:experiment_modelselection}) are novel contributions.

\section{Proposed Local Encoding}
\label{sec:proposed_local_encodin_method}
In this section, we explain our proposed encoding scheme for dynamical systems 
\begin{equation}
\dot{x}(t) = f(x(t)), x(0) = x_0, t\in [0, T].
\label{eq:dynamical-system}
\end{equation}
% \fr{Make problem precise. Introduce $L$ on an abstract level as an approximator and argue that argmax(L) gives good encoding, while the value of $L$ can be used for model selection. }
%
% \fr{Rework UTM arguments. Collect UTM and AC arguments in one place, right now they are scattered with some repitition. See Sec. 4 }
We present our idea based on the concepts of algorithmic complexity~\citep{wallace2005statistical}, Universal Turing Machines (UTM)~\citep{turing1937computable}, and minimum message length (MML)~\citep{wallace1999minimum}. UTM is a programmable machine that receives a message as input and produces the desired output. Minimum length of the input message can be seen as the complexity of the output and is called algorithmic complexity (AC). The MML principle chooses a model for the observed data where the joint AC of the tuple (model, data) is minimized. The detailed prerequisite definitions are delegated to the supplementary material.
% \seb{I find the previous paragraph is pretty abstract, but I guess it is fine, because we want to make connections to these principles?!  However, if there are technical definitions that are not common or clear from context, these should be introduced here, not in the supplementary material.  The main paper should be understandable without the supplementary material.}\ar{I think the definitions are enough to understand the text.}

{\it General notion---} Our aim is to construct a brief and efficient \emph{explanation} for the observed data $x(t)$ from the model. The explanation is a message consisting of two parts. The first part encodes some general \emph{assertion} (theory) $\hat{\theta}$ about the source of data and the second part is the explanation for the data were the assertion is correct~\citep{wallace1999minimum}. Throughout this paper, we assume the data takes finite discrete values with certain precision. Hence, each data example can be encoded to a finite sequence of `0s' and `1s'. This is a reasonable assumption because, in practice, values are usually stored in a quantized way on digital computers, and we shall consider a finite horizon hereafter.
% \seb{I would rewrite: This is a reasonable assumption because, in practice, values are usually stored in a quantized way on digital computers, and we shall consider a finite horizon hereafter.}
% The first part of the message encodes a theory or belief $\hat{\theta}$ about the data generating process. The second part of the message encodes the data given that the theory $\hat{\theta}$ is correct. 

% \seb{Given that we use this Alice-story also later, I would include it already here and write: {\it Alice encoding information of the dynamical system---}}\ar{changed}
{\it Alice's encoding of a dynamical system---} Assume Alice is given a long sequence of observations $S=[x(1), x(2),\ldots,x(N)]$ to be transmitted to Bob over the phone. Alice thinks of a message $I=I_1.I_2$ consisting of two parts. The first part $I_1$ encodes her belief about the dynamical system   
% \seb{Please double-check that notaiton is consistent with intro; for example:  $\mathcal{S}=[x(1), x(2), \ldots]$, but also check other notation.}
% The sequence is long enough that communicating the sequence by saying its members one after another is not such a good idea.

% \seb{I would make the algorithms either t or b, not H. Also, the algorithms need to be refered to from the text, and should be place somewhere near where they are mentioned in the text.}\ar{I had to put them side by side to take less space. Setting it to t disorganized it. I could not find an easy solution}

\begin{adjustbox}{minipage=0.55\textwidth, left}
%\vspace{-1cm}
\begin{algorithm}[H]
\algsetup{linenosize=\small}
\small
\caption{\mieds: Computing the efficient code}
  \begin{algorithmic}[1]
   \renewcommand{\algorithmicrequire}{\textbf{Input:}}
   \renewcommand{\algorithmicensure}{\textbf{Output:}}
   \REQUIRE Dynamical function $f$, initial state $x_0 = \tilde{x}(0)$, \\
   global time horizon $T_{\rm global}$, maximum number of partitions $m_{\rm max}$, maximum number of expansion terms for each local model $k_{\rm max}$
   \ENSURE  Approximated states $\tilde{x}$, optimal total cost $L_{\rm total}^*$, optimal total complexity $k_{\rm total}^*$, optimal number of partitions $m^*$
   \STATE $x_{\rm exact}(t)\leftarrow \dot{x}=f(x(t))$: Observations
   \STATE $L_{\rm total}^*\leftarrow 0$: Optimal total cost
   \STATE $k^* \leftarrow 0$: Optimal total complexity
   \STATE $m^* \leftarrow 1$: Optimal number of partitions
   \STATE $T_{\rm local}\leftarrow T_{\rm global}/m$: Local time horizon
    \FOR{$m \in \{1,\dots,m_{\rm max}\}$}
    	\STATE Reset $L_{\rm total}$ to $0$
        \STATE Reset $k_{\rm total}$ to $0$
        \STATE Reset $\tilde{x}$ to $[x_0]$
      \FOR{$i \in \{1,\dots,m\}$}
      	\STATE $x_0\leftarrow x_{\rm exact}((i-1)\times T_{\rm local}+1))$ 
      	\STATE $[\tilde{x}_i, L_i^*,k_i^*] = {\rm LMS}(f,i,T_{\rm local},\lambda,x_0,k_{\rm max})$
        \STATE $\tilde{x}\leftarrow [\tilde{x}, \tilde{x}_i]$
        \STATE $L_{\rm total}\leftarrow L_{\rm total} + L^*_i$
        \STATE $k_{\rm total}\leftarrow k_{\rm total} + k^*_i$
      \ENDFOR
      \IF{$m$ = 1}
        \STATE $L_{\rm total}^* \leftarrow L_{\rm total}$
        \STATE $k_{\rm total}^* \leftarrow k_{\rm total}$
      \ELSIF{$m>1$ and $L_{\rm total}<L_{\rm total}^*$ } 
      \STATE $L_{\rm total}^* \leftarrow L_{\rm total}$
      \STATE $k_{\rm total}^* \leftarrow k_{\rm total}$
      \STATE $m^* \leftarrow m$
      \ENDIF          
    \ENDFOR
    \RETURN $[\tilde{x}, L_{\rm total}^*, k_{\rm total}^*, m^*]$
  \end{algorithmic}
  \label{alg:mieds}
\end{algorithm}
\end{adjustbox}
\begin{adjustbox}{minipage=0.4\textwidth, right}
\vspace{-15.25cm}
 \begin{algorithm}[H]
 \algsetup{linenosize=\small}
\small
 \caption{LMS: \small Local Model Selection}
 \begin{algorithmic}[1]
 \renewcommand{\algorithmicrequire}{\textbf{Input:}}
 \renewcommand{\algorithmicensure}{\textbf{Output:}}
 \REQUIRE Dynamical function $f$, index of local window $i$, local time horizon $T_{\rm local}$, relative weight $\lambda$, initial state of local time horizon $ x_0$, and the maximum allowed complexity $k_{\rm max}$.
 \ENSURE Local approximate of state trajectory $\tilde{x}$, optimal local cost $L^*_{\rm local}$, optimal local complexity $k^*_{\rm local}$
 \STATE $x(t)\leftarrow \dot{x}=f(x(t));\, x(0)=x_0$:  Observations
 \STATE $L_{\rm local}^*\leftarrow 0$: Optimal local cost
 \STATE $k_{\rm local}^*\leftarrow 0$: Optimal local complexity
 \STATE $t_{\rm start} \leftarrow (i-1) \times T_{\rm local}$
 \STATE $t_{\rm stop} \leftarrow i \times T_{\rm local}$
 \FOR {$k = 1, \ldots , k_{\rm max}$}
 \STATE $\tilde{x}(t)\leftarrow \dot{x}=\tilde{f}(x(t));\, x(0)=x_0$
 \STATE $L_k =  \lambda k + \int_{t_ {\rm start}}^{t_{\rm stop}} \| \tilde{x}(t) - x(t) \|{\it d}t$
 
 	\IF{$k=1$}
    \STATE $L_{\rm local}^*\leftarrow L_k$
 	\ELSIF{$k>1$ and $L_{k}<L^*_{\rm local}$}
    	\STATE $L^*_{\rm local} \leftarrow L_{k}$
    	\STATE $k^*_{\rm local} \leftarrow k$
    \ENDIF
 \ENDFOR
 \RETURN $[\tilde{x}, L^*_{\rm local}, k^*_{\rm local}]$
 \end{algorithmic} 
 \label{alg:partition}
 \end{algorithm}
\end{adjustbox}
that has generated the sequence, and the second part $I_2$ encodes the unexplained portion of the data by the assumed dynamical system. In this setting, Bob has a UTM that decodes $I$ and reconstructs the original sequence.  The first part of the message $I_1$ teaches Bob Alice's belief $f$ about the source dynamical system, and the second part $I_2$ teaches Bob how to recover the observations given this dynamical system.  Assume that Alice and Bob have agreed on a finite set of functions $\mathcal{H}$ from which the dynamics $f$ is chosen. Therefore, the first part of the message takes $\log |\mathcal{H}|$ bits to choose one member of this set. The second part of the message $I_2$ encodes the initial point $x(0)$, from which the dynamical system starts evolving. 

% \seb{I'm not following this point.  Before, you said that $I_2$ is the inaccuracy of the model.  To me, encoding model inaccuracy is different from encoding the initial state.  Or do you mean this in the sense that the state must be reset due to deviation in the previous predictions?  If yes, this might make sense, but needs to be explained.}
% \ar{This is the global approach. So I2 is the initial point. We have not talked about encoding inaccuracies.} \seb{Please check your text above.  You talk about inaccuracies: and the second part $I_2$ teaches Bob how to recover the observations given this possibly inaccurate dynamical system}\ar{I see. You're right. I eliminated "inaccurate"}
Again, we assume that state values are discrete, finite and chosen from alphabet set $\mathcal{X}$. 
This assumption is valid by assuming bounded value and finite precision for states. This requires $\log |\mathcal{X}|$ bits to encode the initial point. 
In total, the number of bits required to encode the sequence of observations can be seen as an Information Criterion for Dynamical Systems (\icds). For a deterministic dynamical system, having $(f, x(0))$ suffices to recover $x(t)$ for all $t>0$ (within the assumed precision). 
% \seb{That's a bit sketchy, because we have only finite precision in $x(0)$ and $f$, I don't think we can claim that we can perfectly reconstruct the original $x(t)$.  For furture work, we might want to consider using a difference equation instead of an ODE, but let's not change this now.  To fix this to some extend, I suggest to add here in parentheses: (within the assumed precision for $x$)} 
Therefore, \icds number of bits is sufficient information to recover the sequence $S$.

{\it Can Alice do better?---} The states of a dynamical system move along a certain trajectory in the state space depending on $f$ and $x(0)$. Therefore, we do not need to encode $f$ for its entire input space. 
% For example, if we observe the states of a dynamical system only around its stable equilibrium point, we can make sure that only local knowledge of $f$ is sufficient to predict the sequence of future states. 
If $f$ takes a simple shape around the working point, we can save many bits by encoding $f$ locally rather than globally. This idea results in Local Information Criterion for Dynamical Systems (\licds). 
% The global approach to encoding dynamical systems proposes that the input tape to the UTM be $I=I_1.I_2$ where $I_1$ encodes $f$ and $I_2$ encodes the initial point for deterministic systems or initial point along with the error of the predicted values by the model run from the initial point for stochastic systems. 
Assume the state space is \emph{adaptively} partitioned into $m$ pieces along the state trajectory and the complexity of the system within each partition is also adaptively chosen.
% \seb{I think it is important to clarify that this partitioning will later be computed and is adaptive; otherwise, this sounds like some sort of a gridding of the state space, which is uncool.} 
The input tape of the UTM is formed as a concatenation of several messages (instead of two as before), \ie as $I=I_1.I_2.\ldots.I_m$, where each tuple $(I_{2j-1}, I_{2j}), j\in{1,2,\ldots}$ corresponds to the local partition $j$ in the state space of the dynamical system of Eq.~\ref{eq:dynamical-system}.
In each tuple, $I_{2j-1}$ reprograms the UTM into the simulator of $j^{\rm th}$ local approximation to $f$ and decodes $I_{2j}$ to its corresponding initial point from the \emph{Observations}; It means that the local trajectory corresponding to each local model starts from a point belonging to the \emph{correct} trajectory to prevent propagating error from one local model to the next one.
Formally speaking, we look for a local representation of a function based on a finite set of basis functions 
\begin{equation}
\dot{x}(t) = f_{x^{*j}}( x^j(t) ) \quad \diffeo \quad
\dot{\tilde{x}}^j(t) = \tilde{f}^j( \tilde{x}^j(t) ) = \sum\nolimits_{i=0}^{k_{\rm max}} \alpha_i^j \phi_i^j(\tilde{x}^j,x^{*j}),
\label{eq:expansion}
\end{equation}
where $\tilde{f}^j$ is the local approximation to $f$ around $j^\text{th}$ working point $x^{*j}$. In other words, $\tilde{f}^j$ approximates the function $f$ in its $j^\text{th}$ local partition of the state space to which $x^{*j}$ belongs. The set $\{\phi_i^j\}$ is chosen from the hypothesis space $\Phi$ with cardinality $|\Phi|=k_{\rm max}$. The set $\Phi$ is chosen rich enough such that it contains basis functions that are able to approximate $f$ arbitrarily well as $k_{\rm max}\to \infty$. Different classes of basis functions can be used, \eg Taylor expansion, Fourier series, Legendre polynomials, etc~\citep{andrews1992special}. In this paper, we use Taylor expansion to showcase our points, but the concepts are generally applicable to other expansions as well. Let us next assume the coefficients are chosen from a finite discrete set $\mathcal{A}$. The coefficients are bounded because we approximate the dynamics function by a smooth function (\eg NN with tanh nonlinearity or GP) and the derivatives are bounded. In addition, the coefficient are continuous quantities, but we again assume they are represented by finite precision (as represented on a computer). Therefore, each local message $(I_{2j-1}, I_{2j})$ requires $N_j$ bits code as follows:
$N_j = |I_j| + |I_{j+1}| = k_{\rm max} \log |\mathcal{A}| + \log |\mathcal{X}|$.
%\begin{equation}
%N_j = |I_j| + |I_{j+1}| = k_{\rm max} \log |\mathcal{A}| + \log |\mathcal{X}|.
%\label{eq:abstract_local_mml}
%\end{equation}
On the other hand, if $f$ is encoded globally, we have $N_g = |I_1| + |I_2| = k_g \log |\mathcal{A}| + \log |\mathcal{X}|$ that encodes $f$ on its whole input domain. The idea of this section is that in many practical dynamical systems, $k_g$ needs to be much larger than $k_{\rm max}$ to give a good approximation to $f$ on its whole domain, which may result in $N_g>\sum_j N_j$ (see Fig.~\ref{fig:tanh_pendulum}). 

\subsection{Practical Algorithm}
\label{subsec:encoding}
In this section, we present a practical algorithm to implement the above-mentioned idea of encoding (the exposition of this subsection follows \citep{solowjowCDCsubm_arxiv}).
Taylor expansion is used as the method for local approximation to dynamics function as Eq.~\ref{eq:expansion}. \licds does not differentiate between whether the model is known ($f$) or is learned ($\hat{f}$) and considers both as the function to be locally approximated.  In this section, we simply write $f$ to refer to either one of them. The difference will however matter for model selection in \sect \ref{subsec:info_modelselection}.
% However, in model selection applications (sec.\ref{subsec:info_modelselection}), \licds is used to choose among a pool of learned models $\hat{f}$ one that matches better with the true function $f$ and generalizes better with no access to the underlying dynamics. for the time being, let's call the dynamics function $f$ either it is known or learned. 

% We propose the following cost function as a proxy to minimize the message length that describe $f$ around working point %% of the local approximation $f^j$. of the local system, we minimize the cost function
% \begin{equation}
% \min\limits_{k \in [1,\dots,k_{\rm max}]} \lambda k + \int_0^T \| \tilde{x}(t) - x(t) \|\Sh_{\Delta T}(t){\it d}t,
% \label{eq:mml_localmodelselection}
% \end{equation}

% where $\dot{\tilde{x}}(t) = \tilde{f}( \tilde{x}(t) ) = \sum_{i=0}^{k_{\rm max}} \alpha_i \phi(\tilde{x},x^*)$, $\dot{x}(t) = f(x(t))$. We consider $x$ as the exact value of states that we aim to approximate by $\hat{x}(t)$. 
% % When the true model $f$ is absent, these values are observations and $\Delta T$ is the time interval between observations.
% Hyper-parameter $\lambda$ adjusts the relative weight of complexity versus accuracy. The function $\Sh_{\Delta T}(t)$ is called Dirac comb and is defined as $\Sh_{\Delta T}(t)=\sum^{\infty}_{k=-\infty} \delta(t-k\Delta T)$. This means that the integration in Eq.~\ref{eq:mml_localmodelselection} is approximated by a sum whose summands are sampled with resolution $\Delta T$.

{\it Local time horizon---} Local approximation relies on partitioning the input space of the dynamics function $f$. Because $f$ governs a dynamcial system, partitioning $x$-space amounts to partitioning $t$-space. This means we divide the global time horizon $T_{\rm global}$ into $m$ local time horizons with length $T_{\rm local}=T_{\rm global}/m$ where $m$ is a hyper-parameter. The detailed cost function is then written as 
\begin{equation}
      k_i^* = \argmin\limits_{k_i \in [1,\dots,k_{\rm max}]} L_i(k_i)
      \quad \text{with} \quad
      L_i(k_i) = \lambda k_i + \int_{t^{\rm start}_i}^{t^{\rm stop}_i} \| \tilde{x}(t) - x(t) \|{\it d}t \label{eq:mml_detailed_modelselection}
\end{equation}
%
% Prev version:
%\begin{equation}
%  \begin{array}{ll}
%      k_i^* = \argmin\limits_{k_i \in [1,\dots,k_{\rm max}]} L_i(k_i)\\
%      L_i(k_i) = \lambda k_i + \int_{t^{\rm start}_i}^{t^{\rm stop}_i} \| \tilde{x}(t) - %x(t) \|{\it d}t \label{eq:mml_detailed_modelselection}
%  \end{array}
%\end{equation}
for each local time horizon delimited by $t^{\rm start}_i$ and $t^{\rm stop}_i$ and $t^{\rm stop}_i-t^{\rm start}_i = T_{\rm local}$ for all partitions. 
% \seb{You need to introduce all symbols. $\Sh_{\Delta T}$ is missing.  Check for others, and also for other equations (I'm not going through the math now...).}
% Note that $k_{\rm max}$ is the size of the finite set of basis functions which are allowed to be used for approximation. 
Finding the optimal local complexity is implemented by \alg~\ref{alg:partition}, which is used as a module of \licds in \alg  \ref{alg:mieds}. The total cost function is then written as $L_{\rm total}(m)=\sum_{i=1}^m L_i(k_i^*)$. The optimal number of partitions is found by 
\begin{equation}
	m^*=\argmin\limits_{m \in [1,\ldots,m_{\rm max}]} L_{\rm total}(m)
	\label{eq:optimal_num_partitions}
\end{equation}
where $m_{\rm max}$ is the maximum allowed number of partitions. The concise message of this section is that the minimum value of $L_{\rm total}$ usually occurs for $m>1$, which implies that the proposed method gives a better encoding compared with global encoding where $m=1$. Notice that $k_{\rm max}$ and $m_{\rm max}$ are hyper-parameters of the model, which are chosen by our prior idea about the complexity of the dynamics function (larger values for more complicated functions). We observed that reasonably high values for these hyper-parameters, e.g. $k_{\rm max}=8$ and $m_{\rm max}=5$ worked well for a variety of systems and benchmarks that we have considered in the paper and also in the supplementary document.
% \ar{TODO: explain more on these hyper-parameters}

{\it How to choose $\lambda$?}
The hyper-parameter $\lambda$ acts as a balancing weight between the complexity of local Taylor approximation and error in the prediction of states. It can also be interpreted from an information theoretic perspective: Assume the values of the coefficients of the Taylor expansion come from a Gaussian distribution, i.e., $a_j\sim \mathcal{N}(\mu, \sigma^2)$. In the optimal coding scheme, the number of bits required to encode the coefficients equals the Shannon entropy of the normal distribution, $\log (\sigma\sqrt{2\pi e})$. Thus, $\lambda$ is log-proportional to the variance of coefficients that is caused by the fluctuations of the dynamics functions. In the current version, we manually choose $\lambda$ such that two terms of Eq.~\ref{eq:mml_detailed_modelselection} are of the same order.

% This equation shows that $\lambda$ becomes larger as coefficients become more scattered that is the case for highly fluctuating dynamics function $f$. However, variance of the value of coefficients is not known in advance. So, we manually choose $\lambda$ according to our prior about $f$. The other idea is to set an initial value for $\lambda$ and compute the first few coefficients. Then update $\lambda$ by the variance of the available coefficients which are computed so far.

% The hyper-parameter $\lambda$ can then be factorized as $\lambda=\lambda' \log (\sigma\sqrt{2\pi e})$ where $\lambda'k\log(\sigma)$ is affected by how much scattered the values of coefficients are and term $\lambda' k \log(\sqrt{2\pi e})$ is a constant. Therefore, we can argue that $\lambda$ is proportionally related to the variance of the values of the local coefficients.
% \ar{If we can say that $\log (\sigma)$ is not too large, we can safely say that assuming $\lambda = \lambda' k$ is a valid argument.}. \ar{TODO: explain about the Gaussian assumption for the coefficients. Maybe we can argue that each coefficient is by sum of many factors when we project the function on the bases space. Then law of large numbers makes the Gaussian assumption not so bad.}

% By this sampling function, we simulate what practically happens in digital computers.

The proposed method is summarized in \alg~\ref{alg:mieds} and the schematic in Fig.~\ref{fig:cartoons}(a)
% with the message \seb{`with the message' sounds strange to me and I don't understand the connection; consider rewording and/or starting a new sentence here.} 
The general message is that a sequence of approximations $\tilde{f}^i(x) \approx f(x)|_{\Omega_{\rm local}^i}$ over the sequence of partitions $\bigcup_i {\Omega_{\rm local}^i} \subset \Omega$, leads to a better encoding, i.e., $L_{\rm total}(m>1)=\sum_{i=1}^{m} L_i(\tilde{f}_i(x)) < L_{m=1}(f)$ (see \sect 4.1 of supplementary document for an illustrative example).

\subsection{Theoretical Results}
In this section, we will prove that it is possible to control the error introduced by local approximations. We distinguish here between two objects, the states $x(.)$ and the dynamics $f(x(.))$, both as a function of time $t$. We rely heavily on the identity in Eq.~\ref{eq:dynamical-system}, which adds a lot of regularity to this problem. 
Therefore, we can derive statements of the type: if $f$ and $\hat{f}$ are close in some sense, then the state trajectories $x$ and $\hat{x}$ are close as well. And even better, the opposite is also true -- close states imply close dynamical functions.
This guarantees sufficiently accurate state prediction, while being able to reduce model complexity. Furthermore, we will elaborate later on the other direction in order to deploy \mieds as a model selection criterion.

First, we show a result that accurate local approximations imply precise state estimations. 
%For the proofs of this and the following theorems we refer the reader to the attached supplemantary material. 
The proof of this and all following theorems are given in the supplementary material. 
\begin{theo}
\label{thm:closeFunctionsToCloseX}
Consider Eq.~\ref{eq:dynamical-system} with $f$ Lipschitz-continuous on $[0,T]$.
%Assume we consider a system like in Eq.~\ref{eq:dynamical-system},
%where $f$ is a Lipschitz-continuous function on $[0,T]$. 
Furthermore, assume a Lipschitz-continuous approximation $\hat{f}$ is used to obtain state approximations $\hat{x}$. Then, for $\hat{x}(0) = x(0) = x_0$,
%we can bound the error between $x$ and $\hat{x}$ and show
\begin{equation}
    \| x(.) - \hat{x}(.) \|_{L^2([0,T])}^2 \leq T^2 \| f(x(.)) - \hat{f}(x(.)) \|_{L^2([0,T])}^2.
\end{equation}
In particular, this implies: if $\| f(x(.)) - \hat{f}(x(.)) \|_{L^2([0,T])}^2 \leq \epsilon$, then  $\| x(.) - \hat{x}(.) \|_{L^2([0,T])}^2 \leq T^2 \epsilon$.
\end{theo}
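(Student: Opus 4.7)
The plan is to exploit the fact that both trajectories are driven from the same initial condition $x_0$, so that integrating the ODE in Eq.~\ref{eq:dynamical-system} and its approximation and then subtracting wipes out the initial term and leaves the residual dynamics under a single integral sign. Reading the right-hand side of the claim, which features $\hat{f}(x(\cdot))$ rather than $\hat{f}(\hat{x}(\cdot))$, I would interpret $\hat{x}$ here as the state produced by integrating the surrogate vector field along the \emph{true} state trajectory, i.e., $\hat{x}(t) = x_0 + \int_0^t \hat{f}(x(s))\,ds$; this matches the algorithm's use of LMS, in which local approximations are instantiated from exact observations at the partition boundaries so that approximation error is not compounded across steps. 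Lipschitz continuity of $f$ guarantees existence and uniqueness of the true trajectory $x$, and Lipschitz continuity of $\hat{f}$ together with boundedness of $x$ on $[0,T]$ ensures that $\hat{f}(x(\cdot))$ is integrable, so the definition of $\hat{x}$ is well-posed.

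Once this setup is fixed, the first key step is the identity
\begin{equation*}
x(t) - \hat{x}(t) \;=\; \int_0^t \bigl[f(x(s)) - \hat{f}(x(s))\bigr]\,ds.
\end{equation*}
This is a pure algebraic rearrangement — no Grönwall argument is needed — and it is the reason the bound can be linear in the residual without an exponential blow-up.

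The second step is a pointwise application of the Cauchy--Schwarz inequality (viewing the integrand as a constant times an indicator). This yields
\begin{equation*}
\|x(t) - \hat{x}(t)\|^2 \;\leq\; t \int_0^t \bigl\|f(x(s)) - \hat{f}(x(s))\bigr\|^2 \,ds \;\leq\; T\,\|f(x(\cdot)) - \hat{f}(x(\cdot))\|_{L^2([0,T])}^{2}.
\end{equation*}
Integrating this pointwise inequality over $t \in [0,T]$ then contributes a second factor of $T$, producing the announced $T^2$ constant on the right-hand side. The ``in particular'' clause is then immediate by substituting the $\epsilon$ bound.

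The step I expect to require the most care is not any calculation but rather the interpretation issue mentioned above: if one instead took $\dot{\hat{x}} = \hat{f}(\hat{x})$, the cross term $\hat{f}(x) - \hat{f}(\hat{x})$ would force a Grönwall argument and an exponential factor $e^{L_{\hat{f}} T}$, destroying the clean $T^2$ bound. So the main ``obstacle'' is really justifying that the framing in the text, combined with the algorithmic practice of re-initializing each local model on the true trajectory (see Alg.~\ref{alg:partition} line 1 and Alg.~\ref{alg:mieds} line 11), corresponds precisely to the $\hat{f}$-along-$x$ interpretation under which the Cauchy--Schwarz route closes cleanly.
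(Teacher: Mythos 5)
Your proof is correct and follows essentially the same route as the paper's: the fundamental theorem of calculus applied to the difference of trajectories, a pointwise Cauchy--Schwarz bound contributing one factor of $T$, and integration over $[0,T]$ contributing the second. Your remark about interpreting $\hat{x}$ as the integral of $\hat{f}$ along the \emph{true} trajectory is exactly the identification the paper makes implicitly when it substitutes $z' = f(x(.)) - \hat{f}(x(.))$ for $z = x - \hat{x}$ without comment, so you have not diverged from the paper but rather made one of its tacit assumptions explicit.
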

Next, we show the opposite direction: close state trajectories imply close dynamical systems.
\begin{theo}
\label{thm:closeX}
Let the assumptions of Theorem~\ref{thm:closeFunctionsToCloseX} hold, and assume 
%We consider the same setting as in Theorem 1, but additionally assume that
$\sup\nolimits_{t \in (0,T)} x(t) = M < \infty$ and $\sup\nolimits_{t \in (0,T)} x'(t) \leq \tilde{M}$. 
%In particular, states are not allowed to oscillate arbitrary fast. 
Then, we have 
\begin{equation}
\| f(x(.)) - \hat{f}(x(.)) \|_{L^1([0,T])} \leq C \| x(.) - \hat{x}(.) \|_{L^1([0,T])} .
\end{equation}
This implies: if  $\| x(.) - \hat{x}(.) \|_{L^1([0,T])} \leq \delta$ then $\| f(x(.)) - \hat{f}(x(.)) \|_{L^1([0,T])} \leq C\delta$. 
\end{theo}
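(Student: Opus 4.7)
The natural first move is to add and subtract $\hat f(\hat x(t))$ and apply the triangle inequality: writing $e(t) := x(t) - \hat x(t)$,
\begin{equation}
|f(x(t)) - \hat f(x(t))| \leq |f(x(t)) - \hat f(\hat x(t))| + |\hat f(\hat x(t)) - \hat f(x(t))|.
\end{equation}
The first term on the right equals $|\dot x(t) - \dot{\hat x}(t)| = |\dot e(t)|$ by the two defining ODEs, and the second is bounded by $L_{\hat f}\,|e(t)|$ via Lipschitz continuity of $\hat f$. Integrating over $[0,T]$ yields
\begin{equation}
\|f(x(\cdot)) - \hat f(x(\cdot))\|_{L^1([0,T])} \leq \|\dot e\|_{L^1([0,T])} + L_{\hat f}\,\|e\|_{L^1([0,T])},
\end{equation}
so the remaining task is to bound $\|\dot e\|_{L^1}$ by a multiple of $\|e\|_{L^1}$.

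For that I would exploit second-derivative regularity implied by the hypotheses. Lipschitzness of $f$ together with $|\dot x|\leq\tilde M$ forces $\dot x = f(x)$ itself to be Lipschitz in $t$, so $\ddot x$ is essentially bounded; a Gronwall argument based on Theorem~\ref{thm:closeFunctionsToCloseX} confines $\hat x$ to a bounded neighborhood of $x$ on $[0,T]$, whence the same reasoning applied to $\hat f$ gives an analogous bound on $\ddot{\hat x}$. Consequently $\dot e$ is itself Lipschitz with some constant $B = B(L_f, L_{\hat f}, M, \tilde M)$.

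Given this regularity of $\dot e$, a short Taylor expansion gives the Kolmogorov--Landau-type pointwise estimate $\|\dot e\|_\infty \leq 2\sqrt{B\,\|e\|_\infty}$, while the elementary one-dimensional bound $\|e\|_\infty^2 \leq \tilde B\,\|e\|_{L^1}$ (obtained by observing that a function with slope at most $\tilde B$ cannot decay from its maximum any faster than linearly, yielding a triangular lower bound on $\int|e|$) closes the loop and allows me to control $\|\dot e\|_{L^1}$ by a constant times $\|e\|_{L^1}$. Combining with the first display gives the claimed inequality with a suitable $C = C(T, L_f, L_{\hat f}, M, \tilde M)$, from which the $C\delta$ form of the implication is immediate.

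The main obstacle is precisely this interpolation step: without the regularity supplied by the boundedness of $x$ and $\dot x$, no inequality of the form $\|\dot e\|_{L^1} \leq C\|e\|_{L^1}$ can hold, because rapidly oscillating error functions can drive $\|e\|_{L^1}$ to zero while keeping $\|\dot e\|_{L^1}$ bounded away from zero. The hypotheses of the theorem are exactly what excludes this pathology, and turning them into a quantitative Kolmogorov--Landau comparison between $e$ and $\dot e$ is the real substance of the proof; everything else is routine bookkeeping with the triangle inequality and Lipschitz constants.
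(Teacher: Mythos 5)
Your reduction is clean up to the point where everything hinges on an inequality of the form $\|\dot e\|_{L^1([0,T])} \leq C\,\|e\|_{L^1([0,T])}$, but this is exactly where the argument breaks: the Landau--Kolmogorov machinery you invoke cannot produce a \emph{linear} bound. Chaining your two estimates gives
\begin{equation}
\|\dot e\|_{L^1} \leq T\,\|\dot e\|_{\infty} \leq 2T\sqrt{B\,\|e\|_{\infty}} \leq 2T\sqrt{B}\,\bigl(\tilde B\,\|e\|_{L^1}\bigr)^{1/4},
\end{equation}
i.e.\ a fourth-root dependence on $\|e\|_{L^1}$, which for small errors is far weaker than the claimed $C\|e\|_{L^1}$. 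This is not a fixable bookkeeping issue: with only a uniform bound $\|\ddot e\|_\infty \leq B$, the function $e(t)=\epsilon\sin(t/\sqrt{\epsilon})$ has $\|\ddot e\|_\infty\leq 1$, $\|e\|_{L^1}\sim\epsilon$, and $\|\dot e\|_{L^1}\sim\sqrt{\epsilon}$, so the ratio $\|\dot e\|_{L^1}/\|e\|_{L^1}$ blows up as $\epsilon\to 0$. The sharp interpolation here is of Gagliardo--Nirenberg type, $\|\dot e\|_{L^1}^2\lesssim \|e\|_{L^1}\|\ddot e\|_{L^1}$, and the square-root loss is unavoidable; no constant depending only on $T$, $M$, $\tilde M$ and the Lipschitz constants can close your loop. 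Your own closing remark about oscillatory errors identifies the right enemy, but the hypotheses you list do not in fact exclude it at the quantitative level your decomposition requires.

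The paper's proof avoids this trap by never differentiating the error: it works directly with $\int_0^T |x'(t)|\,dt$ as the total variation of the trajectory, bounds the variation by a finite number $n_P$ of monotonicity intervals times $2\|\cdot\|_{L^\infty}$ (using the bounds on $x$ and $x'$ to argue $n_P<\infty$), and then converts $\|\cdot\|_{L^\infty}$ into $\|\cdot\|_{L^1}$ via the triangle-shaped worst case of its Lemma~1 --- applying the whole chain to $z=x-\hat x$ as in Theorem~1. The resulting constant $C=2n_PK$ absorbs the oscillation count explicitly, which is precisely the quantity your interpolation route leaves uncontrolled. If you want to salvage your decomposition, you would need an additional hypothesis bounding the number of sign changes (or the total variation) of $\dot e$, at which point you are essentially reconstructing the paper's bounded-variation argument.
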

\section{Experiments: Encoding Dynamical Systems}
\label{subsec:experiments_local_encoding}

\begin{figure}[t!]
	\centering
	\hspace{-0.3cm}
\subfigure[$m=1$]{
\includegraphics[width=0.23\linewidth]{./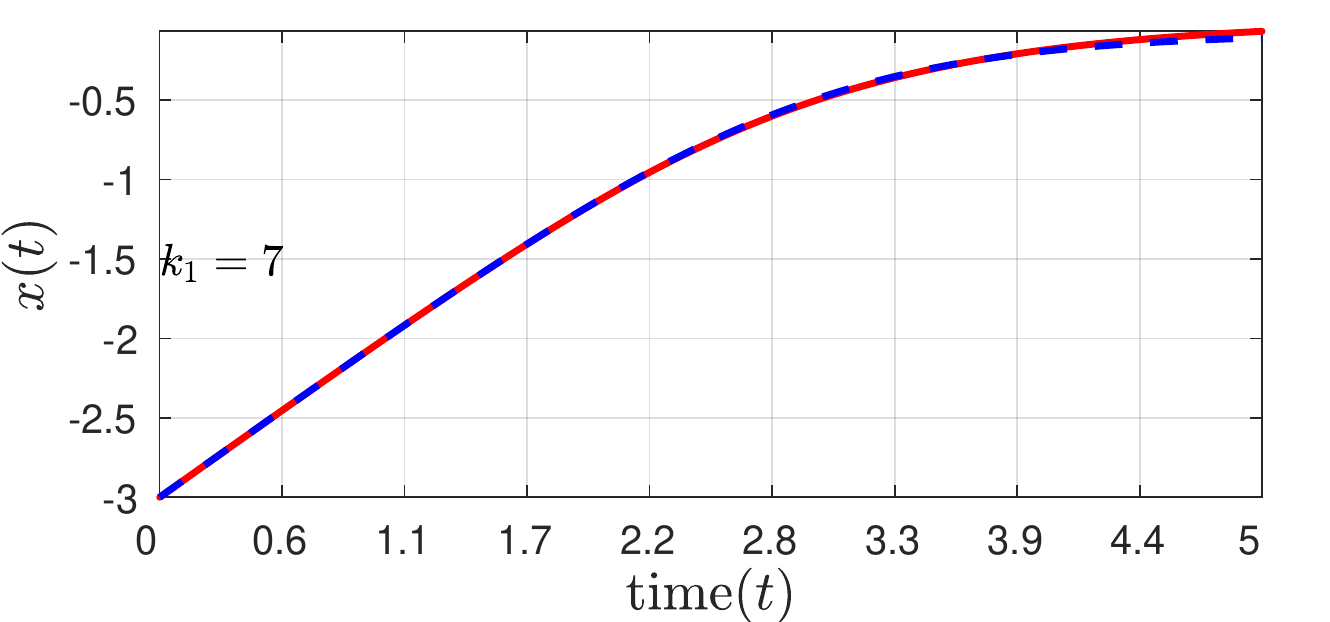}}
\subfigure[$m=2$]{
\includegraphics[width=0.23\linewidth]{./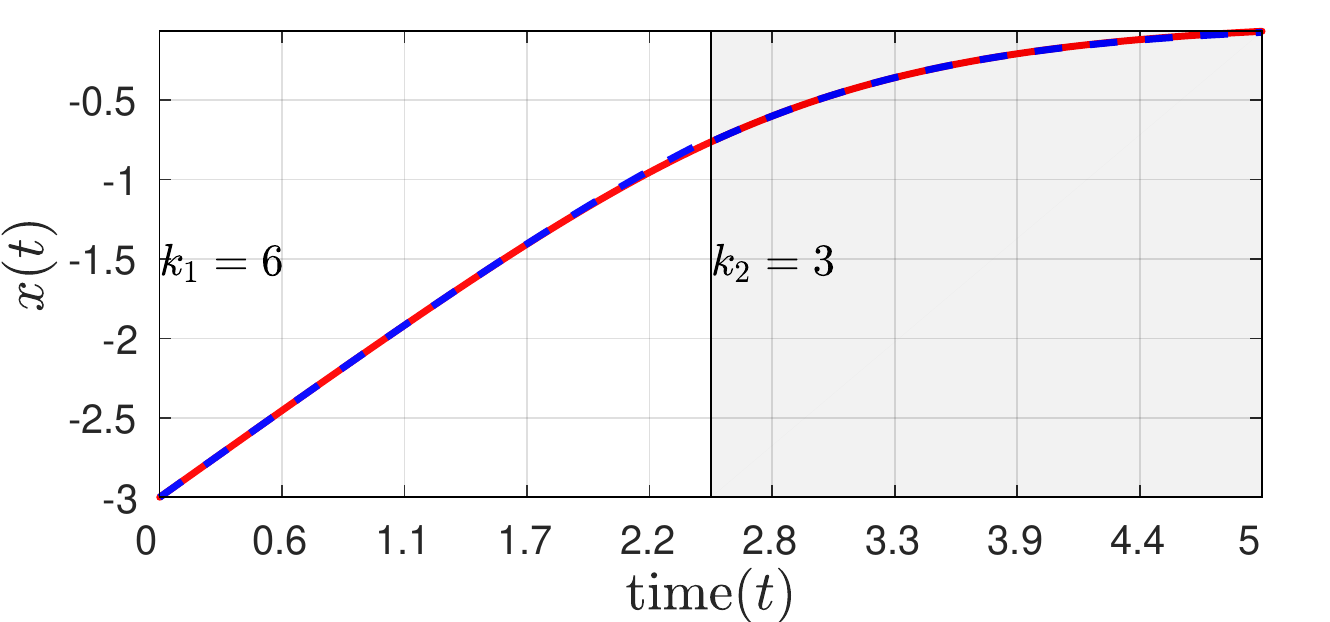}}
\subfigure[$m=4$]{
\includegraphics[width=0.23\linewidth]{./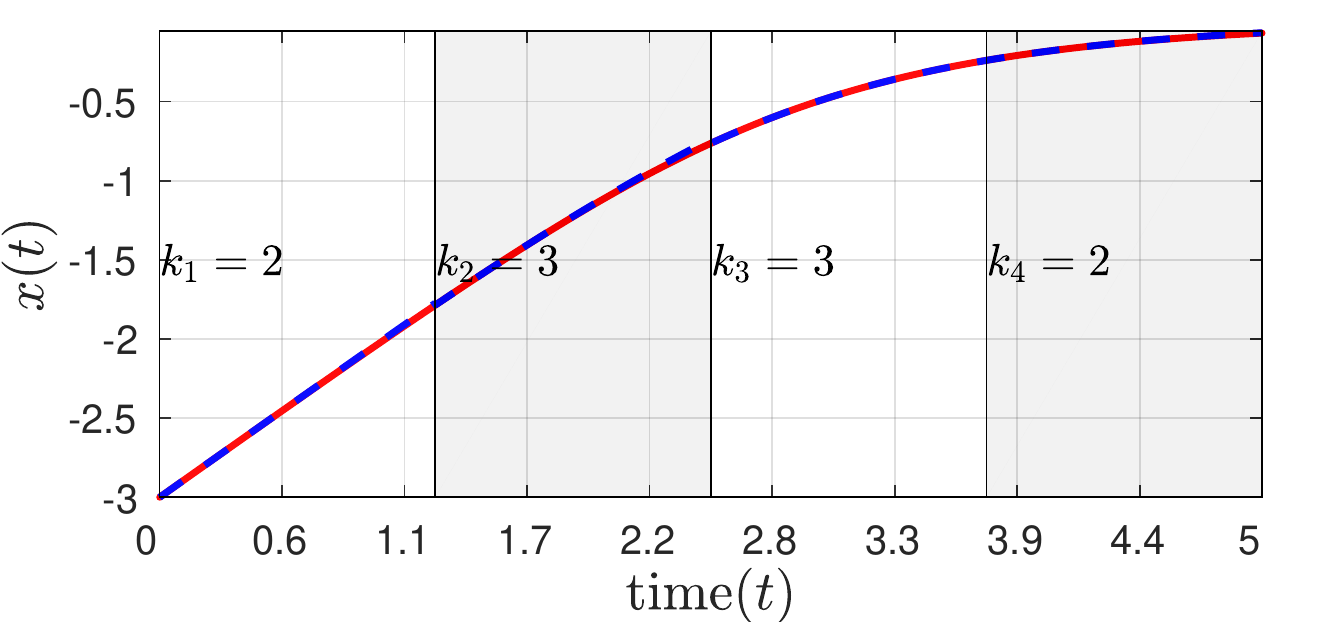}}
\subfigure[\licds score]{
\includegraphics[width=0.23\linewidth]{./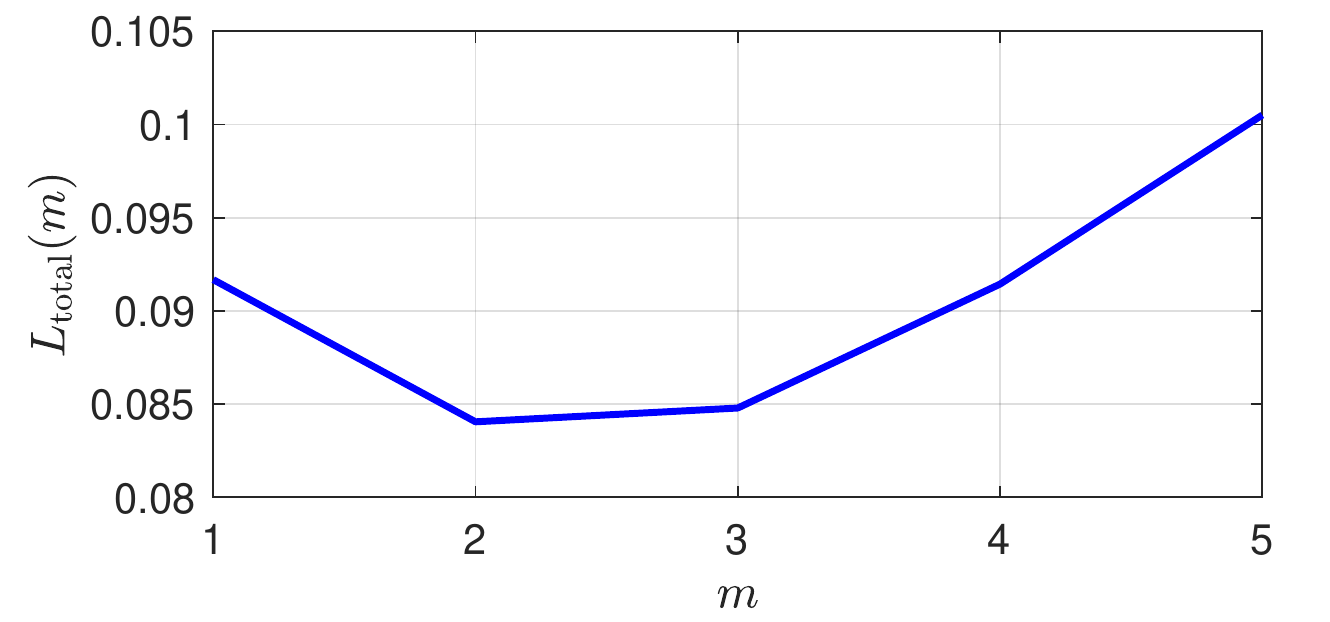}}\\
% \subfigure[4 sections]{
% \includegraphics[width=0.15\linewidth]{./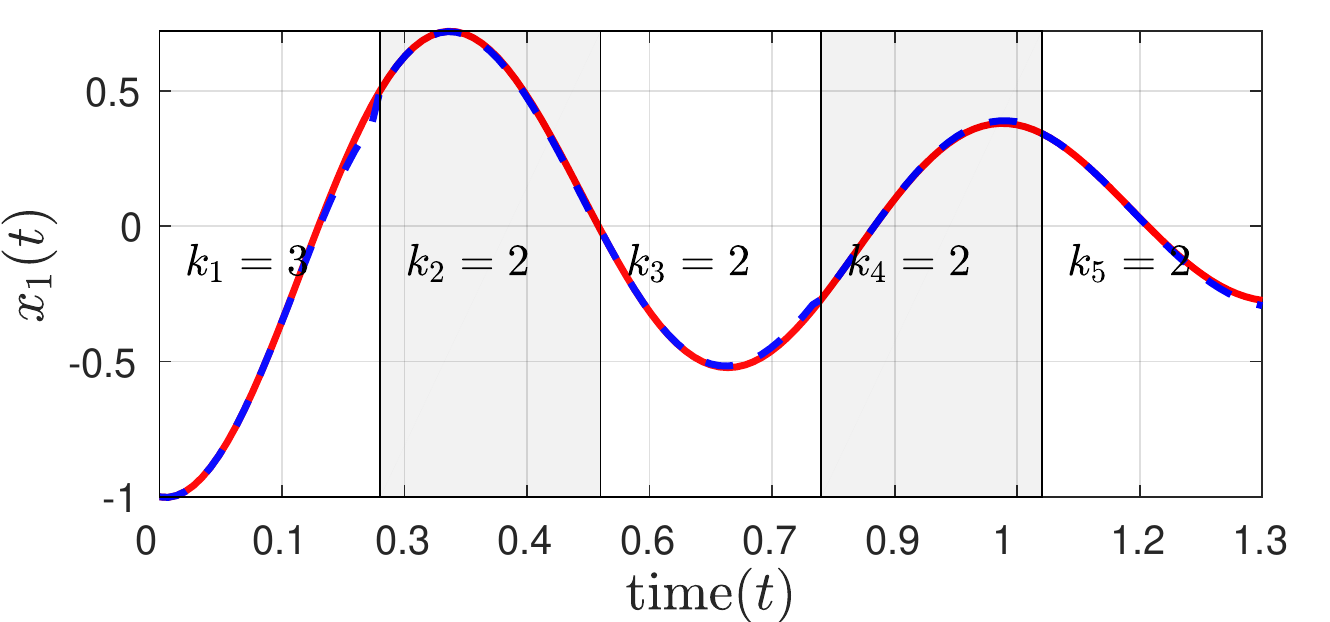}}\\
\subfigure[$m=1$]{
\includegraphics[width=0.23\linewidth]{./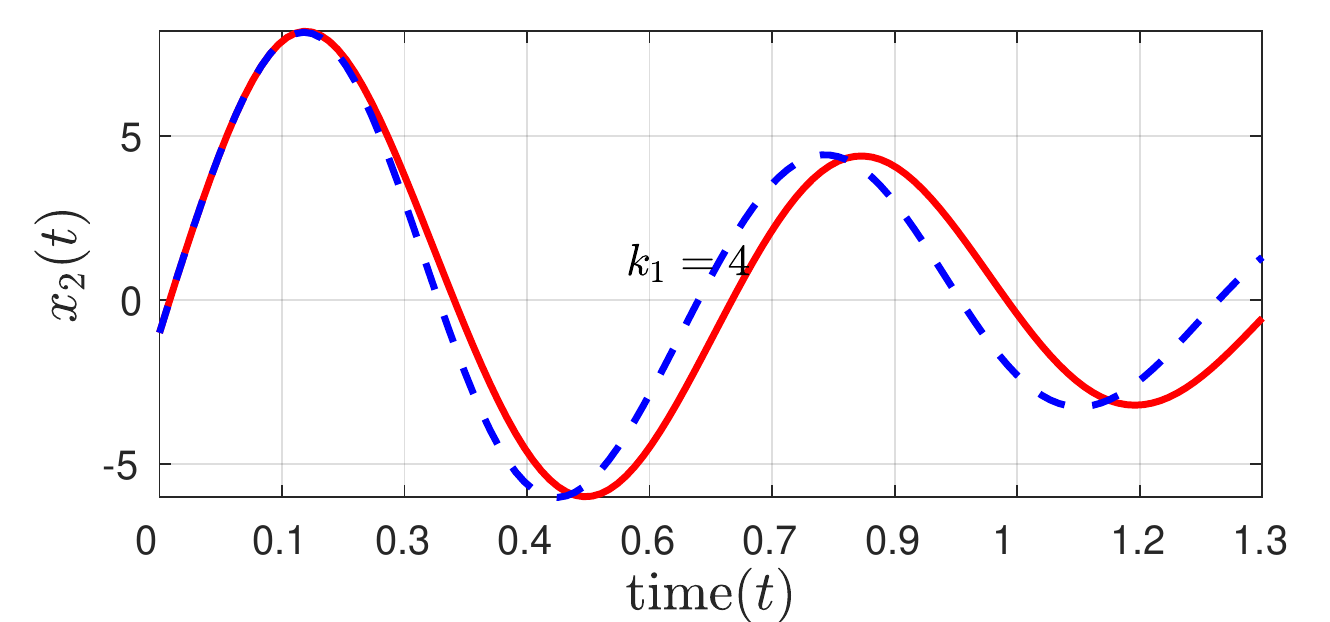}}
\subfigure[$m=3$]{
\includegraphics[width=0.23\linewidth]{./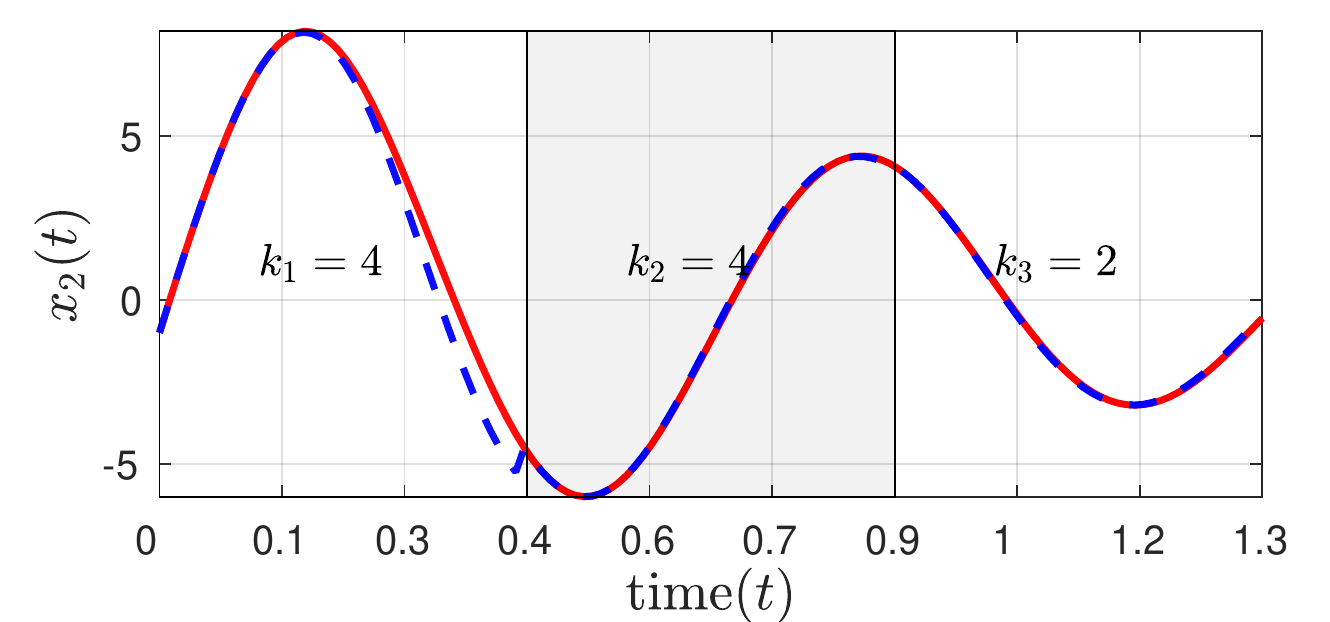}}
\subfigure[$m=4$]{
\includegraphics[width=0.23\linewidth]{./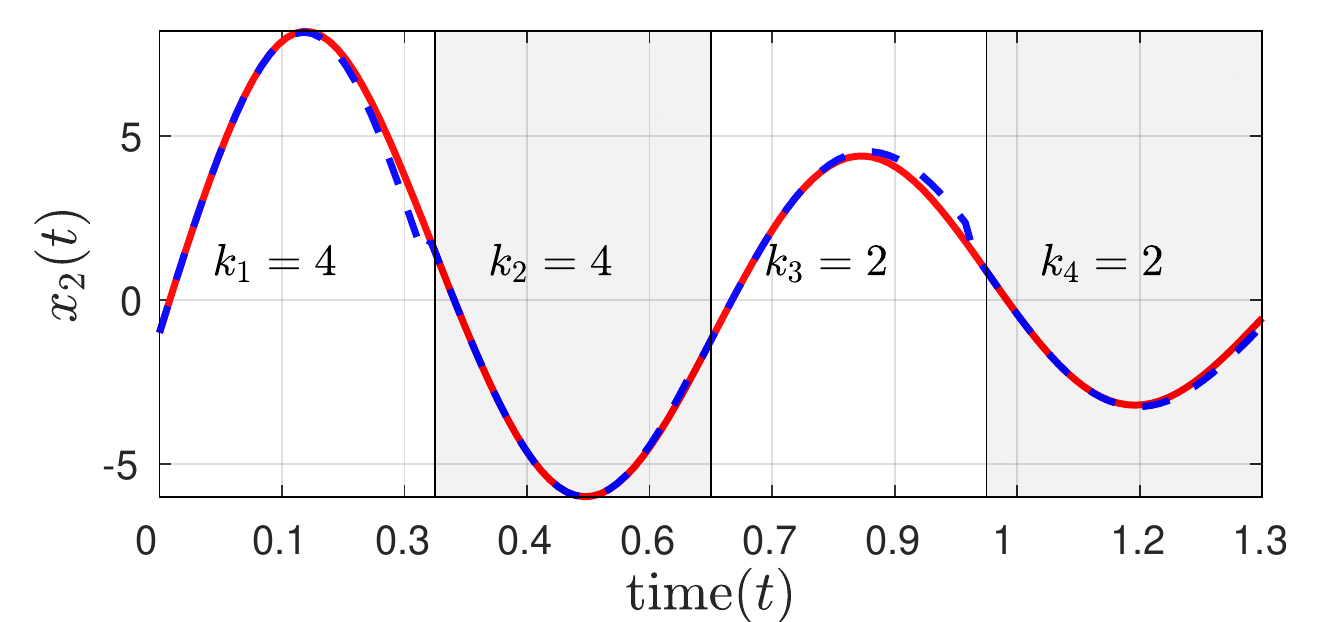}}
\subfigure[\licds score]{
\includegraphics[width=0.23\linewidth]{./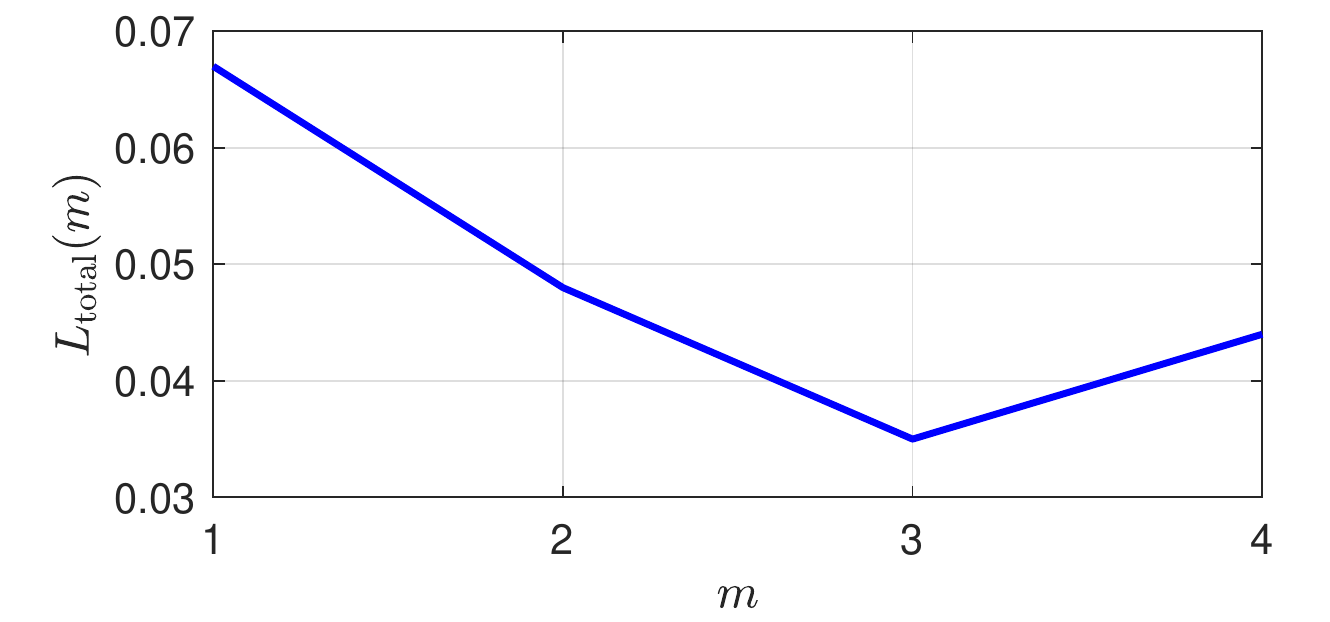}}
% \subfigure[4 sections]{
% \includegraphics[width=0.23\linewidth]{./figures_arxiv/nn_part_4_state_2_2d.pdf}}
% \subfigure[4 sections]{
% \includegraphics[width=0.15\linewidth]{./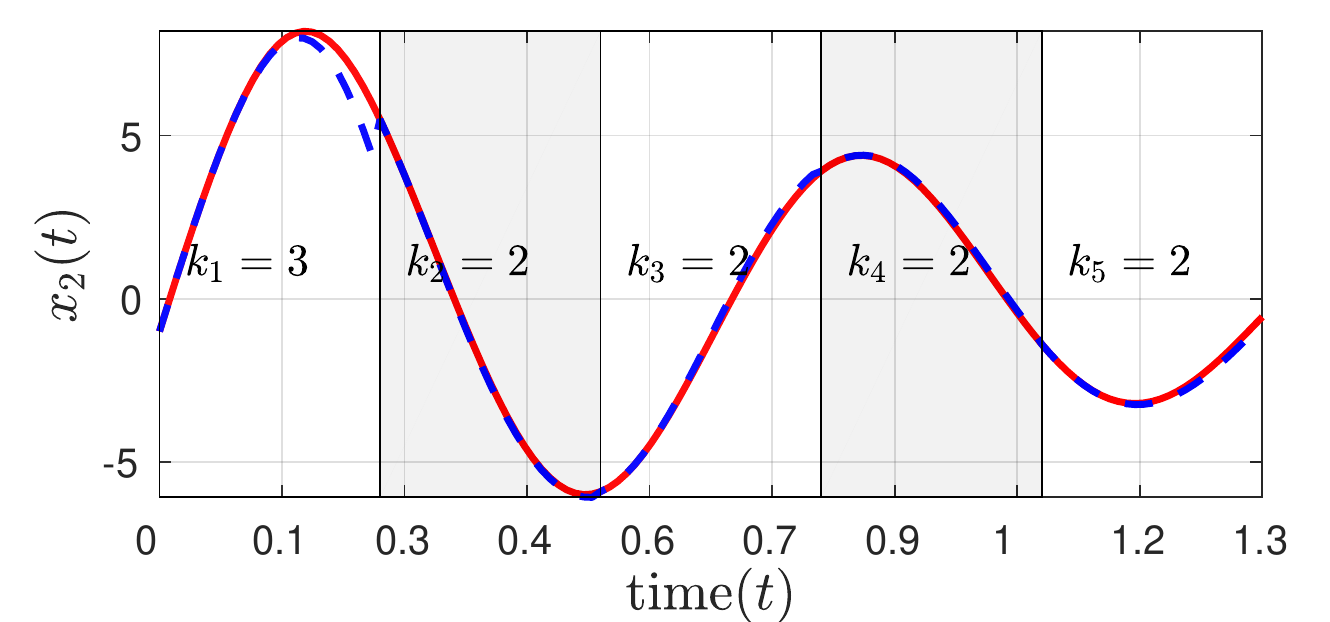}}
\caption{\small The accuracy of states for different number $m$ of local partitions and \licds score in the last column. (Top row) $-\tanh(x)$ system. (Bottom row) second state $x_2$ of the pendulum.}
% but \licds score is computed for both states together.}
\vspace{-0.2cm}
\label{fig:tanh_pendulum}
\end{figure}

In this section, we illustrate how the encoding scheme proposed in \sect \ref{sec:proposed_local_encodin_method} looks in practice. We elaborate in detail on the algorithm with the aid of two examples. More descriptive examples are in the supplementary material. 
We consider: 1) the one-dimensional system $\dot{x}(t)=-\tanh(x(t)) + 0.01\epsilon(t)$; and 2) a pendulum with two states $\dot{x}_1 = x_2+0.01\epsilon_1(t)$ and $\dot{x}_2 = -x_2 - 9.81{\rm sin}(x_1)+0.01\epsilon_2(t)$, where $\epsilon(t)$ is a standard white noise process. We use the Euler–Maruyama method \citep{schuss1988stochastic} to sample multiple trajectories from the systems and use those to learn the dynamics. In this example, we train a shallow NN as model $\hat{f}$ of the dynamics (details on the learning method are given in the supplementary material). The function $\hat{f}$ is now the input to the \licds algorithm, which computes a local approximation $\tilde{f}$. 
\fig \ref{fig:tanh_pendulum} depicts the functionality of the proposed method and highlights the local approximations in dependence of the number of partitions and the complexity order.
In particular, the last column of Fig.~\ref{fig:tanh_pendulum} shows that \licds prefers non-trivial solutions with $m>1$, which results in the simplest model that still gives accurate state. Similar experiments for a more sophisticated system (quadrotor) are presented in the supplementary material with similar conclusions.

\begin{figure}[tb]
	\centering
	\hspace{-0.3cm}
\subfigure[{NN = [1], $L=0.0091$}]{
\includegraphics[width=0.3\linewidth]{./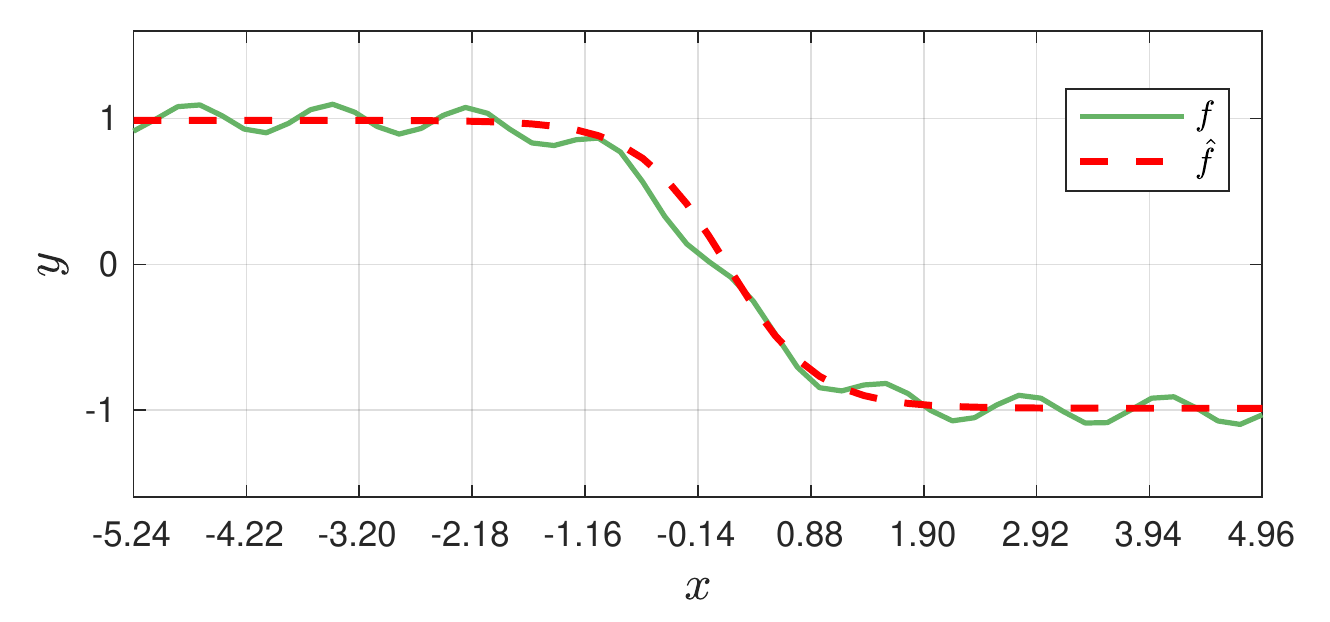}}
\subfigure[{NN = [5], $L=0.0182$}]{
\includegraphics[width=0.3\linewidth]{./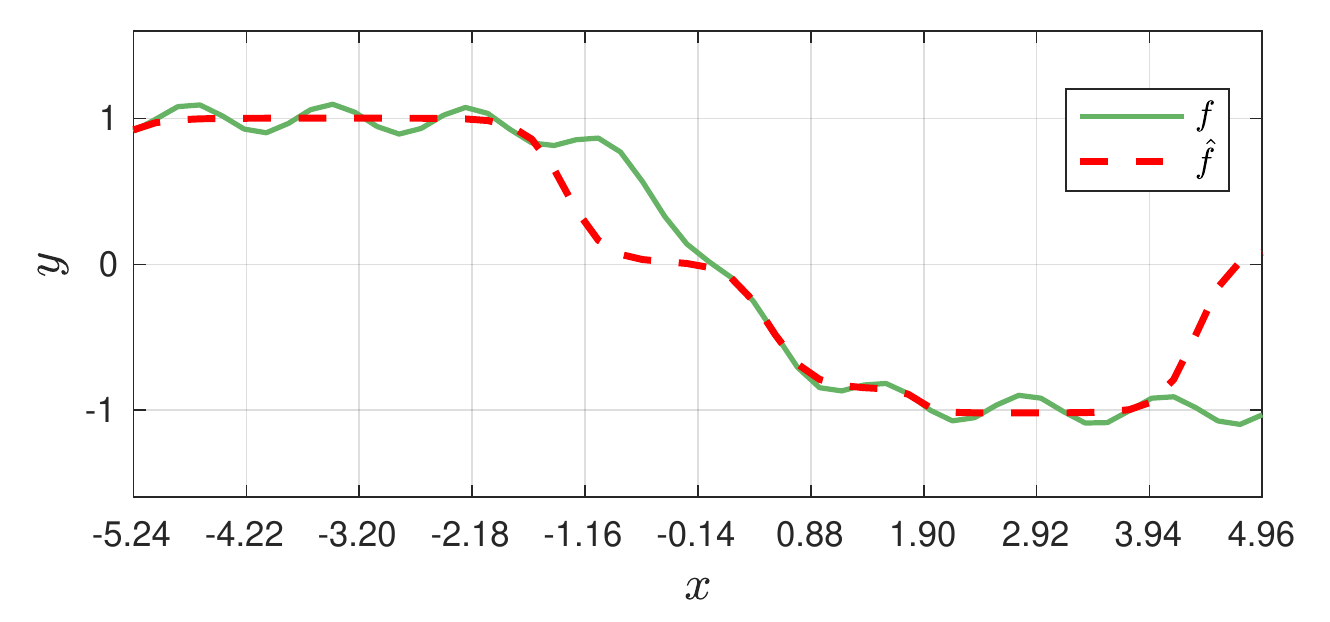}}
\subfigure[{NN = [10], $L=0.0088$}]{
\includegraphics[width=0.3\linewidth]{./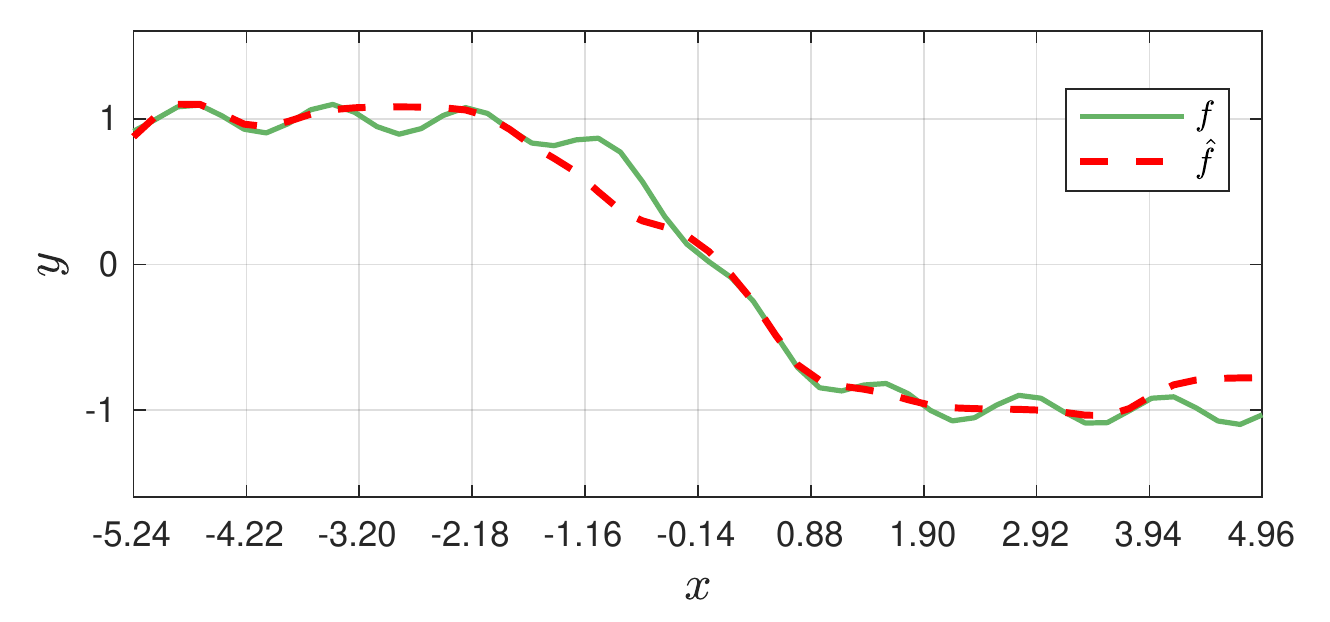}}\\
% \subfigure[{NN = [15], $L=0.0097$}]{
% \includegraphics[width=0.3\linewidth]{./figures_arxiv/f_true_learned_[15].pdf}}
% \subfigure[{NN = [50], $L=0.0101$}]{
% \includegraphics[width=0.3\linewidth]{./figures_arxiv/f_true_learned_[20].pdf}}
% \subfigure[{NN = [100], $L=0.0118$}]{
% \includegraphics[width=0.3\linewidth]{./figures_arxiv/f_true_learned_[50].pdf}}\\
\subfigure[{NN = [10, 5], $L=0.0067$}]{
\includegraphics[width=0.3\linewidth]{./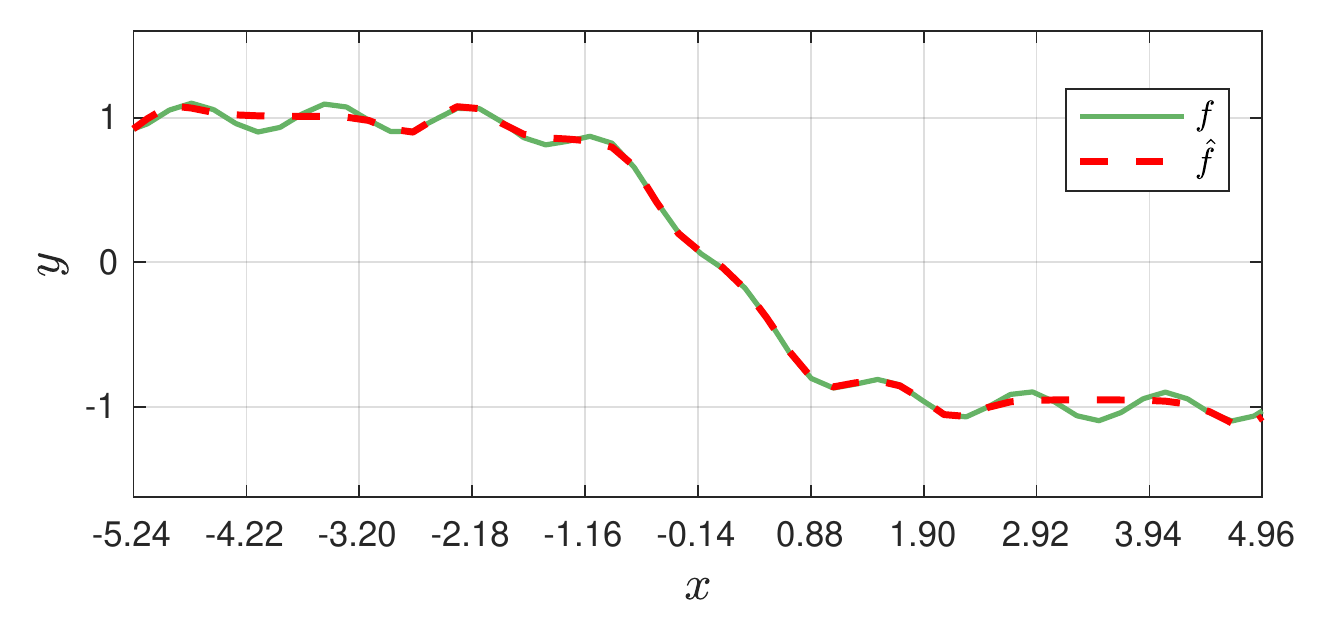}}
\subfigure[{NN = [10, 10], $L=0.0089$}]{
\includegraphics[width=0.3\linewidth]{./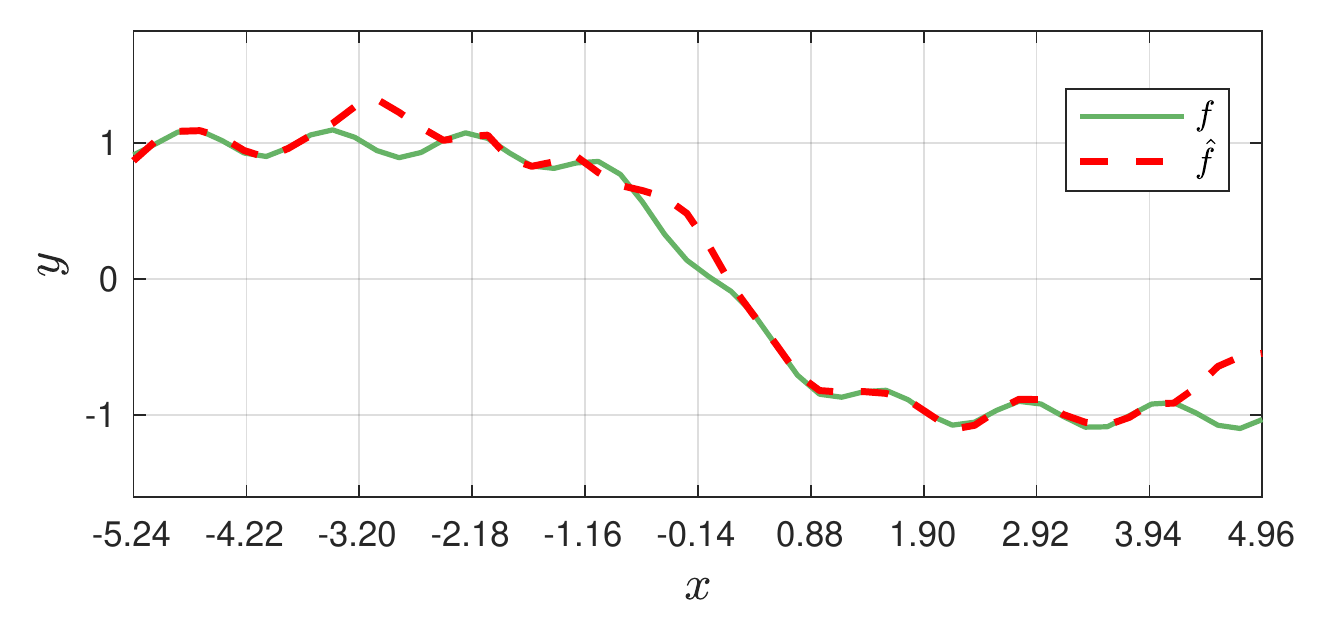}}
\subfigure[{NN = [15, 5], $L=0.0092$}]{
\includegraphics[width=0.3\linewidth]{./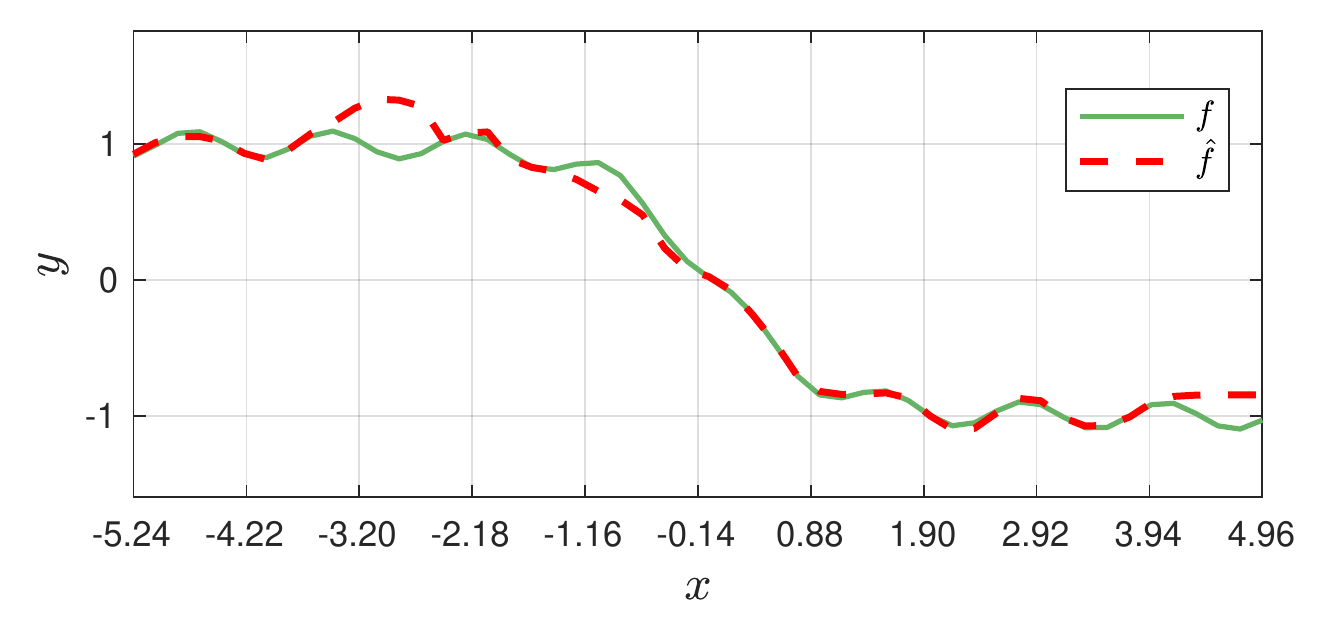}}
% \subfigure[4 sections]{
% \includegraphics[width=0.15\linewidth]{./figures_arxiv/nn_part_5_state_2_2d.pdf}}
\caption{\small Each figure shows the true function $f$ (solid) and the learned function $\hat{f}$ (dashed) by the neural network. The architecture of the used neural network is represented as the caption of each subfigure: [\#neurons of layer 1, \#neurons of layer 2, ...]. The \licds score for each model is also mentioned in each caption.}
\vspace{-0.2cm}
\label{fig:synthetic_model_selection}
\end{figure}

\section{Local Information Criterion for Model Selection}
\label{subsec:info_modelselection}
In this section, we extend our idea in order to regard \licds as a model selection criterion. 
From an abstract point of view, we can motivate our approach in terms of information compression and argue that simpler functions should be preferred when they explain data equally well. 
In addition to empirical findings, we support our claim with theoretical results, which give insight in the applicability of the proposed method.
The schematic in Fig.~\ref{fig:cartoons}(b) summarizes the key ideas of this section. 

Again, we consider the three objects $f$, $\hat{f}$, and $\tilde{f}$, which are, respectively, the true dynamical function, the output of an arbitrary learning algorithm, and the local encoding. 
Assume we are given a collection $\mathcal{S}_c$ of observed sequences $\{S_i\}$ all generated by the underlying dynamical system \eqref{eq:dynamical-system}.
%$\dot{x}(t)=f(x(t))$. 
Based on this dataset, we can deploy several different learning algorithms in order to obtain approximations $\hat{f}_1,\ldots,\hat{f}_n$. These approximations will most likely differ in their quality, which gives rise to the question, which of the learned functions should be selected.

Frequently used methods to learn dynamical functions are, for example, NNs and GPs (\cf `Related work').
%neural networks and Gaussian processes (GP).  ST: HAVE ALREADY INTRODUCED THESE
However, determining the depth of the NN and finding a suitable kernel function for the GP are non-trivial tasks. Ill-considered choices can lead to overfitting and bad performance and hence, should be discarded as soon as possible. 
For example, an over-parameterized NN may overfit to the training data and result in zero training error while being far from the correct dynamics function $f$. 

We propose a new way to compare learned functions, 
which is based on \licds and facilitates choosing among them. 
%based our encoding scheme, which should facilitate choosing among them. 
In particular, we claim that, for a certain class of functions, the function with the smallest $L$ is closest to the true dynamics. We quantify this statement in the following theorem.
\begin{theo}[Model Selection]
Let the assumptions of Theorems~\ref{thm:closeFunctionsToCloseX} and \ref{thm:closeX} hold, 
%If the previous assumptions hold, 
$L(\hat{f}_1) \leq L(\hat{f}_2)$ and 
\begin{equation}
    \| \hat{f}_1(x(.)) - \tilde{f}_1(x(.)) \|_{L^1([0,T])} \leq C \lambda k_1^*, 
    \label{eq:conditionMainThrm}
%\vspace{-1mm}
\end{equation}
%\vspace{-1.5mm}
then 
\begin{equation}
\| f(x(.)) - \hat{f}_1(\hat{x}_1(.)) \|_{L^1([0,T])} \leq E + \xi \| f(x(.)) - \hat{f}_2(\hat{x}_2(.)) \|_{L^2([0,T])},    
\end{equation}
where $E, \xi, C \geq 0$ are constants, which depend on certain properties of the dynamical systems.
\end{theo}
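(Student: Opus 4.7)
The plan is to bound $\| f(x(.)) - \hat{f}_1(\hat{x}_1(.)) \|_{L^1([0,T])}$ by inserting the local approximation $\tilde{f}_1$ as a bridge, then transfer the resulting bound to $\hat{f}_2$ via the hypothesis $L(\hat{f}_1) \leq L(\hat{f}_2)$, and finally convert back to a dynamics-error quantity involving $\hat{f}_2$ using Theorem~1. First I would apply the triangle inequality in the form
\begin{equation*}
\| f(x) - \hat{f}_1(\hat{x}_1) \|_{L^1} \leq \| f(x) - \tilde{f}_1(x) \|_{L^1} + \| \tilde{f}_1(x) - \hat{f}_1(x) \|_{L^1} + \| \hat{f}_1(x) - \hat{f}_1(\hat{x}_1) \|_{L^1}.
\end{equation*}
The middle term is exactly what hypothesis~\eqref{eq:conditionMainThrm} controls, so it is $\leq C\lambda k_1^*$. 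The first term I would bound by Theorem~\ref{thm:closeX} applied to the pair $(f,\tilde{f}_1)$, obtaining $C\|x(.)-\tilde{x}_1(.)\|_{L^1}$. For the third term, Lipschitz continuity of $\hat{f}_1$ gives $L_{\hat{f}_1}\|x(.)-\hat{x}_1(.)\|_{L^1}$, which, combined with Theorem~\ref{thm:closeFunctionsToCloseX} and Cauchy--Schwarz to pass from $L^2$ to $L^1$, can be reabsorbed into a constant times $\|f(x) - \hat{f}_1(x)\|_{L^1}$ and then, via the same split, into the two already-controlled quantities.

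Collecting terms, the right-hand side is a constant multiple of $\lambda k_1^* + \|x(.)-\tilde{x}_1(.)\|_{L^1}$, which is (up to the fixed weight) the LICDS cost $L(\hat{f}_1)$. The hypothesis $L(\hat{f}_1)\leq L(\hat{f}_2)$ then yields
\begin{equation*}
\| f(x) - \hat{f}_1(\hat{x}_1) \|_{L^1} \leq C_1 \bigl(\lambda k_2^* + \|x(.) - \tilde{x}_2(.)\|_{L^1}\bigr).
\end{equation*}
The constant term $\lambda k_2^*$ (and any fixed local-approximation residual for $\hat{f}_2$, e.g.\ $\|\hat{x}_2-\tilde{x}_2\|_{L^1}$) can be absorbed into the additive constant $E$.

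It then remains to relate $\|x(.) - \hat{x}_2(.)\|_{L^1}$ to $\|f(x(.)) - \hat{f}_2(\hat{x}_2(.))\|_{L^2}$. Here I would exploit the fundamental identity
\begin{equation*}
f(x(t)) - \hat{f}_2(\hat{x}_2(t)) = \dot{x}(t) - \dot{\hat{x}}_2(t) = \tfrac{d}{dt}(x(t)-\hat{x}_2(t)),
\end{equation*}
together with $x(0)=\hat{x}_2(0)$. Integrating and using the Cauchy--Schwarz inequality gives $\|x-\hat{x}_2\|_{L^\infty} \leq T^{1/2}\|f(x)-\hat{f}_2(\hat{x}_2)\|_{L^2}$ and hence $\|x - \hat{x}_2\|_{L^1} \leq \xi\,\|f(x(.)) - \hat{f}_2(\hat{x}_2(.))\|_{L^2}$ for an explicit constant $\xi$ depending only on $T$. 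This yields exactly the claimed form $E + \xi\,\|f(x(.))-\hat{f}_2(\hat{x}_2(.))\|_{L^2}$.

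The main obstacle I anticipate is careful bookkeeping of norm conversions (the hypothesis and Theorem~\ref{thm:closeX} live in $L^1$, Theorem~\ref{thm:closeFunctionsToCloseX} in $L^2$, and the final target mixes both), and identifying a clean way to absorb the state-approximation residual $\|\hat{x}_2 - \tilde{x}_2\|_{L^1}$ into the additive constant $E$ without imposing a counterpart of~\eqref{eq:conditionMainThrm} for $\hat{f}_2$. The Lipschitz constants $L_{\hat{f}_1}$, the constant $C$ from Theorem~\ref{thm:closeX}, the horizon $T$, and the bounds $M,\tilde{M}$ all enter $E$ and $\xi$; the argument goes through provided these are finite, which is exactly what the hypotheses of Theorems~\ref{thm:closeFunctionsToCloseX} and \ref{thm:closeX} guarantee.
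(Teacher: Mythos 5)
Your proof follows essentially the same skeleton as the paper's: insert $\tilde{f}_1$ by the triangle inequality, control the middle term by hypothesis~\eqref{eq:conditionMainThrm}, bound $\| f(x(.)) - \tilde{f}_1(x(.)) \|_{L^1}$ via Theorem~2, invoke $L(\hat{f}_1)\leq L(\hat{f}_2)$ to switch to $\lambda k_2^* + \|x-\tilde{x}_2\|_{L^1}$, and then convert state error back to dynamics error to produce the $L^2$ term on the right. Your final step is in fact cleaner than the paper's: by differentiating $x-\hat{x}_2$ directly you land on $\|f(x(.))-\hat{f}_2(\hat{x}_2(.))\|_{L^2}$, exactly the quantity in the statement, whereas the paper routes through Theorem~1 and a second triangle inequality and ends up with $\|f(x(.))-\hat{f}_2(x(.))\|_{L^2}$, i.e.\ the dynamics mismatch evaluated along the \emph{true} trajectory.

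The one genuine gap is your handling of the third term $\|\hat{f}_1(x(.)) - \hat{f}_1(\hat{x}_1(.))\|_{L^1}$. This term only arises because you read the left-hand side literally as $\hat{f}_1(\hat{x}_1(.))$; the paper's own proof silently works with $\hat{f}_1(x(.))$ throughout and never confronts it, so you are being more faithful to the statement than the authors. But your proposed chain does not close: after Lipschitz continuity gives $L_{\hat{f}_1}\|x-\hat{x}_1\|_{L^1}$ and Theorem~1 plus Cauchy--Schwarz gives a bound by $\|f(x(.))-\hat{f}_1(x(.))\|_{L^2}$, you cannot "reabsorb" this into a constant times $\|f(x(.))-\hat{f}_1(x(.))\|_{L^1}$ --- on a bounded interval Hölder bounds $L^1$ by $L^2$, not the reverse, so that step fails without an extra $L^\infty$-type control (e.g.\ the supplement's Lemma~1) or an interpolation argument. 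The clean repair is to stay in $L^1$: from $\tfrac{d}{dt}|x-\hat{x}_1|\leq |f(x)-\hat{f}_1(x)| + L_{\hat{f}_1}|x-\hat{x}_1|$ and Grönwall you get $\|x-\hat{x}_1\|_{L^1}\leq T e^{L_{\hat{f}_1}T}\|f(x(.))-\hat{f}_1(x(.))\|_{L^1}$, after which your "same split" into the two already-controlled quantities does go through. With that substitution your argument is complete and, modulo the evaluation-point bookkeeping, coincides with the paper's.
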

\begin{rem}
\label{remark:aboutTheorem}
The proposed algorithm works well for certain types of systems, which we confirm with empirical results.
However, the theorem does not guarantee that is works for all systems; clearly, for large $E$ and $\xi$, the above theorem is not very meaningful.
%we do not claim that it works for all kinds of dynamical systems and clearly
%for huge constants $E$ and $\xi$ the above theorem is not very meaningful. 
%
The condition in Eq.~\ref{eq:conditionMainThrm} is an interesting starting point to investigate the suitable class of functions, for which the theorem yields a meaningful bound.
%Condition Eq.~\ref{eq:conditionMainThrm} might be considered as technical detail and should hold on an intuitive level for some values of $\lambda$, but can also be seen as a starting point to investigate the properties and applicability of this algorithm more thoroughly in future work.
%
Finally, since Theorem 1 can also be stated in $L^1$ with slightly different assumptions \citep{acosta2004optimal}, it is also possible to derive a result similar to Theorem 3 purely in the $L^1$ norm. 
\end{rem}

The proposed method quantifies the model accuracy along a trajectory, which depends on the initial point $x(0)$. Since we are interested in obtaining results, which are representative in the whole domain of the training data $\mathcal{S}_c$, we propose to randomize $x(0)$ within $\mathcal{S}_c$ and average over the obtained results to make them meaningful for the whole domain.

\section{Experiments: Model Selection for Dynamical Systems}
\label{subsec:experiment_modelselection}
After discussing the capabilities of \licds as a model selection criterion, we also present empirical results in order to provide more evidence to our claims.
First, consider a dynamical system as in Eq.~\ref{eq:dynamical-system}, with $f(x) = \minus\tanh(x) + 0.1\sin(5x)$ and additive white noise $0.01\epsilon(t)$. This system is used to generate 10 noisy trajectories starting from randomly chosen initial points and 100 data points each are sampled with the aid of the Euler–Maruyama method \citep{schuss1988stochastic}. 
This results in a training set $S_c$ with total size of 1000 samples. In \fig \ref{fig:synthetic_model_selection}, the learned functions $\hat{f}^i_{\rm NN}$ are depicted together with the true function $f$. 
Figure~\ref{fig:synthetic_model_selection} clearly shows the connection between the $L$ score and the respective fit of the different NN architectures as the least score gives the best fit. 

Next, we consider dynamical systems of Eq.~\ref{eq:dynamical-system} for some benchmark problems.
%, which are motivated by physical systems. 
Similar benchmark problems are considered, for example, in \citep{doerr2017optimizing} and \citep{kroll2014benchmark}, which we used to shape the nonlinearities for the problems considered herein, which are summarized in Table \ref{tbl:model_selection}. After generating noisy data based on the dynamical system, we deploy several NNs with different depth and width and a GP to capture the behaviour of the system (details of the learning procedures in the supplementary material). 
%We refer to the supplementary material for more details on the specific learning procedure. 
The results in terms of the $L$ score and actual distance to the underlying function (in the $L^2$ norm sense) are shown in Table \ref{tbl:model_selection}. We emphasize that the \emph{best} learned function achieves the \emph{lowest} $L$ score.

%We do not claim that our algorithm improves any of these methods. 
The proposed method does not aim at improving any of the model learning methods.
Instead, we provide a structured way to post-process learned models and select the \emph{best} among several candidates. However, the presented ideas might be incorporated into improving also the training process.

\setlength\tabcolsep{1.5pt} % default value: 6pt

\begin{table}[b]
\tiny
\centering
\caption{\small Results for the model selection experiment. The proposed criterion \licds is applied to the dynamical functions $f$ shown in the first column. 
Columns 2--4 show results for learning NNs (NN=[\#number of hidden units of each layer]), and column 5 the GP.
%the following columns correspond to the particular architecture of the learning algorithm (NN=[\#number of hidden units of each layer]).
Per model, the two numbers are the \licds score (left) and the ground truth as the $L^2$ norm between the learned and the true function.
%We first state the corresponding \licds score and afterwards the ground truth in terms of the $L^2$ norm between the learned and the true function.
Best scores are highlighted in bold. 
%Last column corresponds to the dynamics function which is learned by GP. Each value is computed by averaging 10 runs of training\ar{Friedrich: could you check the caption to see if it is clear?}.
}
\label{my-label}
\resizebox{\columnwidth}{!}{%
\begin{tabular}{|c|c|c|c|c|c|c|c|c|}
\hline
\multirow{2}{*}{$\minus\tanh(x)$} & \multicolumn{2}{c|}{NN=[1]} & \multicolumn{2}{c|}{NN=[10]} & \multicolumn{2}{c|}{NN=[40]} & \multicolumn{2}{c|}{GP}\\ \cline{2-9} 
                   & $\mathbf{4.40\mathrm{e}-5}$            &     $\mathbf{5.84\mathrm{e}-5}$        &  $7.43\mathrm{e}-5$         &   $6.15\mathrm{e}-4$       &    $8.69\mathrm{e}-5$       &      $2.1\mathrm{e}-3$  &    $7.17\mathrm{e}-5$       &      $2.52\mathrm{e}-4$\\ \hline
\multirow{2}{*}{$\minus\tanh(x) + 0.5x$}  & \multicolumn{2}{c|}{NN=[1]}          & \multicolumn{2}{c|}{NN=[2]} & \multicolumn{2}{c|}{NN=[5]}& \multicolumn{2}{c|}{GP}\\ \cline{2-9} 
                   & $3.52\mathrm{e}-3$             &     $3.08\mathrm{e}0$          &    $2.53\mathrm{e}-3$        &    $4.46\mathrm{e}-1$       &    $\mathbf{6.20\mathrm{e}-4}$        &    $\mathbf{4.01\mathrm{e}-2}$ &    $9.01\mathrm{e}-4$       &      $2.16\mathrm{e}-1$\\ \hline
\multirow{2}{*}{$\minus x/(1+x^2)$}  & \multicolumn{2}{c|}{NN=[1]}          & \multicolumn{2}{c|}{NN=[2]} & \multicolumn{2}{c|}{NN=[30]}& \multicolumn{2}{c|}{GP}\\ \cline{2-9} 
                   &   $1.59\mathrm{e}-4$              &     $0.49\mathrm{e}-1$          &    $\mathbf{5.077\mathrm{e}-5}$        &    $\mathbf{1.26\mathrm{-4}}$       &      $7.74\mathrm{e}-5$      &      $3.03\mathrm{e}-4$     &    $1.66\mathrm{e}-4$       &      $3.63\mathrm{e}-3$\\ \hline
\multirow{2}{*}{$\minus\tanh(x)+0.5\sin(x)$}  & \multicolumn{2}{c|}{NN=[1]}          & \multicolumn{2}{c|}{NN=[2]} & \multicolumn{2}{c|}{NN=[5]}& \multicolumn{2}{c|}{GP}\\ \cline{2-9} 
                   &     $2.03\mathrm{e}-4$           &         $1.78\mathrm{e}0$       &       $1.99\mathrm{e}-4$     &     $1.13\mathrm{e}0$      &  $\mathbf{2.51\mathrm{e}-5}$          &   $\mathbf{3.01\mathrm{e}-1}$      &    $9.16\mathrm{e}-4$       &      $2.48\mathrm{e}0$\\ \hline
\end{tabular}
}
\label{tbl:model_selection}
\end{table}

\section{Discussion}
In this paper, we proposed \licds as a method to efficiently encode information of a dynamical system, which is either known or learned from a sequence of observations. We built the encoding scheme on top of the minimum message length principle and came up with a practical method to approximate the algorithmic complexity of dynamical systems by means of local approximations.  In addition to efficient encoding, we showed through experiments and theorems that the proposed encoding criterion can be used for model selection likewise.  By comparing \licds scores for different learned models (e.g., NN and GP), the model that is closer to the underlying dynamics can be selected.  For future work, we aim to apply \licds for efficient communication in networked multi-agent systems.  Also, we seek to characterize more precisely the class of dynamical systems, for which \licds is effective (\cf Remark \ref{remark:aboutTheorem}), and investigate extensions to stochastic systems. 
% \seb{I'm not sure what the following sencence means (I would omit, but feel free to reword and add if you prefer):
% For the future work, we would like to connect this idea to inference and check whether the best compressed model is the one that better captures the physical couplings of the system.}\ar{It was related to causality. Let's remove it for now.}

%Efficient encoding has numerous applications in multi agent systems when dynamics information must be exchanged among the agents. We also showed that the proposed method can be used as a criterion for model selection. We showed that computing \licds criterion for learned models can be used to choose one learned model over another. 

\section*{Acknowledgments}
This work was supported in part by the Max Planck Society, the Cyber Valley Initiative, and the German Research Foundation (DFG) grant TR 1433/1-1.

\bibliographystyle{unsrt}      
\bibliography{arxiv}
% \newpage

\newpage
\section*{Appendices}
\appendix 

\section{Algorithmic Complexity and Universal Turing Machine}
The concepts of algorithmic complexity and universal Turing machine were briefly introduced in paper and will be presented in more details here. Using the Alice-Bob scenario, we briefly review some necessary terms. The \emph{message} is a sequence of symbols chosen from a set of alphabets. Each symbol is encoded by a binary sub-sequence called \emph{word} that forms the entire message when the code for all symbols are pieced together. Shannon's information theory considers the message as a sequence of outcomes of a random process~\citep{shannon1948mathematical}. Assume the message can obtain its words from a set $\{a_i : i= 1,\ldots,N\}$ and the probability of word $a_i$ be $p_i$ for all $i$. The goal is to find a code that maps each word $a_i$ to a binary string with length $l_i$ such that the expected length of the string which is defined as 
\begin{equation}
\label{eq:expected_code_length}
E(l) = \sum_i p_i l_i
\end{equation}
is minimized. It can be proved that the optimal code in this sense will be obtained if $l_i=-\log p_i$ where this value is also known as Shannon's entropy. We consider the base of the logarithm 2 throughout this paper. The length $l_i$ of the code of a word $a_i$ can be taken as a measure of information content of word $a_i$ represented by ${\rm Info}(a_i)$. Nonetheless, the major limitation of Shannon's approach to information is its explicit dependence on probabilistic source of the message. Algorithmic Complexity(AC) is a different approach that removes this assumption and gives a more generic idea of information. To present the core idea of AC, some preliminary definitions are required which will be briefed in the following

{\it Universal Turing Machine--- }A Turing machine (TM) is a machine with 
\begin{enumerate}
\item A \emph{clock} that synchronizes all activities of the machine.
\item A finite set of internal states indexed by $\{1,2,\ldots,S\}$. The machine may change its state at the clock tick.
\item A binary \emph{work tape} which can be moved to the right or left and be updated by the machine.
\item A one-way binary \emph{input tape} which forms the input to the machine. The input tape cannot be moved backward.
\item A one-way binary \emph{output tape} that carries the machine's output.
\item An instruction list that determines the action of the machine at each clock tick depending on the current value of the input tape, work tape and the internal state of the machine. The action may include moving the input tape, updating and moving the work tape, updating and moving the output tape, or moving to a new internal state.
\end{enumerate}

Given a binary string $A$ representing some data or information, the amount of information, a.k.a Algorithmic Complexity (AC), in $A$ given a particular Turing Machine (TM), is the shortest input tape $I$ which will cause TM to output $A$, i.e., ${\rm AC}(A)=I$. It is obvious from this definition that the information content of a message $A$ depends on the chosen TM. The concept of Universal Turing Machine (UTM) comes as an assistance here. Apart from its detailed definition that can be looked up in~\citep{kolmogorov1965three}, a UTM has the interesting property of being programmable. Meaning that the input tape $I$ may consist of two concatenated parts $I=I_1.I_2$ such that $I_1$ pushes the the initial Turing machine TM\textsubscript{0} into a state from that state on, the UTM behaves as another Turing machine TM\textsubscript{1}. The second part of the input tape $I_2$ is then decoded by TM\textsubscript{1} rather than TM\textsubscript{0}. This capacity of UTM enables us to achieve a universal measure for complexity or information content. In the next section we discuss how information content of dynamical systems can be described in the framework of a UTM.

\section{Proofs of Theoretical Results}
We provide here the proofs to our theoretical results:
\begin{theo}
\label{thm:closeFunctionsToCloseX}
Consider Eq.~(1) with $f$ Lipschitz-continuous on $[0,T]$.
%Assume we consider a system like in Eq.~\ref{eq:dynamical-system},
%where $f$ is a Lipschitz-continuous function on $[0,T]$. 
Furthermore, assume a Lipschitz-continuous approximation $\hat{f}$ is used to obtain state approximations $\hat{x}$. Then, for $\hat{x}(0) = x(0) = x_0$,
%we can bound the error between $x$ and $\hat{x}$ and show
\begin{equation}
    \| x(.) - \hat{x}(.) \|_{L^2([0,T])}^2 \leq T^2 \| f(x(.)) - \hat{f}(x(.)) \|_{L^2([0,T])}^2.
\end{equation}
In particular, this implies: if $\| f(x(.)) - \hat{f}(x(.)) \|_{L^2([0,T])}^2 \leq \epsilon$, then  $\| x(.) - \hat{x}(.) \|_{L^2([0,T])}^2 \leq T^2 \epsilon$.
\end{theo}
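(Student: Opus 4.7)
The plan is to reduce the theorem to an application of the Cauchy–Schwarz inequality applied to the integral form of the dynamics. The key observation is that, under the natural reading of the algorithm in which each local trajectory is reinitialized at the true observation, the approximation satisfies $\dot{\hat{x}}(t) = \hat{f}(x(t))$ with the \emph{true} state $x(t)$ plugged into $\hat{f}$ (this is consistent with the $L^2$ norm on the right-hand side being $\|f(x(.))-\hat{f}(x(.))\|$ rather than $\|f(x(.))-\hat{f}(\hat{x}(.))\|$, and it is also what yields the clean $T^2$ factor with no Gronwall-style exponential).

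First I would rewrite both trajectories in integral form using the shared initial condition $x(0)=\hat{x}(0)=x_0$, obtaining
\begin{equation}
x(t)-\hat{x}(t)=\int_0^t\bigl(f(x(s))-\hat{f}(x(s))\bigr)\,ds.
\end{equation}
Lipschitz continuity of $f$ and $\hat{f}$ enters only to guarantee that both integrals are well-defined classical solutions on $[0,T]$; it is not needed quantitatively.

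Next I would apply the Cauchy–Schwarz inequality pointwise in $t$ with the constant function $1$, yielding
\begin{equation}
\bigl|x(t)-\hat{x}(t)\bigr|^2 \;\leq\; t\int_0^t\bigl|f(x(s))-\hat{f}(x(s))\bigr|^2\,ds \;\leq\; T\,\|f(x(.))-\hat{f}(x(.))\|_{L^2([0,T])}^2,
\end{equation}
where the second inequality extends the inner integration to the full interval $[0,T]$ and uses $t\leq T$.

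Finally, I would integrate this pointwise estimate over $t\in[0,T]$. Since the right-hand side no longer depends on $t$, integration simply multiplies by another factor of $T$, giving
\begin{equation}
\|x(.)-\hat{x}(.)\|_{L^2([0,T])}^2 \;\leq\; T^2\,\|f(x(.))-\hat{f}(x(.))\|_{L^2([0,T])}^2,
\end{equation}
which is the desired bound; the ``in particular'' claim is then immediate. There is no real obstacle here: the proof is essentially one application of Cauchy–Schwarz followed by Fubini/iterated integration. The only subtle point worth flagging is the interpretation of $\hat{x}$ that avoids needing Gronwall (and thus avoids a Lipschitz-constant-dependent exponential), so I would state this convention explicitly at the start of the proof.
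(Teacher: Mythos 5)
Your proof is correct and follows essentially the same route as the paper's: rewrite the difference $x-\hat{x}$ via the fundamental theorem of calculus, apply Cauchy--Schwarz against the constant function $1$, bound $t\leq T$, and integrate the now $t$-independent right-hand side to pick up the second factor of $T$. The only difference is cosmetic (the paper phrases the estimate as a Poincar\'e-type inequality for a generic $z$ with $z(0)=0$ and then substitutes $z=x-\hat{x}$), and your explicit flagging of the convention $\dot{\hat{x}}(t)=\hat{f}(x(t))$ makes precise a step the paper leaves implicit when it identifies $z'$ with $f(x(.))-\hat{f}(x(.))$.
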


\begin{proof}[Proof of Theorem 1]
We start the proof by showing that there exists a well defined solution $x(t)$ to the considered ODE, which is due to the Picard–Lindel\"{o}f theorem. 

Next, we show how to bound a function against its derivative, which is frequently done in Poincaré inequalities. Depending on the given assumptions, these results all look slightly different.
Here, we use $z \in C^1((0,T))$, $z(0) = 0$ and proof $\| z(.)\|_{L^2([0,T])}^2 \leq T^2 \| z'(.)\|_{L^2([0,T])}^2$.

We start with the fundamental theorem of calculus and obtain
\begin{equation}
z(t) = \int_0^t z'(s) \mathrm{d}s, ~ \forall t \in [0,T].
\end{equation}
Hence, we obtain for the absolute value
\begin{equation}
|z(t)| \leq \int_0^t |z'(s)| \mathrm{d}s.
\end{equation}
Now we assume a multiplicative one and apply the Cauchy-Schwarz inequality
\begin{equation}
\int_0^t |z'(s)| 1 \mathrm{d}s \leq \sqrt{\int_0^t |z'(s)|^2 \mathrm{d}s} \sqrt{\int_0^t 1 \mathrm{d}s}.
\end{equation}
Since $t\leq T$ and everything is nonnegative, we obtain
\begin{equation}
|z(t)| \leq \sqrt{T} \sqrt{\int_0^T |z'(s)|^2 \mathrm{d}s} .
\end{equation}
Taking the square and integrating does not change the inequality, since the right hand side is not dependent on $t$ anymore. This yields the final result
\begin{equation}
\int_0^T |z(s)|^2 \mathrm{d}s \leq T^2 \int_0^T |z'(s)|^2 \mathrm{d}s.
\end{equation}
Now we substitute $z(t) = x(t) - \hat{x}(t)$ and obtain
\begin{equation}
 \| x(.) - \hat{x}(.) \|_{L^2([0,T])}^2 \leq T^2 \| f(x(.)) - \hat{f}(x(.)) \|_{L^2([0,T])}^2 \leq T^2 \epsilon.
\end{equation}
\end{proof}

\begin{lem}
Assume $x(t) \in C^1((0,T))$ and $\sup\limits_{t \in (0,T)} |x'(t)|\leq \tilde{M}$ on the domain $[0,T]$. 
We can show
\begin{equation}
\|x(.)\|_{L^\infty([0,T])} \leq K \|x(.)\|_{L^1([0,T])}.
\end{equation}
\end{lem}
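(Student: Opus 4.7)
The plan is to convert the pointwise maximum $\|x\|_{L^\infty}$ into a lower bound on $\|x\|_{L^1}$ by exploiting the uniform bound on $x'$: if $|x|$ attains a large value at some point $t^{\ast}$, then Lipschitz continuity with constant $\tilde M$ forces $|x|$ to remain at least half of that value on a whole neighborhood of $t^{\ast}$, contributing guaranteed mass to the $L^1$ norm. This is a standard concentration argument for reversing norm embeddings when the derivative is controlled.

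\textbf{Main steps.} First I would use compactness of $[0,T]$ together with continuity of $x$ to fix $t^{\ast} \in [0,T]$ with $|x(t^{\ast})| = M := \|x\|_{L^\infty([0,T])}$, and assume $M > 0$ (the case $M=0$ is trivial). Next, integrating $x'$ (equivalently, applying the mean value theorem) gives
\begin{equation}
|x(t)| \;\geq\; M - \tilde M\,|t-t^{\ast}| \quad \text{for all } t \in [0,T],
\end{equation}
so $|x(t)| \geq M/2$ on the set $J := \{t \in [0,T] : |t-t^{\ast}| \leq M/(2\tilde M)\}$. I would then check that $|J| \geq \min\{M/(2\tilde M),\,T\}$: even if $t^{\ast}$ sits at an endpoint of $[0,T]$, a one-sided neighborhood of radius $M/(2\tilde M)$ still fits inside $[0,T]$ unless that radius already exceeds $T$, in which case $J = [0,T]$. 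Integrating the pointwise lower bound over $J$ yields
\begin{equation}
\|x\|_{L^1([0,T])} \;\geq\; \tfrac{M}{2}\,\min\bigl\{M/(2\tilde M),\,T\bigr\}.
\end{equation}

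\textbf{Inversion and main obstacle.} Inverting this inequality produces two regimes: in the small-$M$ regime $M \leq 2\tilde M T$ one obtains $M^2 \leq 4\tilde M\,\|x\|_{L^1}$, while in the large-$M$ regime $M > 2\tilde M T$ one obtains the linear bound $M \leq (2/T)\,\|x\|_{L^1}$. The main obstacle is interpretive rather than technical: the linear form $\|x\|_{L^\infty} \leq K\,\|x\|_{L^1}$ claimed in the lemma only holds with $K$ that is allowed to depend on $\tilde M$ and $T$, and in the small-$M$ regime further requires an a priori upper bound on $M$ so that the naturally quadratic estimate can be linearized. I would state this dependence explicitly in the final write-up; in the intended downstream use inside Theorem~2, the uniform state bound already assumed there supplies exactly such a ceiling, so that a constant $K = K(\tilde M, T, M)$ can be chosen uniformly.
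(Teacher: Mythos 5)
Your core computation is correct and is, in spirit, the same argument the paper makes: the derivative bound forces a value of size $M$ to persist on an interval of length of order $M/\tilde M$, so the extremal case is a triangular spike and $\|x\|_{L^1}$ is at least of order $M^2/\tilde M$. The paper simply asserts that ``the worst case is a triangle'' and reads off its area (which it moreover appears to miscompute: a spike of height $M$ and slope $\tilde M$ carries $L^1$ mass $M^2/\tilde M$, not $\tfrac{1}{2}\bigl(M^2/\tilde M^2 + M\bigr)$, the latter not even being dimensionally consistent). Your neighborhood-of-the-maximum argument is a rigorous version of the same idea, handles the case where the maximizer sits at an endpoint, and — importantly — makes explicit the issue the paper's write-up obscures: the natural estimate is quadratic, $M^2 \leq 4\tilde M\,\|x\|_{L^1}$, so no constant $K$ independent of $x$ can work. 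Indeed, the triangle $x_\epsilon$ of height $\epsilon$ and slope $\tilde M$ gives $\|x_\epsilon\|_{L^\infty}/\|x_\epsilon\|_{L^1} = \tilde M/\epsilon \to \infty$ as $\epsilon \to 0$, so the lemma as stated only holds with $K$ depending on the function through $M$.

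One correction to your closing paragraph: to pass from $M^2 \leq 4\tilde M\,\|x\|_{L^1}$ to $M \leq K\,\|x\|_{L^1}$ you need $K \geq 4\tilde M / M$, i.e.\ an a priori \emph{lower} bound on $M$ (equivalently, $K$ blows up as $M \to 0$), not an upper bound. The uniform state bound $\sup_t x(t) = M < \infty$ assumed in Theorem~2 is an upper bound and does not supply this ceiling; what makes the downstream use workable is only that one tolerates $K = K(M, \tilde M)$ for the particular trajectory at hand, which is also what the paper implicitly does when it lets $1/K$ depend on $M$.
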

\begin{proof}
Since we consider a bounded domain and the derivative is bounded we conclude that $\sup\limits_{t \in (0,T)}|x(t)| = M$ is bounded as well.
We proof the statement by considering the worst case scenario, which is a triangle for this case.
What essentially can happen is that the support of the function shrinks, while the maximum remains constant. However, by bounding the derivative we have 
control over the growth of the area beneath the function. Therefore, the extreme case is a triangle $\phi_\Delta$ with the maximal slope $\tilde{M}$ and peak point $M$. This yields
\begin{equation}
\|\phi_\Delta \|_{L^\infty} = M
\end{equation}
and 
\begin{equation}
\|\phi_\Delta \|_{L^1} = \frac{1}{2}\left(\frac{M^2}{\tilde{M}^2}+M \right).
\end{equation}
Hence for $\frac{1}{K}\leq \frac{1}{2}\left(\frac{M}{\tilde{M}^2}+1 \right)$ we obtain our claim.
\end{proof}

\begin{lem}
Assume the function $x(t)$ is monotonically increasing in $[0,T]$. Then the variation of $x(t)$ is given by 
\begin{equation}
V(x, [0,T]) = x(T) - x(0).
\end{equation}
\end{lem}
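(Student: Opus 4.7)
The plan is to invoke the definition of total variation directly and then exploit the telescoping sum that monotonicity hands us for free. This is an elementary classical result, so I do not expect any serious obstacle—the only subtle point is making sure the definition of $V(x,[0,T])$ used here matches the conventional one, since the excerpt does not spell it out.

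First I would recall that the total variation of $x$ on $[0,T]$ is
$V(x,[0,T]) = \sup_P \sum_{i=1}^n |x(t_i) - x(t_{i-1})|$,
where the supremum ranges over all finite partitions $P = \{0 = t_0 < t_1 < \cdots < t_n = T\}$ of $[0,T]$. Crucially, this definition does not require any smoothness of $x$—monotonicity alone will suffice for the conclusion.

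Next, I would use the monotonicity hypothesis to drop the absolute values. For any partition $P$ and any index $i$, monotonicity gives $x(t_{i-1}) \leq x(t_i)$, hence $|x(t_i) - x(t_{i-1})| = x(t_i) - x(t_{i-1})$. The resulting sum then telescopes:
$\sum_{i=1}^n \bigl(x(t_i) - x(t_{i-1})\bigr) = x(t_n) - x(t_0) = x(T) - x(0)$.
The key observation is that this value is completely independent of the partition chosen.

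Finally, since every partition produces the same sum $x(T) - x(0)$, the supremum over partitions is trivially this common value, yielding $V(x,[0,T]) = x(T) - x(0)$. The entire argument reduces to noting that monotonicity simultaneously removes the absolute values and makes every partition sum identical, so no approximation or refinement argument is ever required; this is why I expect the proof to occupy only a few lines in the paper.
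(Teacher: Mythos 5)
Your proof is correct and follows exactly the approach the paper indicates: the paper's own proof is a one-line remark that the result "follows immediately with a monotonicity and telescope sum argument," which is precisely the argument you have written out in full. Your version is simply a more explicit rendering of the same reasoning, correctly noting that monotonicity removes the absolute values and that every partition sum telescopes to the same value $x(T)-x(0)$, so the supremum is attained trivially.
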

\begin{proof}
The proof is straight forward and follows immediately with a monotonicity and telescope sum argument.
\end{proof}
% \begin{rem}
% In general, norms in $L^p(\mathbb{R}^n)$ function spaces are not equivalent and the above result holds due to the restrictive assumptions on the functions.
% \end{rem}

\begin{theo}
Let the assumptions of Theorem~\ref{thm:closeFunctionsToCloseX} hold.  
\newline
Additionally, assume 
%We consider the same setting as in Theorem 1, but additionally assume that
$\sup\nolimits_{t \in (0,T)} x(t) = M < \infty$ and $\sup\nolimits_{t \in (0,T)} x'(t) \leq \tilde{M}$. 
%In particular, states are not allowed to oscillate arbitrary fast. 
Then, we have 
\begin{equation}
\| f(x(.)) - \hat{f}(x(.)) \|_{L^1([0,T])} \leq C \| x(.) - \hat{x}(.) \|_{L^1([0,T])} .
\end{equation}
This implies: if  $\| x(.) - \hat{x}(.) \|_{L^1([0,T])} \leq \delta$ then $\| f(x(.)) - \hat{f}(x(.)) \|_{L^1([0,T])} \leq C\delta$. 
\end{theo}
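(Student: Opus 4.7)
The plan is to exploit the ODE identity $\dot{x}(t) - \dot{\hat{x}}(t) = f(x(t)) - \hat{f}(\hat{x}(t))$ in order to convert the difference of dynamics $f(x(.)) - \hat{f}(x(.))$ into a derivative term on $z := x - \hat{x}$ plus a term controlled by the Lipschitz constant of $\hat{f}$, and then to bound the total variation of $z$ by chaining the two lemmas stated just before the theorem.

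First, I would add and subtract $\hat{f}(\hat{x}(t))$ and use $f(x(t)) - \hat{f}(\hat{x}(t)) = z'(t)$ to obtain the pointwise estimate
\begin{equation*}
|f(x(t)) - \hat{f}(x(t))| \;\leq\; |z'(t)| + L_{\hat{f}} \, |z(t)|,
\end{equation*}
where $L_{\hat{f}}$ is the Lipschitz constant of $\hat{f}$. Integrating over $[0,T]$ yields $\|f(x(.)) - \hat{f}(x(.))\|_{L^1} \leq \|z'\|_{L^1} + L_{\hat{f}}\|z\|_{L^1}$, so it only remains to bound $\|z'\|_{L^1}$ by $\|z\|_{L^1}$. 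I would then decompose $[0,T]$ into a finite collection of intervals $\{[a_j,b_j]\}_{j=1}^{N}$ on which $z$ is monotone, apply Lemma 2 on each piece to rewrite the variation as $|z(b_j)-z(a_j)|$, and use the crude bound $|z(b_j)-z(a_j)| \leq 2\|z\|_{L^\infty}$ to conclude $\|z'\|_{L^1} \leq 2N\|z\|_{L^\infty}$. The hypotheses $\sup x \leq M$ and $\sup x' \leq \tilde{M}$, combined with Lipschitz continuity of $\hat{f}$ and a Gronwall bound on $\hat{x}$, make both $z$ and $z'$ uniformly bounded on $[0,T]$, so Lemma 1 applies and yields $\|z\|_{L^\infty} \leq K\|z\|_{L^1}$ with $K$ depending only on those bounds. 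Chaining the estimates produces $\|f(x(.)) - \hat{f}(x(.))\|_{L^1} \leq (2NK + L_{\hat{f}})\|z\|_{L^1} =: C\|z\|_{L^1}$, which is the claimed inequality.

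The main obstacle is controlling $N$, the number of monotone pieces of $z$. Lipschitz continuity of $f$ and $\hat{f}$ bounds $|z'|$ only pointwise; it does not limit the number of sign changes of $z'$, so in principle $z$ could oscillate arbitrarily often inside its $L^\infty$ envelope and inflate $N$ without affecting $\|z\|_{L^1}$. The inequality therefore seems to rest on an implicit structural hypothesis on the dynamics -- for instance real-analyticity of $f - \hat{f}$ along the trajectory, or an \emph{a priori} uniform bound on the number of crossings of $x$ and $\hat{x}$ -- under which $N$ becomes part of the constant $C$; without such structure, one should read the theorem as conditional on a uniform $N$-bound within the considered function class.
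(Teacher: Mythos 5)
Your argument follows essentially the same route as the paper's proof: identify the $L^1$ norm of the dynamics mismatch with (a perturbation of) the total variation of $z = x - \hat{x}$, cut $[0,T]$ into monotone pieces so that Lemma~2 applies, bound each piece by $2\|z\|_{L^\infty}$, and close with Lemma~1 to return to the $L^1$ norm. Two points are worth recording. First, your pointwise splitting $|f(x(t))-\hat{f}(x(t))| \le |z'(t)| + L_{\hat{f}}\,|z(t)|$ is actually more careful than the paper's version, which silently treats $z'(t) = f(x(t)) - \hat{f}(\hat{x}(t))$ as if it were $f(x(t)) - \hat{f}(x(t))$; the extra $L_{\hat{f}}\|z\|_{L^1}$ term you carry is harmless and absorbs into $C$. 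Second, the obstacle you flag --- that nothing in the hypotheses controls the number $N$ of monotone pieces of $z$ --- is a genuine gap, and it is present in the paper's own proof as well: there it appears as the claim that $\sup_{t}|x'(t)| < \infty$ ensures a finite number of oscillations and hence a finite optimal grid size $n_P$. That claim is false in general (for instance $t^2\sin(1/t)$ has bounded derivative on $[0,1]$ yet infinitely many monotone pieces), so the constant $C = 2 n_P K$ is well defined only under an additional structural assumption of exactly the kind you describe (a uniform bound on the number of sign changes of $z'$, or analyticity of $f-\hat{f}$ along the trajectory). Your reading of the theorem as conditional on such a bound is the correct one, and identifying this explicitly is a genuine improvement over the paper's exposition.
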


\begin{proof}[Proof of Theorem 2]
We use bounded variation type arguments here. In particular, we start again with
\begin{equation}
\int\limits_0^T |f(x(t))|\mathrm{d}t = \int\limits_0^T |x'(t)|\mathrm{d}t.
\end{equation}
It is well known fact in analysis that the quantity $\int\limits_0^T |x'(t)|\mathrm{d}t$ can be used to compute the total variation of a smooth function. We will use an equivalent approach to quantify the total variation and use this to bound the derivative with the states. We use
\begin{equation}
\int\limits_0^T |x'(t)|\mathrm{d}t = \sup\limits_{P \in \mathcal{P}} \sum\limits_{i=0}^{n_P - 1} |x(t_{i+1}) - x(t_i)|,
\end{equation}
where we take the supremum over all possible grids, which are not necessarily equidistant. Hence, $n_P$ is the number of grid points, which can in general go to infinity. This is even possible for functions with bounded variation, as long as the function value decays fast enough. 
%Well investigated examples with unbounded variation are $g(x) = \sin(\frac{1}{x})$ and $h(x) = x\sin(\frac{1}{x})$. The function $x^2\sin(\frac{1}{x})$ also oscillates infinity fast towards zero, however it has bounded variation. 
The assumption $\sup\limits_{t \in (0,T)} |x'(t)| < \infty$ ensures a finite number of oscillations on a bounded domain, which combined with Lemma 2 yields that $n_P < \infty$. Hence, there exists an optimal grid with a finite number of points $n_P$ and we can use the bound 
\begin{equation}
\sup\limits_{P \in \mathcal{P}} \sum\limits_{i=0}^{n_P - 1} |x(t_{i+1}) - x(t_i)| \leq n_P \sup\limits_{i \in P^*} |x(t_{i+1}) - x(t_i)|.
\end{equation}
We can split this apart with the triangle inequality and make the quantity even bigger by dropping the grid. Hence, we obtain
\begin{equation}
\| f(x(.)) \|_{L^1([0,T])} \leq 2n_P \|x(.)\|_{L^\infty([0,T])}.
\end{equation}
With the aid of Lemma 1, $C = 2 n_P K$ and the same argument as in the end of the proof of Theorem 1 we conlude this proof.
\end{proof}

\begin{theo}[Model Selection]
If the previous assumptions hold, $L(\hat{f}_1) \leq L(\hat{f}_2)$ and 
\begin{equation}
    \| \hat{f}_1(x(.)) - \tilde{f}_1(x(.)) \|_{L^1([0,T])} \leq C \lambda k_1^*, 
    \label{eq:conditionMainThrm}
\end{equation}
then 
\begin{equation}
\| f(x(.)) - \hat{f}_1(\hat{x}_1(.)) \|_{L^1([0,T])} \leq E + \xi \| f(x(.)) - \hat{f}_2(\hat{x}_2(.)) \|_{L^2([0,T])},    
\end{equation}
where $E, \xi, C \geq 0$ are constants, which depend on certain properties of the dynamical systems.
\end{theo}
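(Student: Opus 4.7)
The plan is to insert the \licds local encoding $\tilde{f}_1(\tilde{x}_1)$ as the intermediate object in a triangle inequality, since it is precisely this object whose $L^1$-distance from the truth $f(x)$ is controlled by the \licds score $L(\hat{f}_1)$. I would write
\begin{equation*}
\| f(x(.)) - \hat{f}_1(\hat{x}_1(.)) \|_{L^1} \leq \| f(x(.)) - \tilde{f}_1(\tilde{x}_1(.)) \|_{L^1} + \| \tilde{f}_1(\tilde{x}_1(.)) - \hat{f}_1(\hat{x}_1(.)) \|_{L^1}
\end{equation*}
and treat the two summands separately, then chain the resulting bound through the hypothesis $L(\hat{f}_1) \leq L(\hat{f}_2)$ to reach the quantity $\| f(x(.)) - \hat{f}_2(\hat{x}_2(.)) \|_{L^2}$.

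The first summand is handled by setting $z := x - \tilde{x}_1$, which satisfies $z(0) = 0$ and $z'(t) = f(x(t)) - \tilde{f}_1(\tilde{x}_1(t))$ because the local trajectory is initialized on the true orbit. The total-variation argument used in the proof of Theorem~\ref{thm:closeX} applied to $z$ yields $\| f(x(.)) - \tilde{f}_1(\tilde{x}_1(.)) \|_{L^1} = \| z' \|_{L^1} \leq C_1 \| z \|_{L^1} \leq C_1 L(\hat{f}_1)$, where the last step uses $L(\hat{f}_1) = \lambda k_1^* + \| x - \tilde{x}_1 \|_{L^1}$.

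For the second summand I would insert the two intermediate objects $\tilde{f}_1(x)$ and $\hat{f}_1(x)$ and split by triangle inequality. The middle piece $\| \tilde{f}_1(x) - \hat{f}_1(x) \|_{L^1}$ is exactly hypothesis~(\ref{eq:conditionMainThrm}), bounded by $C\lambda k_1^*$. The two outer pieces are Lipschitz differences in the argument, giving $L_{\tilde{f}_1} \| x - \tilde{x}_1 \|_{L^1} \leq L_{\tilde{f}_1} L(\hat{f}_1)$ and $L_{\hat{f}_1} \| x - \hat{x}_1 \|_{L^1}$. The residual state error $\| x - \hat{x}_1 \|_{L^1}$ can itself be bounded by the left-hand side via a Poincaré-type inequality analogous to Theorem~\ref{thm:closeFunctionsToCloseX} applied to $x - \hat{x}_1$, producing a self-referential inequality of the form $R \leq C_2 L(\hat{f}_1) + C_3 \lambda k_1^* + \beta R$ with $\beta \sim L_{\hat{f}_1} T$. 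Under the (implicit) assumption $\beta < 1$, this rearranges to $R \lesssim L(\hat{f}_1) + \lambda k_1^*$.

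Finally, using $L(\hat{f}_1) \leq L(\hat{f}_2) = \lambda k_2^* + \| x - \tilde{x}_2 \|_{L^1}$ together with the split $\| x - \tilde{x}_2 \|_{L^1} \leq \| x - \hat{x}_2 \|_{L^1} + \| \hat{x}_2 - \tilde{x}_2 \|_{L^1}$, I would apply Theorem~\ref{thm:closeFunctionsToCloseX} to $x - \hat{x}_2$, whose derivative is exactly $f(x) - \hat{f}_2(\hat{x}_2)$, to obtain $\| x - \hat{x}_2 \|_{L^2} \leq T \| f(x(.)) - \hat{f}_2(\hat{x}_2(.)) \|_{L^2}$, and then convert $L^2$ to $L^1$ on the bounded interval $[0,T]$ by Cauchy--Schwarz, which yields the $\xi \| f(x(.)) - \hat{f}_2(\hat{x}_2(.)) \|_{L^2}$ term with $\xi \propto T^{3/2}$. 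The remaining quantities $\lambda k_1^* + \lambda k_2^* + \| \hat{x}_2 - \tilde{x}_2 \|_{L^1}$ and all multiplicative constants are absorbed into $E$. The main obstacle lies in the asymmetry between the two models: since hypothesis~(\ref{eq:conditionMainThrm}) is imposed only on $\hat{f}_1$, the discrepancy $\| \hat{x}_2 - \tilde{x}_2 \|_{L^1}$ must be swept into $E$ as an uncontrolled data-dependent term, and combined with the Lipschitz-times-$T$ smallness needed for the self-referential step this is precisely why, as Remark~\ref{remark:aboutTheorem} notes, the bound is informative only for restricted classes of dynamics where $E$ and $\xi$ remain moderate.
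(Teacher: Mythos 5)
Your proof has the same skeleton as the paper's: a triangle inequality inserting $\tilde{f}_1$, the bounded-variation argument of Theorem~\ref{thm:closeX} to convert the function error into the state error $\|x-\tilde{x}_1\|_{L^1}$, the hypothesis $L(\hat{f}_1)\leq L(\hat{f}_2)$ combined with condition~(\ref{eq:conditionMainThrm}) to pass to model 2, and then Cauchy--Schwarz, Theorem~\ref{thm:closeFunctionsToCloseX}, and a final triangle inequality to surface $\|f(x(.))-\hat{f}_2(\cdot)\|_{L^2}$, with the leftover $\lambda k^*$ and $\hat{f}_2$-versus-$\tilde{f}_2$ discrepancies absorbed into $E$ in both arguments. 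The one substantive difference is the evaluation point: the paper silently evaluates every function along the true trajectory $x(.)$, so its displayed chain in fact bounds $\|f(x(.))-\hat{f}_1(x(.))\|_{L^1}$ rather than the stated $\|f(x(.))-\hat{f}_1(\hat{x}_1(.))\|_{L^1}$, whereas you keep $\hat{f}_1(\hat{x}_1)$ and therefore need the extra Lipschitz splitting and the self-referential absorption step, which costs you the additional smallness assumption $L_{\hat{f}_1}T<1$. That assumption is genuine (for an overfit network with a large Lipschitz constant the absorption fails), though the theorem's vague phrasing of the constants and Remark~\ref{remark:aboutTheorem} leave room to fold it into the ``certain properties of the dynamical systems.'' So your route proves the inequality as literally stated at the price of that restriction, while the paper avoids the restriction only by proving a slightly different quantity than it claims. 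The remaining deviations --- splitting $\|x-\tilde{x}_2\|$ at the state level rather than at the function level, and obtaining $\xi\propto T^{3/2}$ where the paper's looser bookkeeping gives $T^2$ --- are cosmetic.
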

\begin{proof}[Proof of Theorem 3]
We consider three objects in this proof - the true dynamical function $f$, an approximation $\hat{f}$, which is most likely obtained from a learning algorithm and the local approximation $\tilde{f}$, obtained through a local expansion, e.g. Taylor.

We start by inserting $+\tilde{f}- \tilde{f}$ and obtain with the triangle inequality
\begin{equation}
\| f(x(.)) - \hat{f}_1(x(.)) \|_{L^1([0,T])} \leq \nu_1 +  \| f(x(.)) - \tilde{f}_1(x(.)) \|_{L^1([0,T])}.
\end{equation}
Now, we use Theorem 2 and obtain
\begin{equation}
 \nu_1 +  \| f(x(.)) - \tilde{f}_1(x(.)) \|_{L^1([0,T])} \leq \nu_1 + C \|x(.) - \tilde{x}_1(.)\|_{L^1}.
\end{equation}
The assumption $L(\hat{f}_1) \leq L(\hat{f}_2)$ expands to
\begin{equation}
\lambda k_1^* + \| x(.) - \tilde{x}_1(.) \|_{L^1([0,T])} \leq \lambda k_2^* + \| x(.) - \tilde{x}_2(.) \|_{L^1([0,T])}.
\end{equation}
Hence, for $\nu_1 \leq C\lambda k_1^*$ we obtain 
\begin{equation}
 \nu_1 + C \|x(.) - \tilde{x}_1(.)\|_{L^1} \leq C \lambda k_2^* + C\| x(.) - \tilde{x}_2(.) \|_{L^1([0,T])}.
\end{equation}
Now we apply the Cauchy-Schwarz inequality to transform the $L^1$ norm into the $L^2$ norm and obtain
\begin{equation}
C\lambda k_2^* + C\| x(.) - \tilde{x}_2(.) \|_{L^1([0,T])} \leq C\lambda k_2^* + T C\| x(.) - \tilde{x}_2(.) \|_{L^2([0,T])} .
\end{equation}
With the aid of Theorem 1 it follows that
\begin{equation}
C\lambda k_2^* + T C\| x(.) - \tilde{x}_2(.) \|_{L^2([0,T])} \leq C\lambda k_2^* + T^2 C\| f(x(.)) - \tilde{f}_2(x(.)) \|_{L^2([0,T])}.
\end{equation}
With the aid of the triangle inequality we can again show
\begin{equation}
C\lambda k_2^* + T^2 C\| f(x(.)) - \tilde{f}_2(x(.)) \|_{L^2([0,T])} \leq C\lambda k_2^* + T^2 C \nu_2 + T^2 C \| f(x(.)) - \hat{f}_2(x(.)) \|_{L^2([0,T])} 
\end{equation}
\end{proof}

\section{Learning dynamical systems}
The more detailed description of the method we used to learn dynamics function $\hat{f}$ from observational data is presented here. We use a simple black-box approach to learn the dynamical system from a set of trajectories.

{\it Learning $\hat{f}$ by Neural network---} Assume we are given a collection of sequences of observations $S_c=\{S_1, S_2, \ldots\}$. Each sequence $S_i$ covers a trajectory in the state space starting from some starting point $x(0)$. We use each sequence $S_i$ as a mini-batch of observations and train $\hat{f}(x;\theta)$ by the following simple relationship between its input/output pairs:
\begin{align}
\dot{x}(t)=f(x(t)) &\implies \dot{x}(t)=f(x(t);\theta) \\
&\implies \frac{\Delta x}{\Delta t}=f(x(t);\theta)\\
&\implies x((k+1)T_s)-x(kT_s)=\hat{f}(x(kT_s);\theta)
\label{eq:learning_dynamics}
\end{align}

The reason for using a collection $\mathcal{S}_c$ instead of a single sequence $S$ is clear. A single sequence starting from an initial point $x(0)$ is unlikely to be representative enough so that $\hat{f}$ is learned as a good approximation to $f$. Once $\hat{f}$ is learned, we can use automatic differentiation to compute its derivative w.r.t. the input~\citep{abadi2016tensorflow}.

{\it Learning $\hat{f}$ Gaussian Process---} 
The discretization of the nonlinear dynamics function is done just like above. We used a vanilla GP without any sparse approximations and a squared exponential kernel function.

\section{More experiments}
Some parts of the experiment sections are delegated to here from the main text. It includes more sophisticated experiments with higher dimensional and physical dynamical systems.

\subsection{Enlarged version of the illustrative example}
\begin{figure}[t!]
	\centering
	\hspace{-0.3cm}
\subfigure[State trajectory starting from $x(0)=2$]{
\includegraphics[width=0.45\linewidth]{./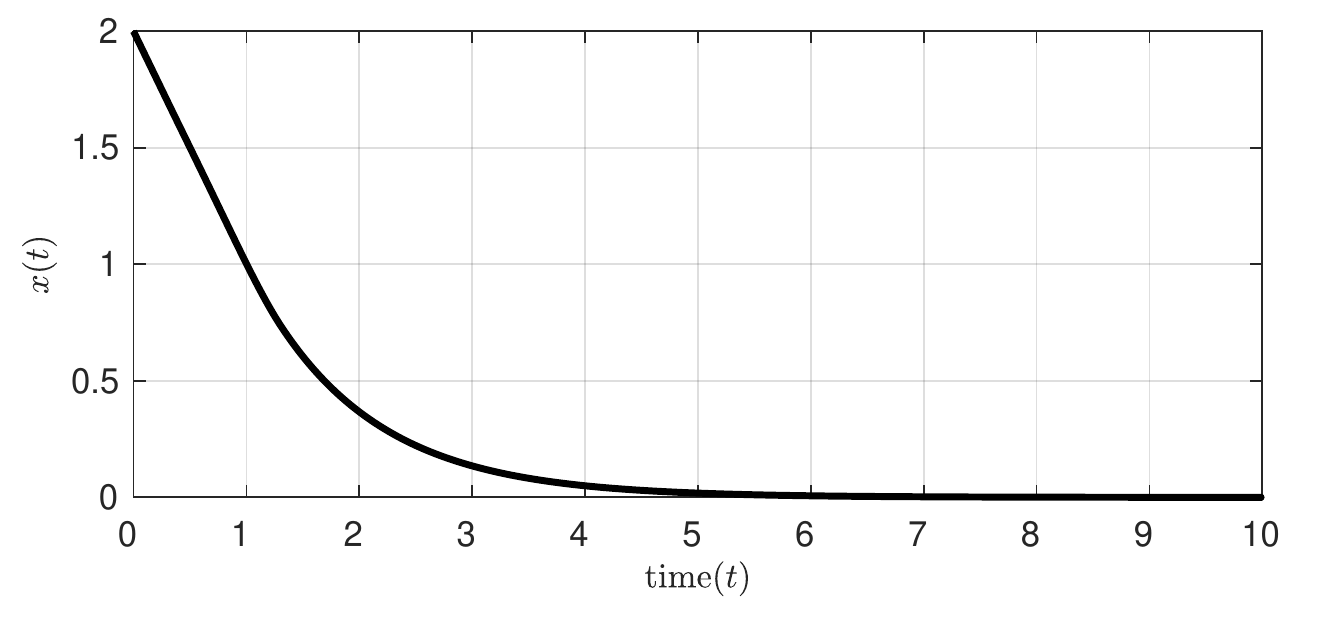}}
\subfigure[True dynamics $f$ vs learned dynamics $\hat{f}$]{
\includegraphics[width=0.45\linewidth]{./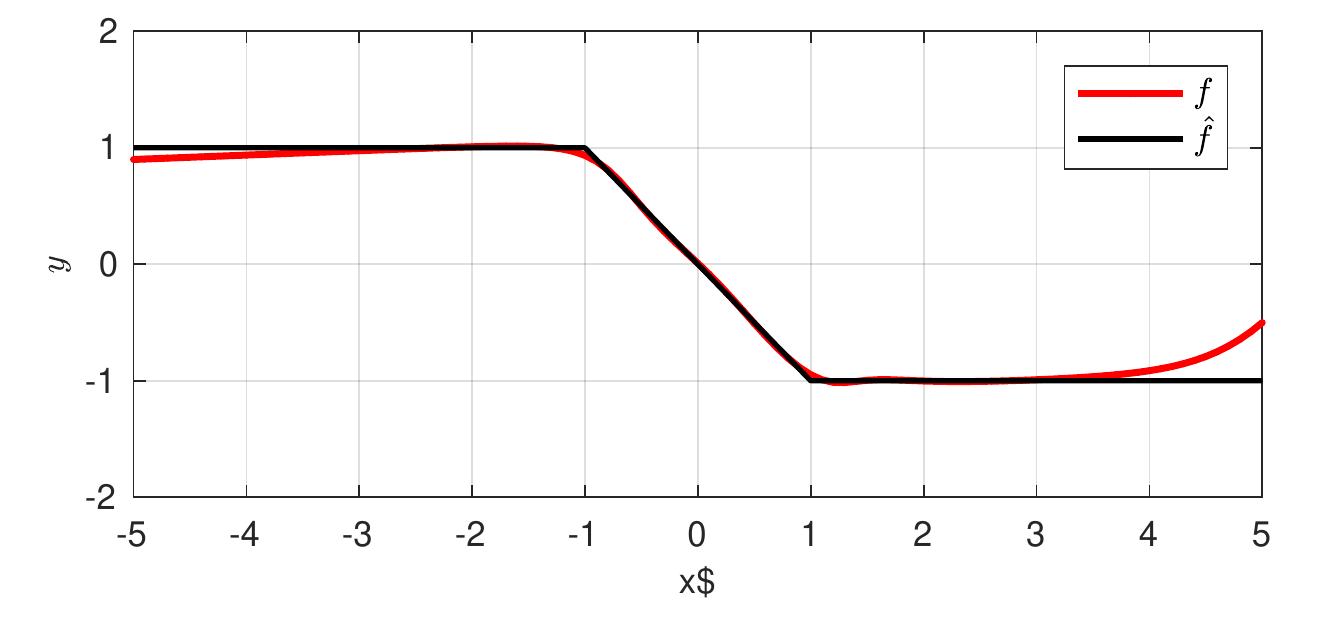}}\\
\caption{(a) Shows that the negative region of the state space is not visited when the initial state is positive. (b) Shows the learned dynamics function $\hat{f}$ when only some trajectories of the state dynamics governed by $f$ is available.}
\label{fig:sat_state_function}
\end{figure}

 As an illustrative example, let's assume the dynamical system $\dot{x}(t)=f(x(t))$ with $f(x)=\minus{\rm sat}(x)$ depicted in Fig.~\ref{fig:sat_state_function}(a).
This dynamical system is stable and the evolution of its state is depicted in Fig.~\ref{fig:sat_state_function}. If the initial point $x(0)$ resides in the positive region of the state space, it never leaves the non-negative side of the state space. Hence, encoding $f$ for the negative domain is not necessary and we can safely only encode $f$ in its positive domain. This can be formalized in terms of algorithmic complexity as
\begin{equation}
{\rm AC}(x(0), f) < {\rm AC}(x(0)) + {\rm AC}(f)
\label{eq:AC_state_model}
\end{equation}
meaning that knowing $x(0)$ allows to design a better code for $f$.

\subsection{Illustrative example:}
Here is more figures related to sec.3(Fig.2) of the main text. The assumed dynamical system is
$\dot{x}(t)=-\tanh(x(t))$. The state evolution for different number of partitions is depicted in Fig.~\ref{fig:exp_tanh_encoding}. Fig.~\ref{fig:exp_tanh_encoding}(f) shows that \licds score finds a non-trivial local encoding of the space for $m>1$.

\subsection{Local apprximations to the learned function:}
In this section, the system $\dot{x}(t)=\tanh(x(t)) + 0.1\sin(5x(t))$ is used to generate samples based on the method explained for the experiment in the main text. The dynamics function is then learned and depicted in Fig.~\ref{fig:multiple_taylors}(a). Once the function is learned, multiple local Taylor approximations is computed and shown in Fig.~\ref{fig:multiple_taylors}(b-e) corresponding to different number of partitions. Figure.~\ref{fig:multiple_taylors}(f) shows the $L$ score is optimal for a non-trivial $m>1$ case.

\begin{figure}[tb]
	\centering
	\hspace{-0.3cm}
\subfigure[$m=1$]{
\includegraphics[width=0.3\linewidth]{./figures_arxiv/nn_part_1_state_1.pdf}}
\subfigure[$m=2$]{
\includegraphics[width=0.3\linewidth]{./figures_arxiv/nn_part_2_state_1.pdf}}
\subfigure[$m=3$]{
\includegraphics[width=0.3\linewidth]{./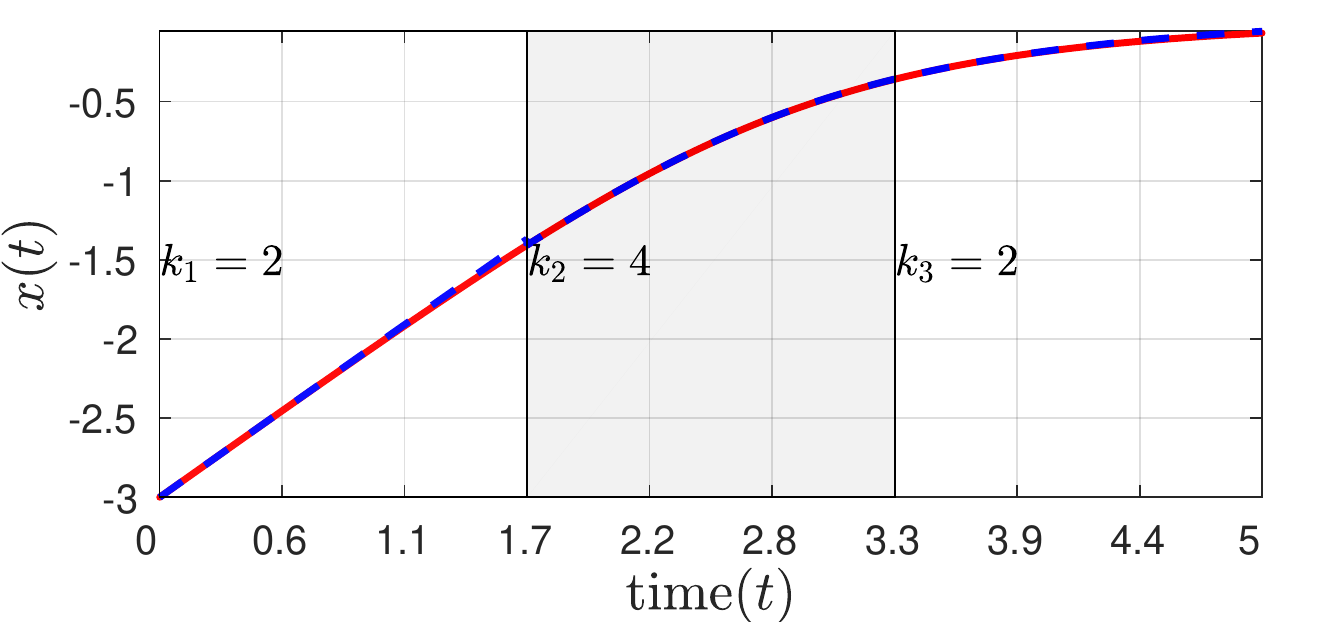}}\\
\subfigure[$m=4$]{
\includegraphics[width=0.3\linewidth]{./figures_arxiv/nn_part_4_state_1.pdf}}
\subfigure[$m=5$]{
\includegraphics[width=0.3\linewidth]{./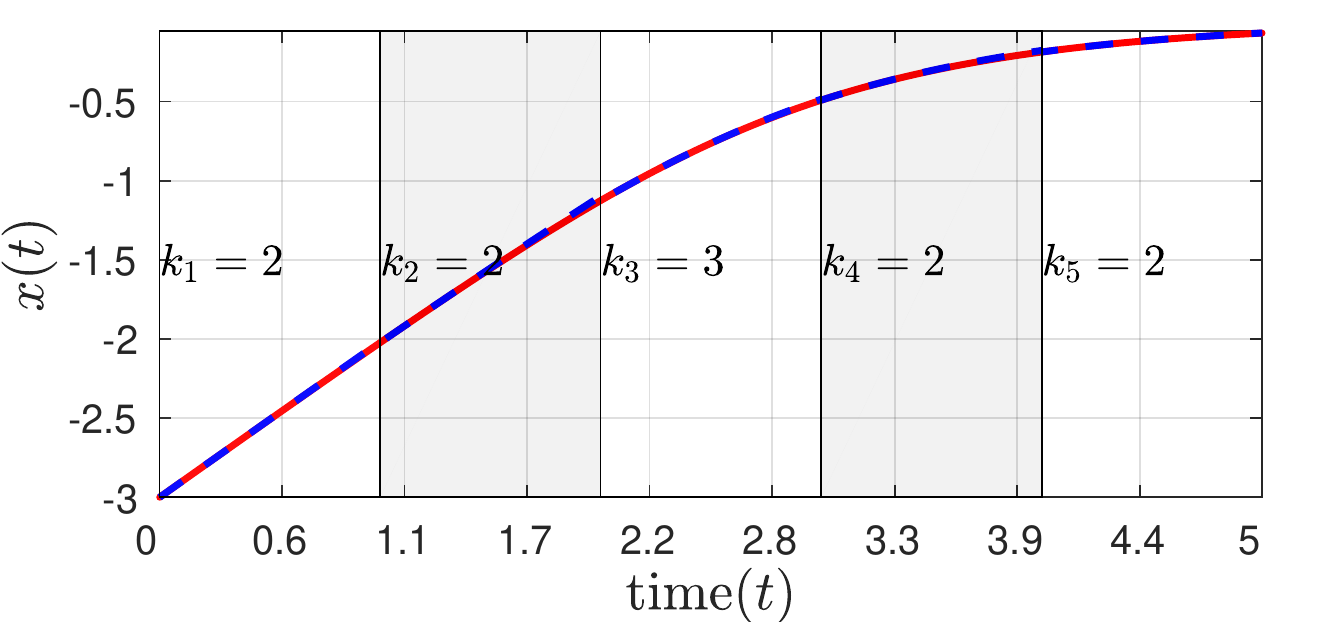}}
\subfigure[\licds score]{
\includegraphics[width=0.3\linewidth]{./figures_arxiv/Cost.pdf}}
	\caption{The effect of learning $f$ by $\hat{f}$ then approximating $\hat{f}$ by $\tilde{f}$ on state evolution of the dynamical system. The optimal trade-off is found by \licds score in (f) for $m=2$ and the corresponding state evolution is shown in (b). The number of employed basis functions $k$ in each local region is written over that region} 
	\vspace{-0.2cm}
\label{fig:exp_tanh_encoding}
\end{figure}

\begin{figure}[t!]
	\centering
% 	\hspace{-0.3cm}
\subfigure[True and learned dynamics]{
\includegraphics[width=0.3\linewidth]{./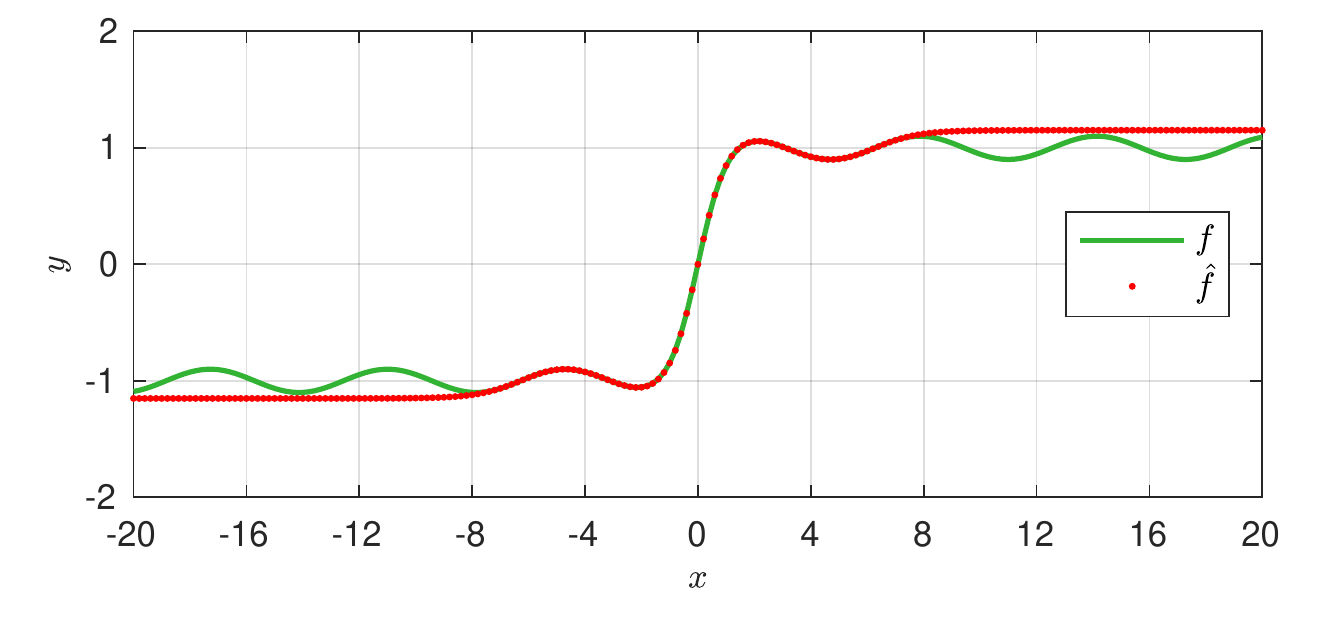}}
\subfigure[$m=1$, $L = 0.51$]{
\includegraphics[width=0.3\linewidth]{./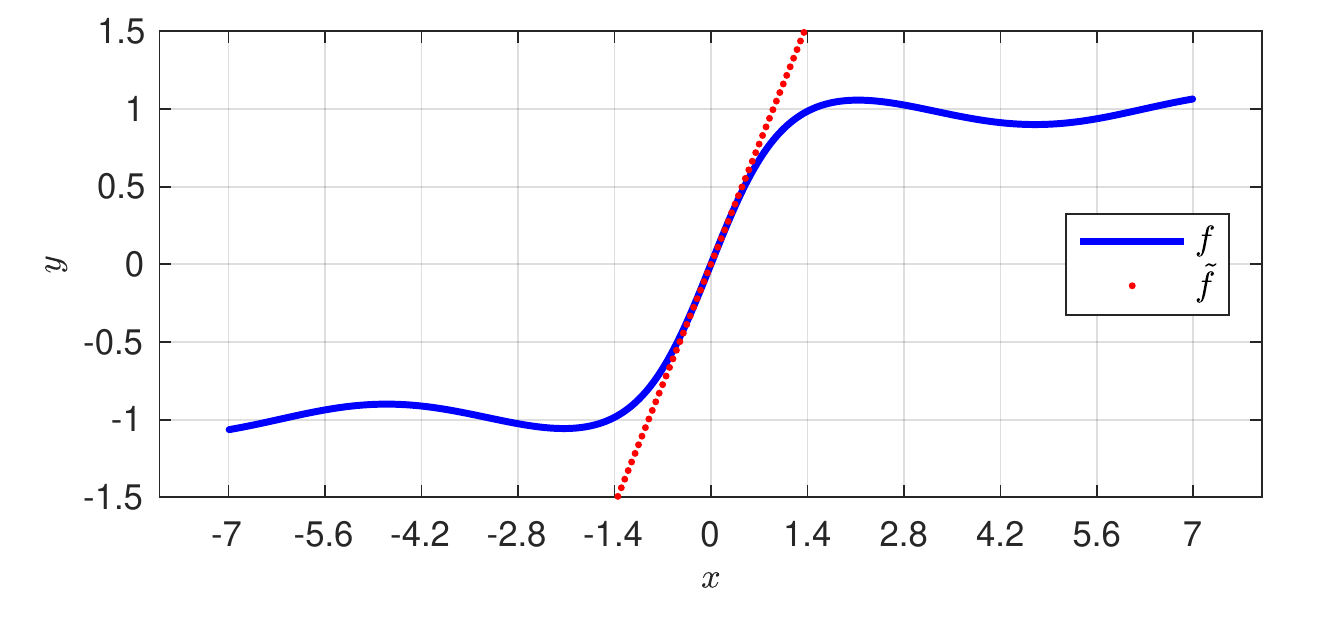}}
\subfigure[$m=2$, $L = 0.42$]{
\includegraphics[width=0.3\linewidth]{./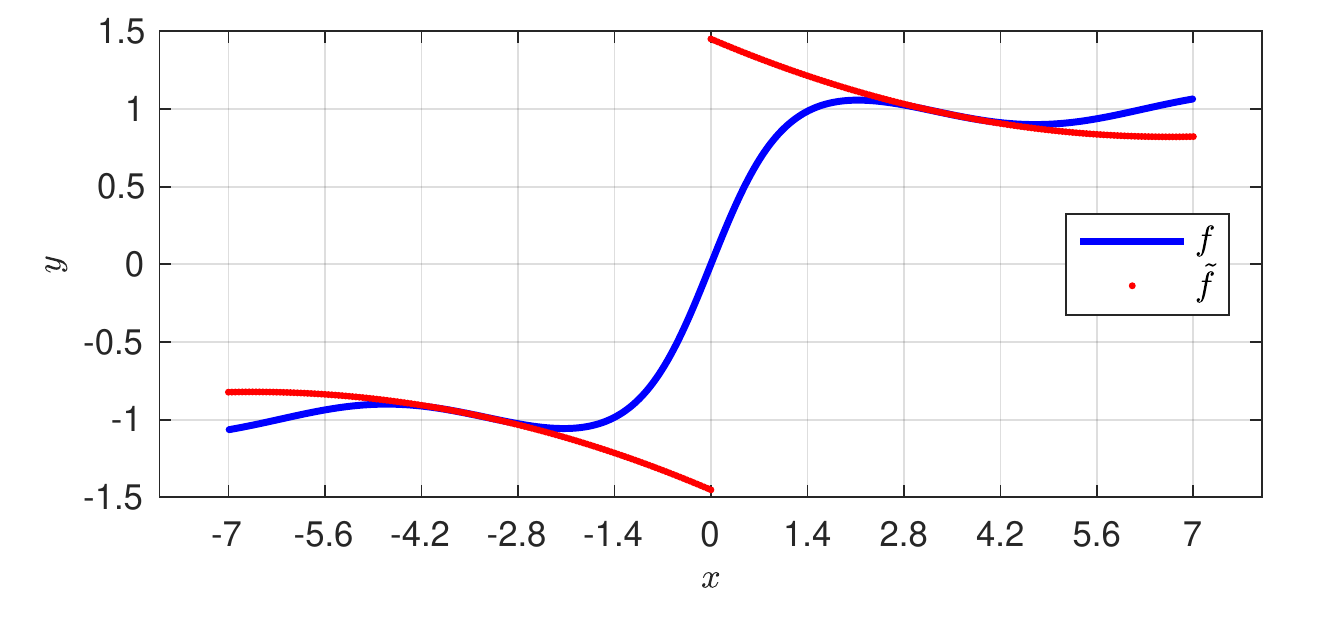}}\\
\subfigure[$m=3$, sections, $L = 0.25$]{
\includegraphics[width=0.3\linewidth]{./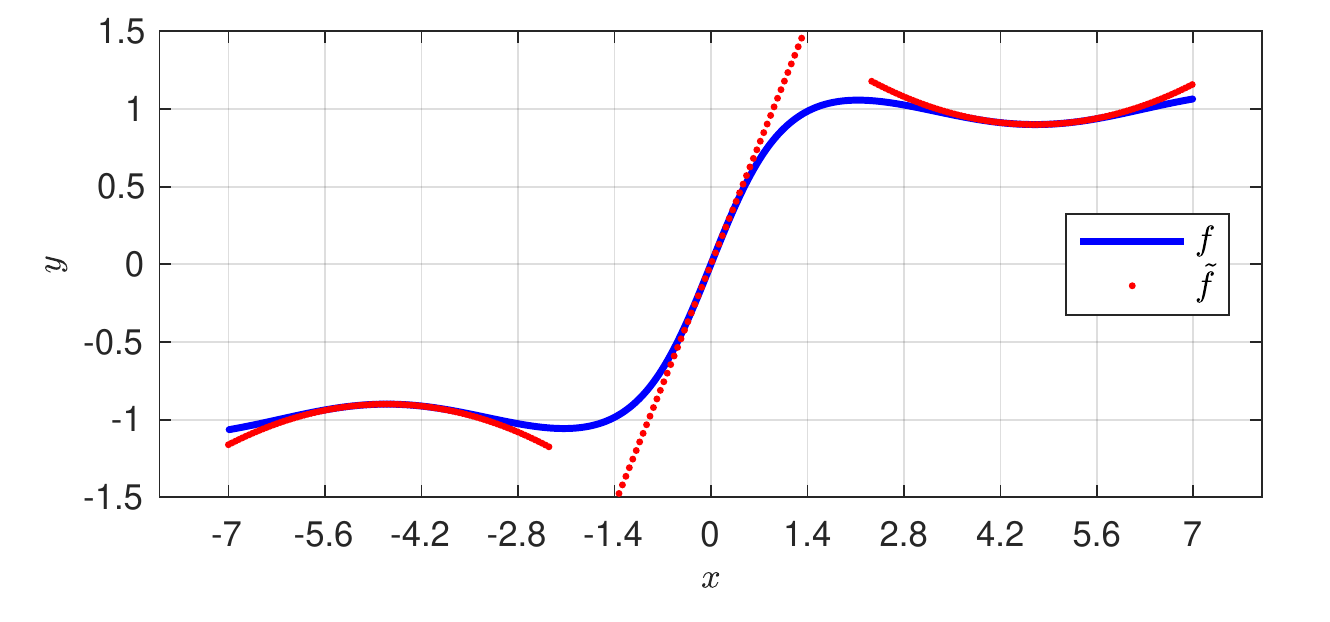}}
\subfigure[$m=4$, sections, $L = 0.36$]{
\includegraphics[width=0.3\linewidth]{./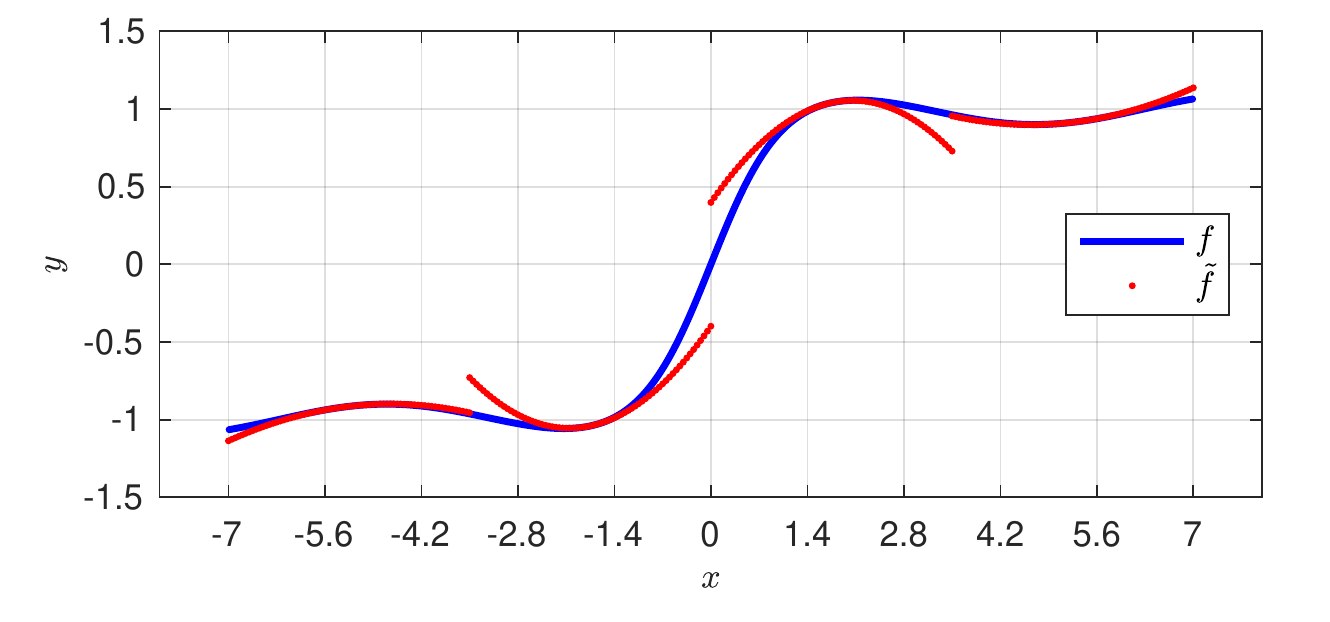}}
\subfigure[$m=5$, sections, $L = 0.55$]{
\includegraphics[width=0.3\linewidth]{./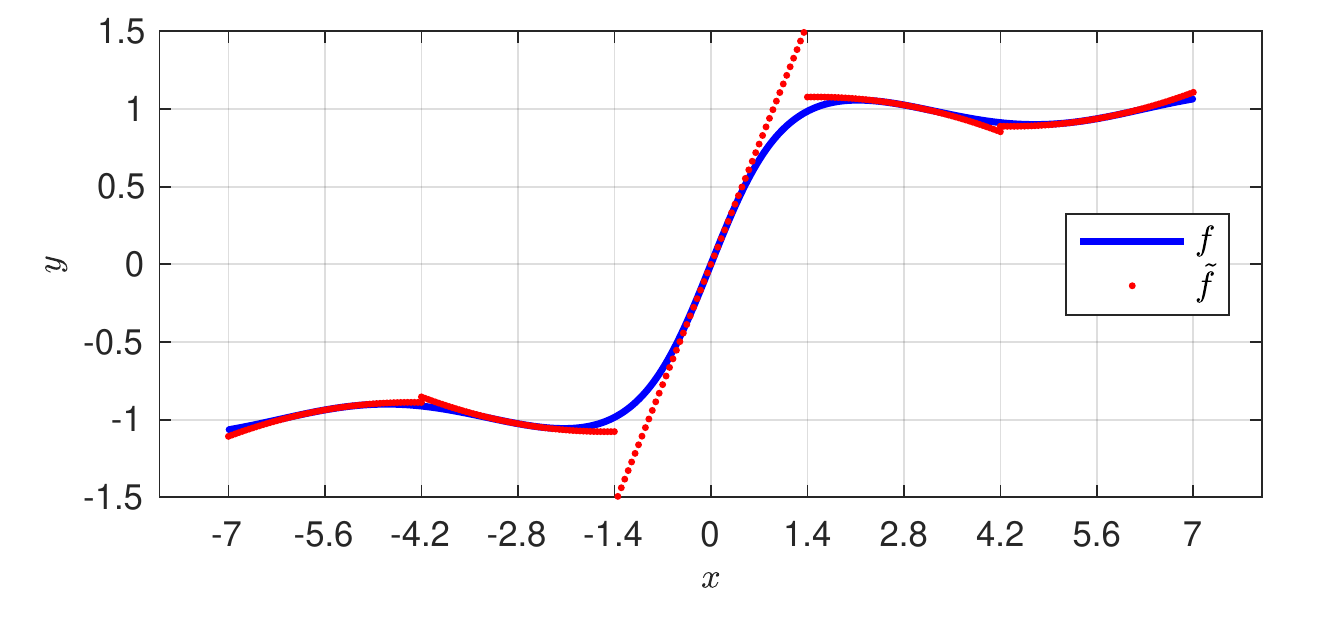}}
	\caption{learned function with various number of local approximations}
	\vspace{-0.2cm}
\label{fig:multiple_taylors}
\end{figure}

\subsection{Pendulum dynamics:}
Part of this experiment is in the main text. This is the complete version. We consider a realistic physical system of a pendulum with two dimensional dynamics $\dot{x}_1 = x_2$ and $\dot{x}_2 = -x_2 - 9.81{\rm sin}(x_1)$. The result for how $\hat{f}$ is learned from $f$ and how $\tilde{f}$ approximates $\hat{f}$ is depicted in Fig.\ref{fig:pendulum_true_vs_learned_2d}.
The corresponding result in the state space is shown in Fig.~\ref{fig:exp_pendulum}, where $x_1$ is the angular position and $x_2$ the angular velocity of the pendulum. Again, \licds finds a good trade-off between model complexity and prediction accuracy.
% We also did experiments for high dimensional realistic example of a Quad-rotor but we delegated it to the supplementary material due to the shortage of space.
% We consider he dynamics of a pendulum with two state equations $\dot{x}_1 = x_2$ and $\dot{x}_2 = -x_2 - 9.81{\rm sin}(x_1)$, where $x_1$ is the angular position and $x_2$ the angular velocity.
% \begin{align}
% &\dot{x}_1 = x_2\\
% &\dot{x}_2 = -rx_2 - \frac{g}{l}{\rm sin}x_1
% \label{eq:pendulum}
% \end{align}
% The system runs from the initial state $[x_1(0),x_2(0)]=[\frac{\pi}{4}, 0]$. 
% Assume polynomials with degrees up to $3$ are allowed as the set of basis functions to approximate each local region. Given this set, the approximation is done by Taylor expansion. The learned dynamics functions and their local approximations are depicted in Fig.~\ref{fig:pendulum_true_vs_learned_2d}. The optimal degree $k_i$ of the Taylor expansion corresponding to each local region can be seen in \fig \ref{fig:exp_pendulum}.
% This determines the number of terms in the expansion and also the number of coefficients that should be stored with a certain precision for that particular region. Hence, this degree gives a measure of local model complexity as described in sect.2 in of the main text in detail.

\subsection{Quadrotor dyamics:}
\licds as a method for compression, can be beneficial in any setting where it is necessary to transmit state information to a distant node. One typical example of these settings is Unmanned Aerial Vehicle (UAV), whose states are constantly measured on-board, but only occasionally transmitted to the ground base. Continuous transmission of data is expensive in terms of battery power and bandwidth. 
%This gave rise to event-triggered communication schemes, where states are only transmitted when required. (ST: omit since this example doesn't use event-triggering; avoiding wrong impression/expectations)
To show the performance of \licds on more complex dynamics, we test it on the deterministic dynamical system of a quad-rotor UAV~\citep{hoffmann2007quadrotor}.
%in this experiment, we choose to test \mieds on the deterministic dynamical system of a quad-rotor UAV \cite{hoffmann2007quadrotor}:
% \seb{What does $r[c(\phi)t(\theta)]$ mean?  Not sure how to read this notation...}\ar{I defined t(.). [] is just brackets}

In the following state space equation, $\phi\in[-\pi, \pi]$ is roll (rotation around x axis) and $\theta\in(-\frac{\pi}{2},\frac{\pi}{2})$ is pitch (rotation around y axis) in the earth frame. The value of angles can violate the bounds but its interpretation is circular. The vector $[u, v, w, p, q, r]$ contains the linear and angular velocities in the body frame. The functions $s$, $c$, and $t$ are sine, cosine, and tangent, respectively. The vector $[f_{wx}, f_{wy}, f_{wz}, f_t]=[1, 1, 1, 0]$ contains the wind forces and time-varying disturbance, $[\tau_{wx}, \tau_{wy}, \tau_{wz}]=[1, 1, 1]$ wind torques, and $[\tau_x, \tau_y, \tau_z]=[1, 1, 1]$ the control torques generated by the differences in the rotor speeds. The mass $m$ and gravity constant $g$ are set to $1$, $9.81$ respectively. We assume the system
\begin{equation}
\left\{
	\begin{array}{ll}
		\dot{\phi}=p+r[c(\phi)t(\theta)]+q[s(\phi)t(\theta)] \\
        \dot{\theta}=q[c(\theta)]-r[s(\phi)]\\
%         \dot{\psi}=r\frac{c(\phi)}{c(\theta)}+q\frac{s(\phi)}{c(\theta)}\\
        \dot{p}=\frac{I_y-I_z}{I_x}rq+\frac{\tau_x+\tau_{wx}}
        {I_x}\\
        \dot{q}=\frac{I_z-I_x}{I_y}pr+\frac{\tau_y+\tau_{wy}}
        {I_y}\\
        \dot{r}=\frac{I_x-I_y}{I_z}pq+\frac{\tau_z+\tau_{wz}}{I_z}\\
        \dot{u}=rv-qw-g[s(\theta)]+\frac{f_{wx}}{m}\\
        \dot{v}=pw-ru+g[s(\phi)c(\theta)]+\frac{f_{wy}}{m}\\
        \dot{w}=qu-pv+g[c(\theta)c(\phi)]+\frac{f_{wz}-f_t}{m}\\   
	\end{array}
\right.
\end{equation}
runs from the initial state $[{-2},{-3},1,3,1,4,2,1]$.
%and the initial state is set to $[1,-2,3,1,-3,4,2,1]$

\begin{figure}[t!]
	\centering
	\hspace{-0.3cm}
\subfigure[True and learned dynamics]{
\includegraphics[width=0.45\linewidth]{./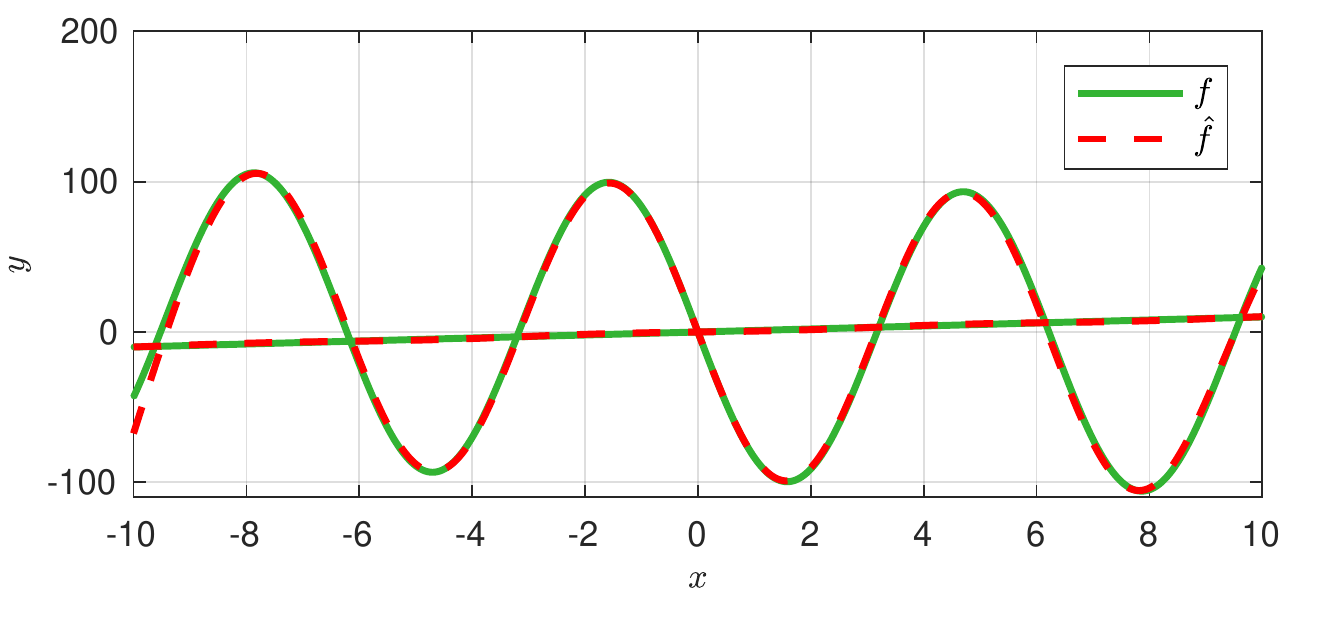}}
\subfigure[Learned and locally approximated dynamics]{
\includegraphics[width=0.45\linewidth]{./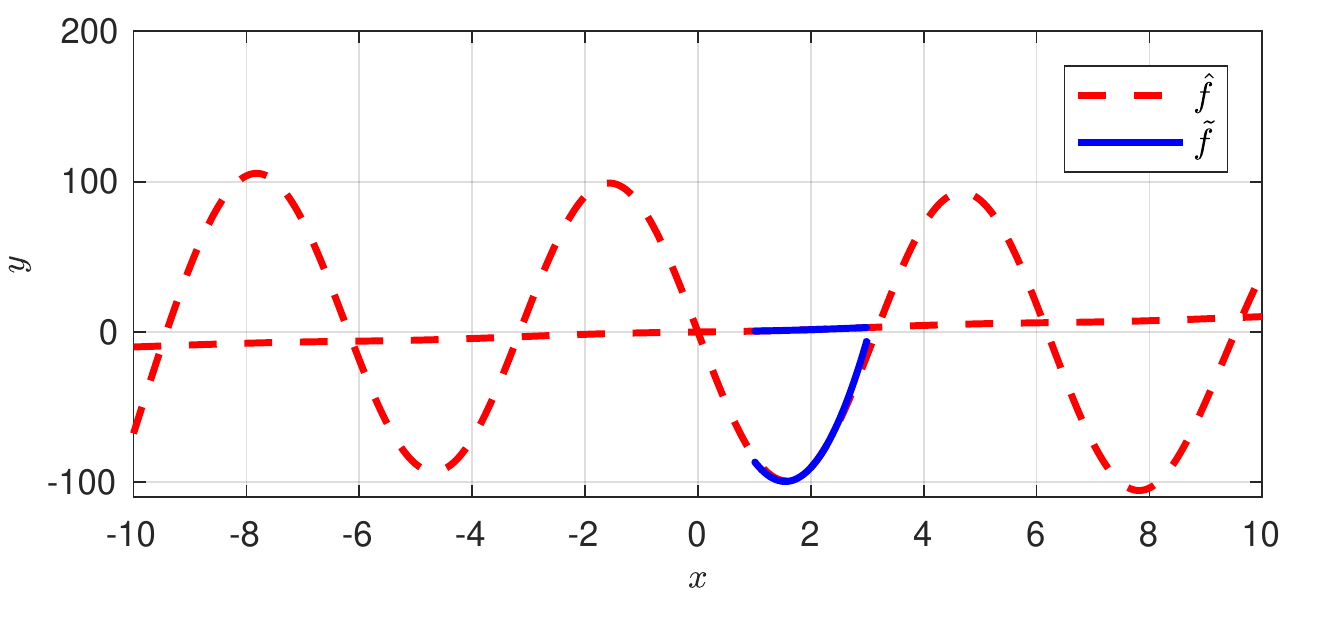}}\\
\caption{(a) shows the dynamics function for two dimensional dynamics of a pendulum. The dynamics functions of two dimensions are drawn on a mutual domain where both $x_1$ and $x_2$ ranges over $[-10, 10]$ (b) Taylor approximation of the learned dynamics function around working point $[x_1; x_2]=[2,  2]$.}
\label{fig:pendulum_true_vs_learned_2d}
\end{figure}

\begin{figure}[t!]
	\centering
	\hspace{-0.3cm}
\subfigure[1 section]{
\includegraphics[width=0.23\linewidth]{./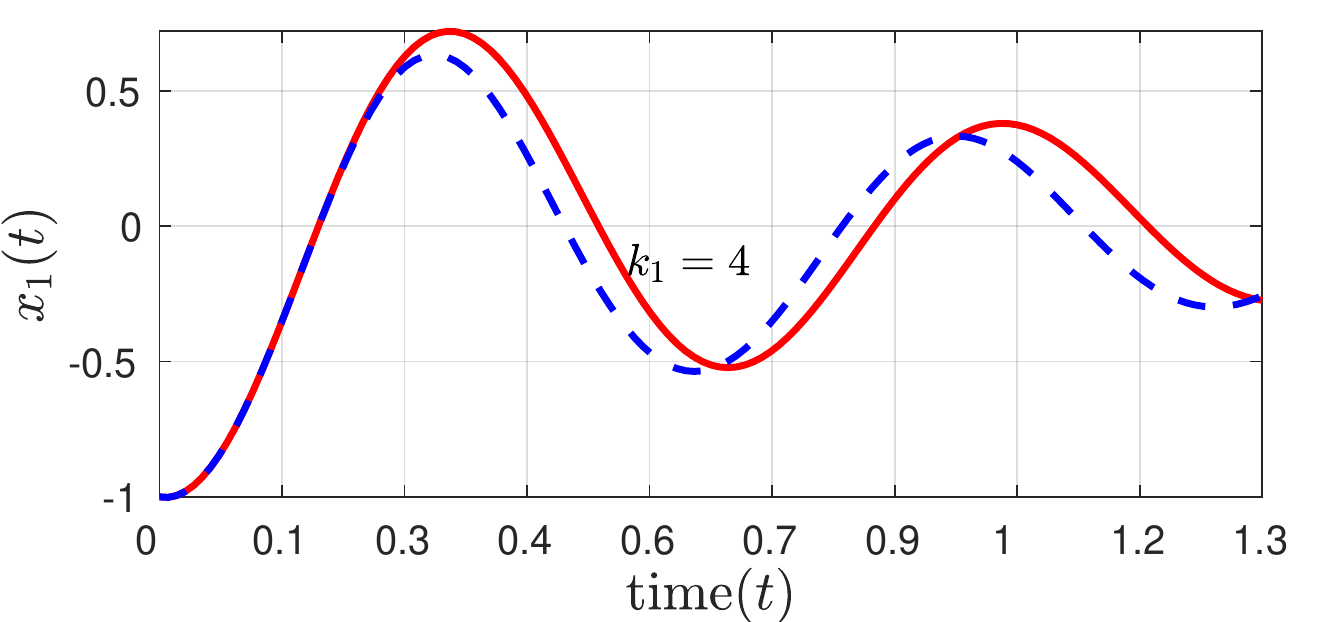}}
\subfigure[2 section]{
\includegraphics[width=0.23\linewidth]{./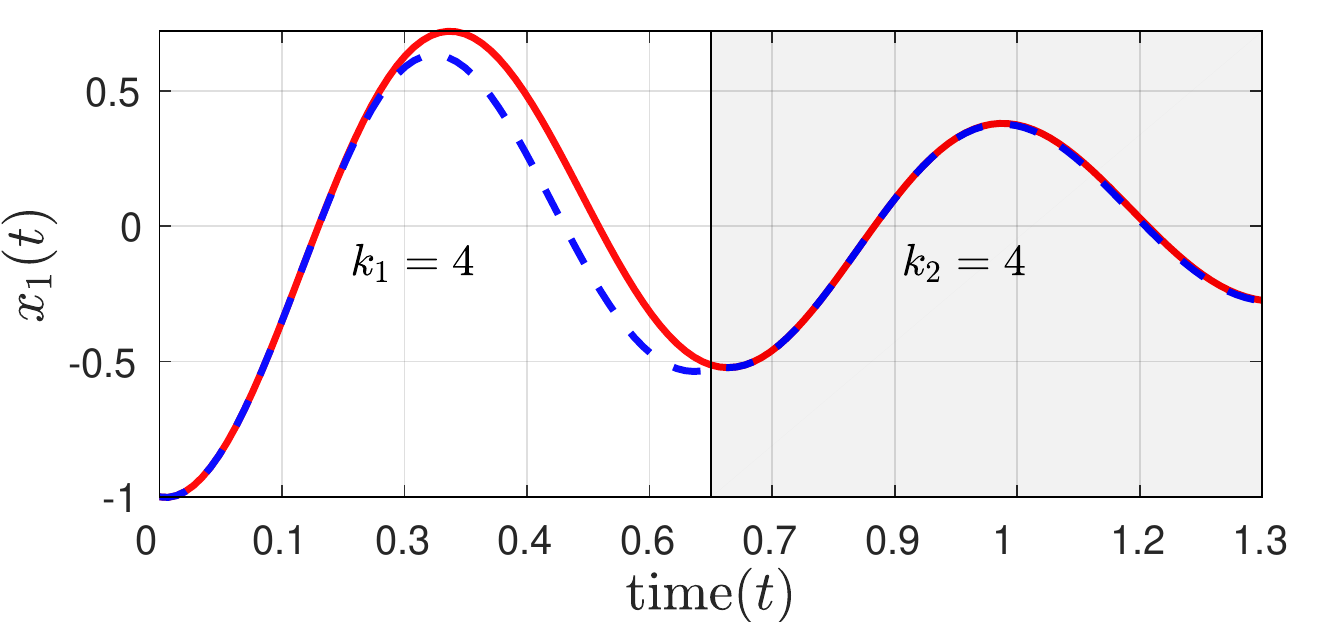}}
\subfigure[3 sections]{
\includegraphics[width=0.23\linewidth]{./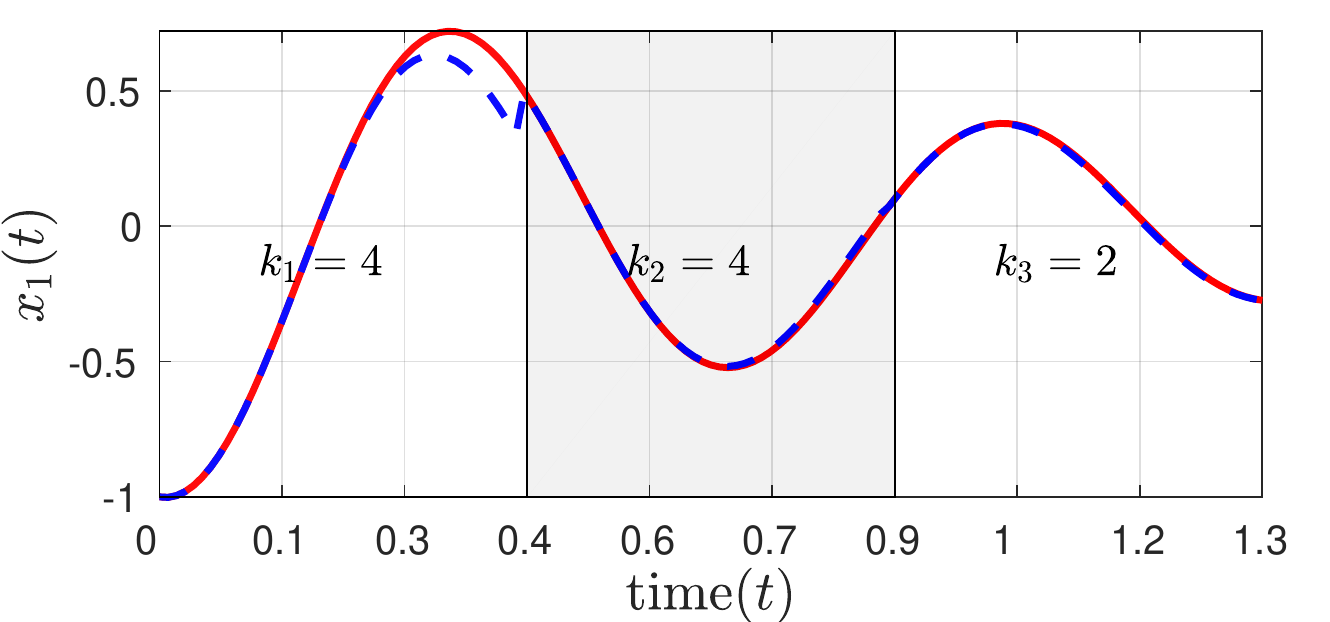}}
\subfigure[4 sections]{
\includegraphics[width=0.23\linewidth]{./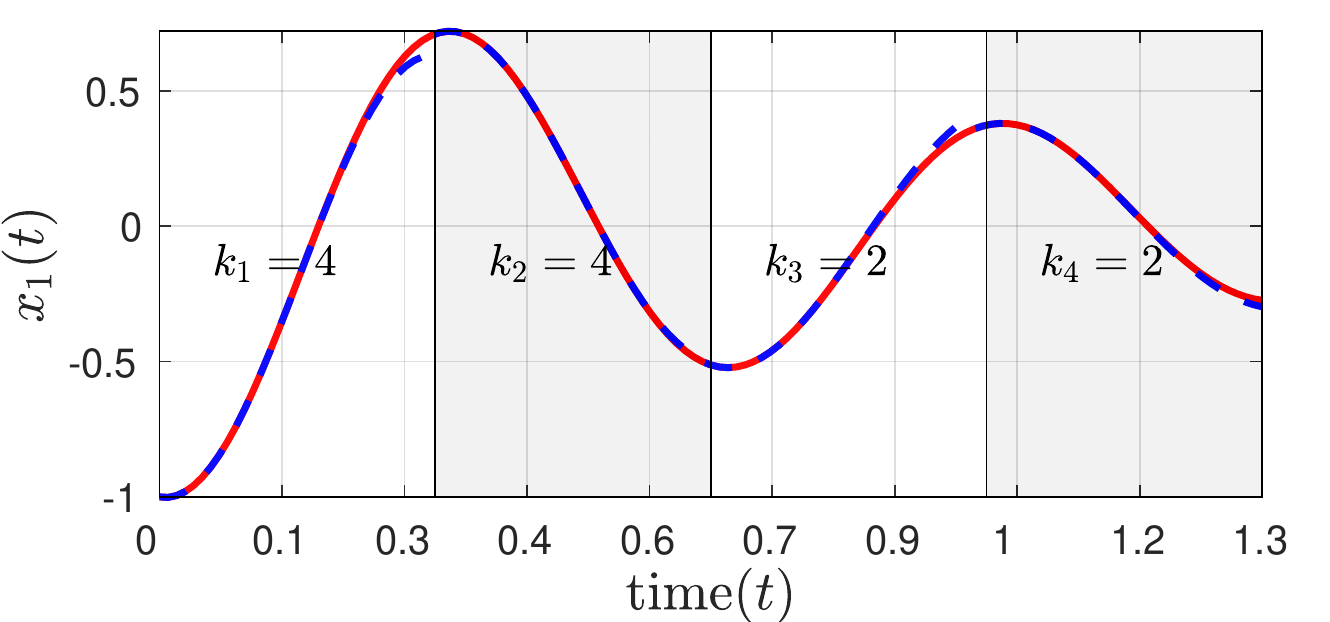}}\\
% \subfigure[4 sections]{
% \includegraphics[width=0.15\linewidth]{./figures_arxiv/nn_part_5_state_1_2d.pdf}}\\
\subfigure[1 section]{
\includegraphics[width=0.23\linewidth]{./figures_arxiv/nn_part_1_state_2_2d.pdf}}
\subfigure[2 section]{
\includegraphics[width=0.23\linewidth]{./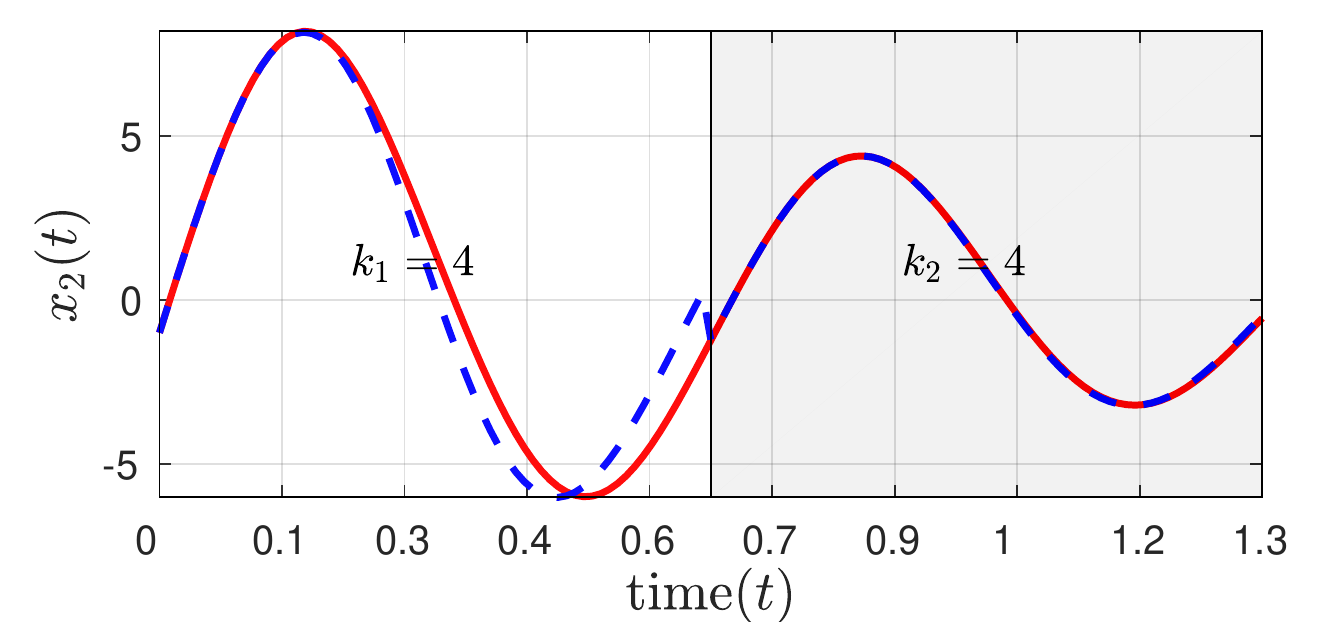}}
\subfigure[3 sections]{
\includegraphics[width=0.23\linewidth]{./figures_arxiv/nn_part_3_state_2_2d.pdf}}
\subfigure[4 sections]{
\includegraphics[width=0.23\linewidth]{./figures_arxiv/nn_part_4_state_2_2d.pdf}}
% \subfigure[4 sections]{
% \includegraphics[width=0.15\linewidth]{./figures_arxiv/nn_part_5_state_2_2d.pdf}}
\caption{State evolution of a two dimensional physical pendulum dynamical system with various number of local partitions. The minimum value of $L_{\rm total}$ where $L(m=\{1,2,3,4\})=[0.067, 0.048, 0035, 0.044]$ occurs when $m=3$ as a good trade-off between accuracy of the states and total complexity of the local models. The number of bases used to approximate each local region is also depicted over the figures.}
\vspace{-0.2cm}
\label{fig:exp_pendulum}
\end{figure}

In \fig \ref{fig:quadrotor}, we can see the effects of different model complexities on the state prediction accuracy. We take polynomials up to degree $5$ as allowed basis functions and the maximum number of partitions is upper bounded by $5$. Hence $k_{\rm max}=5$ in \alg 2, and $m_{\rm max}=5$ in \alg 1. Again, we see a non-trivial optimum, which is obtained through the principled MML approach.
\begin{figure}[t!]
	\centering
	\hspace{-0.3cm}
\subfigure[roll $\phi$]{
\includegraphics[width=0.3\linewidth]{./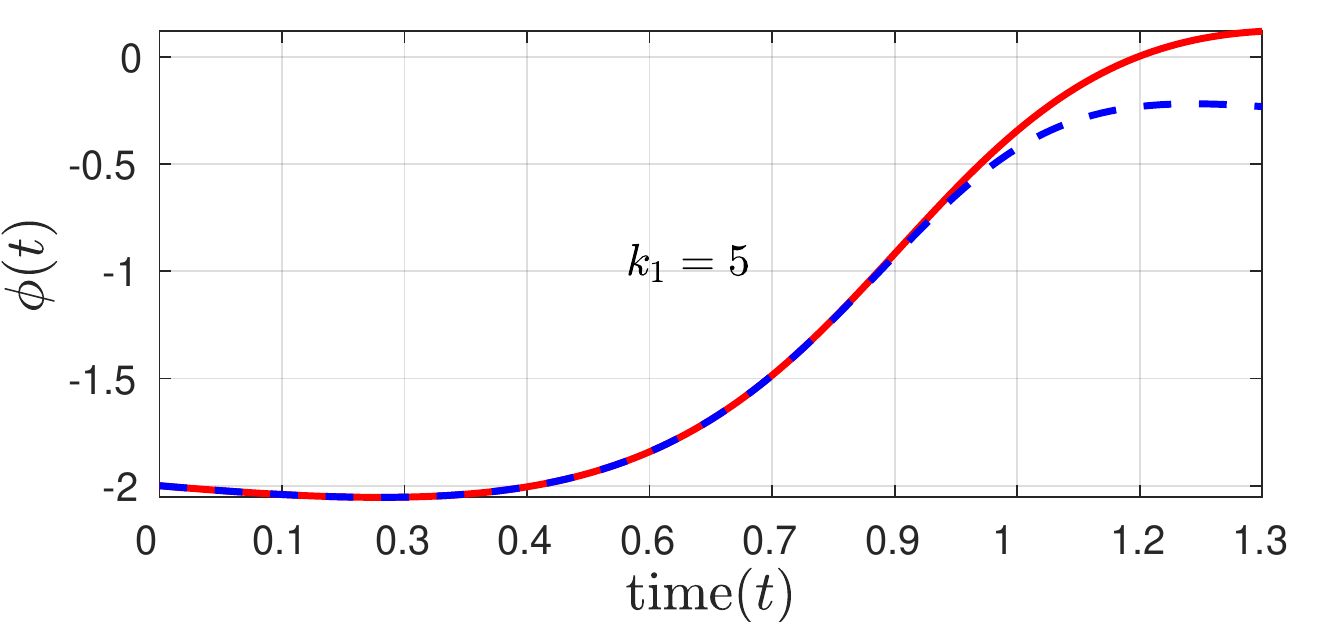}}
\subfigure[roll $\phi$]{
\includegraphics[width=0.3\linewidth]{./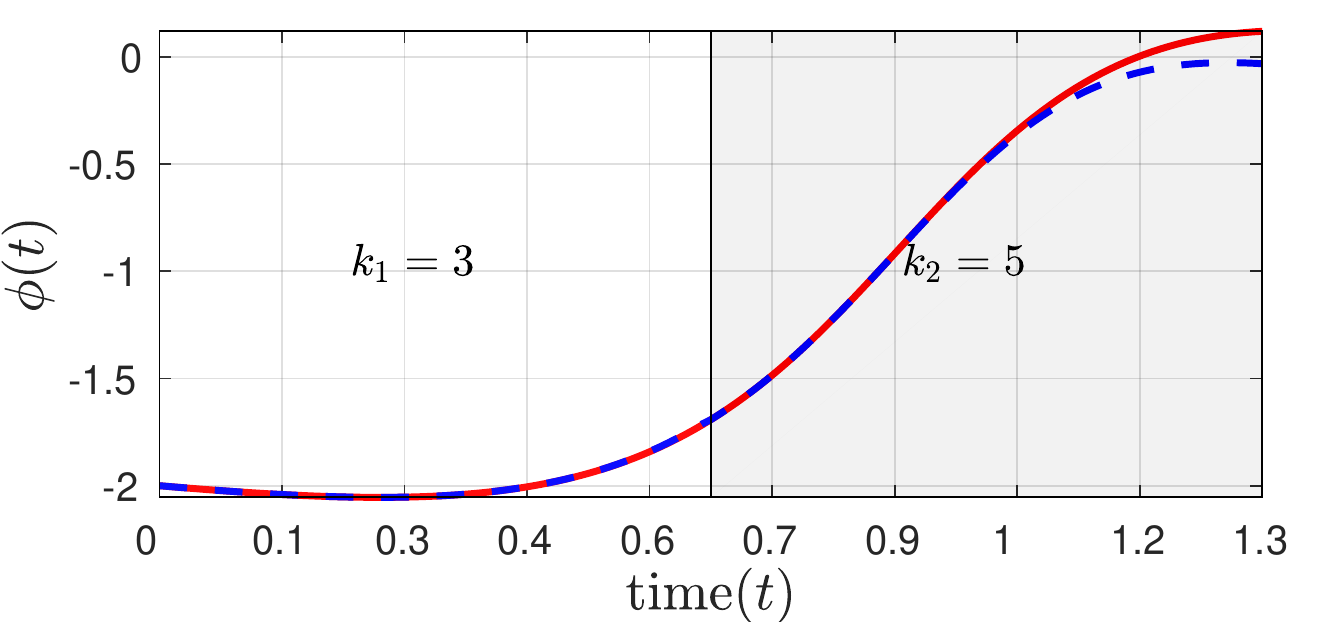}}
\subfigure[velocity $p$]{
\includegraphics[width=0.3\linewidth]{./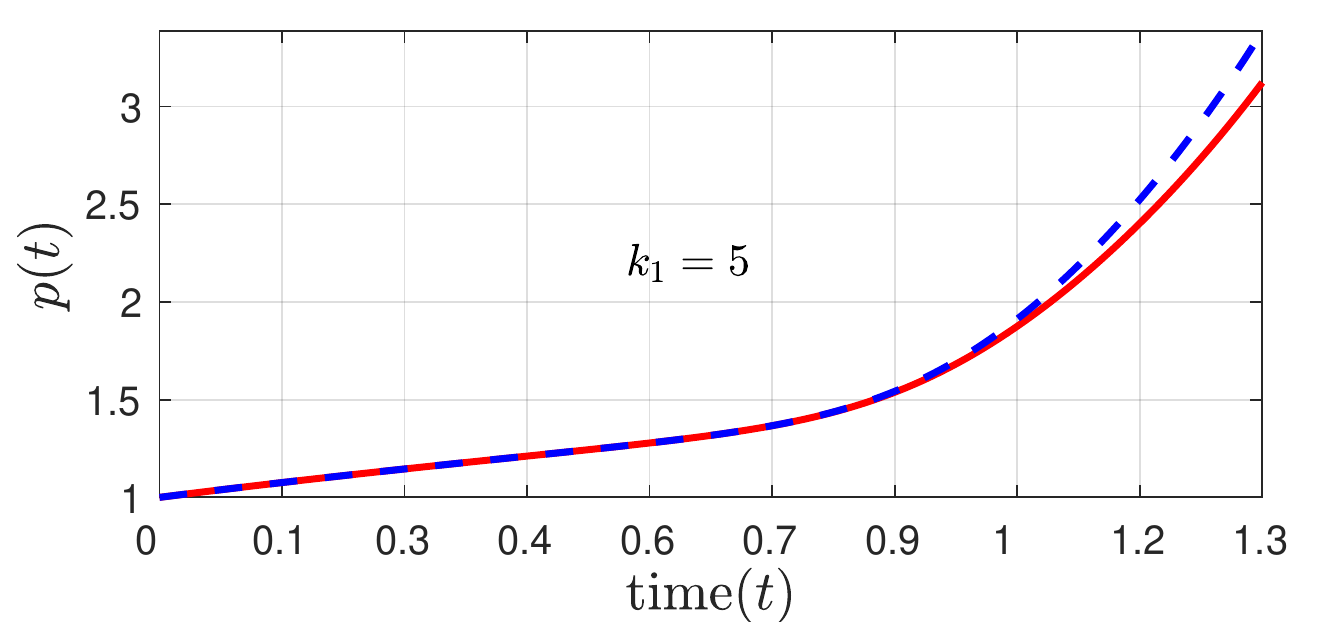}}\\
\subfigure[velocity $p$]{
\includegraphics[width=0.45\linewidth]{./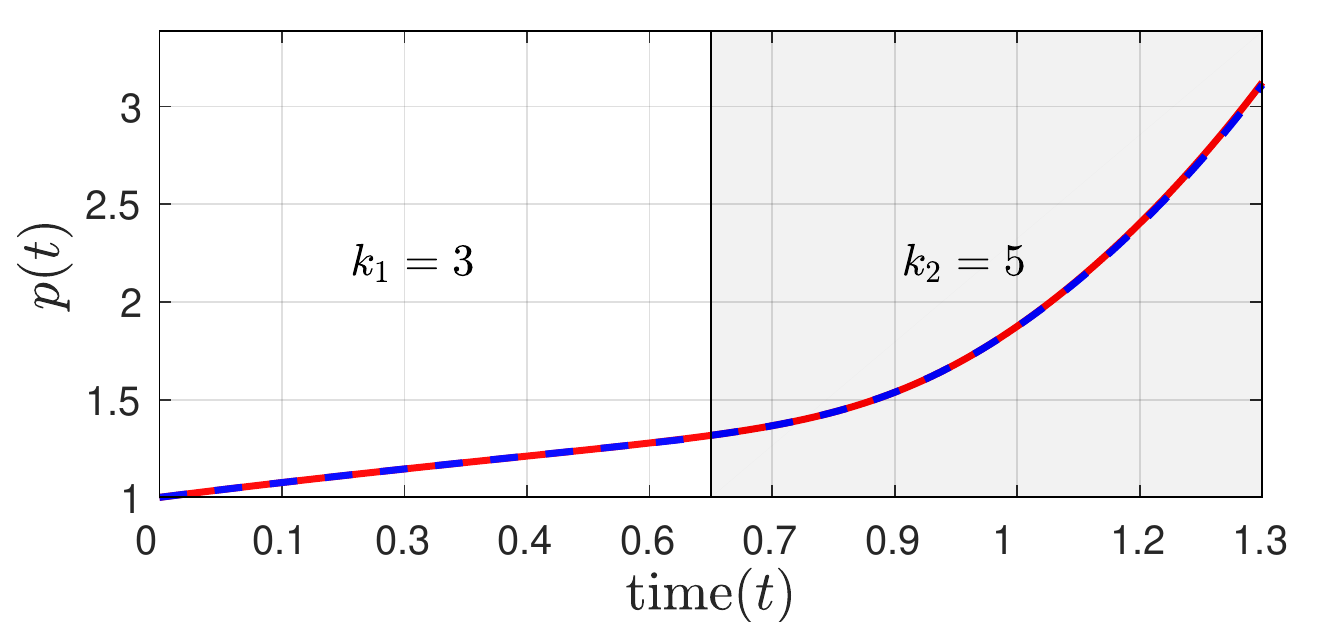}}
\subfigure[Cost function]{
\includegraphics[width=0.45\linewidth]{./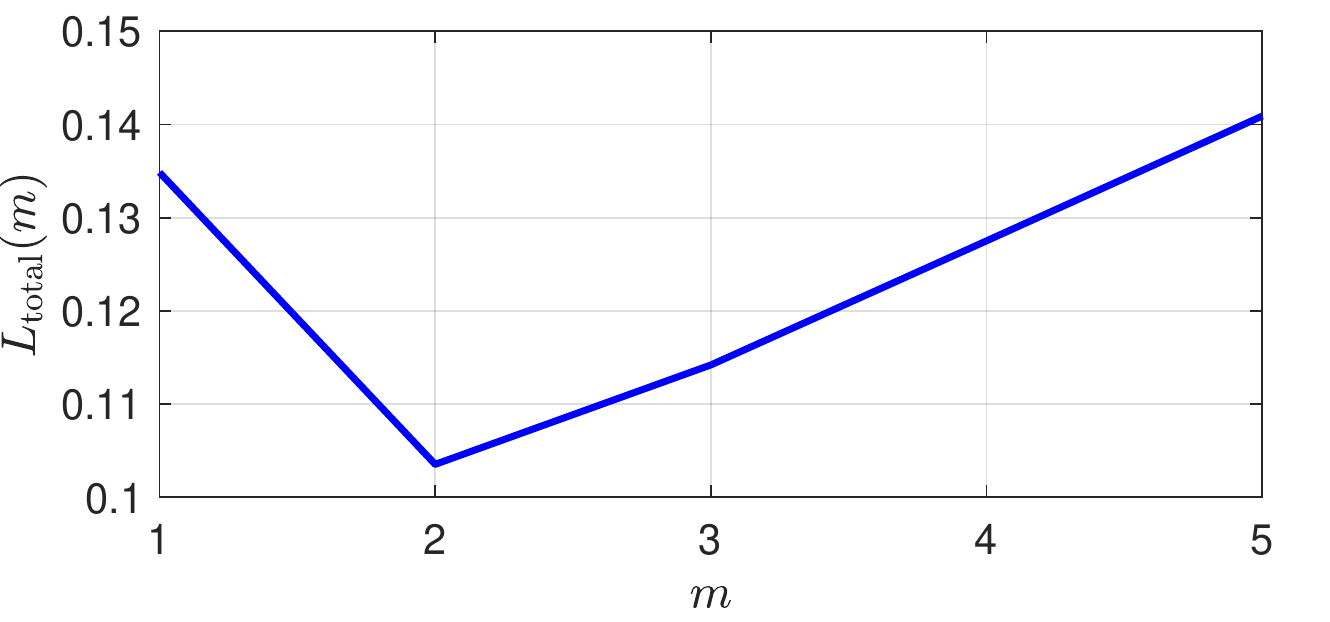}}
	\caption{Quadrotor dynamics. Top four figures show the effect of suboptimal ($m=1$, left) and optimal number of partitions ($m=2$, right) on two different states of the system. As can be seen, having single partition in the state space imposes a large value of the prediction error when the system evolves in time. Partitioning the state space into two local regions gives a fair trade-off between model complexity and total inaccuracy in the prediction of states. The diagram of (e) shows this trade-off due to the first and second terms of $L_{\rm total}$ with parameters $\lambda=2$ and $T_{\rm global}=2$ and sampling time step $\Delta T=0.01$. The overlaying numbers in each area show the number of Taylor terms chosen to model the function restricted to that area.\vspace{-0.2cm}}
	\vspace{-0.2cm}
\label{fig:quadrotor}
\end{figure}

\subsection{Dynamical system of Lorenz chaotic system}
We test the model selection capability of \licds for Lorenz system which is a three dimensional chaotic system described below. 
\begin{equation}
\left\{
	\begin{array}{ll}
	\dot{x}(t)=\sigma(y-x)\\
	\dot{y}(t)=x(\rho-z) - y\\
	\dot{z}(t)=xy-\beta z
	\end{array}
\right.
\end{equation}
with parameters $[\sigma, \beta, \rho]=[10, 8/3, 28]$ for which the system exhibits chaotic behaviour.
We used two neural networks with the same number of neurons but different architectures to learn the dynamics of the system. Fig.~\ref{fig:lorenz_model_selection} refers to the neural network that achieves a closer approximation to the dynamics. The value of \licds score for both architectures are computed. As can be seen in the subcaptions, the optimal value of the \licds score for the correct architecture (top row) that occurs for $m=2$ is less than the optimum value of \licds for the incorrect architecture (bottom row) that occurs for $m=2$. This confirmed our hypothesis that the best encoding by MML principle is a sign of the correct joint description of the model and data. We tested the accuracy of the learned $\hat{f}$ for a left-out subset of $S_c$ as the validation set and observed that the least validation error is concurrent with the least \licds score. This confirms the link between generalization and compression.

\makeatletter
\setlength{\@fptop}{0pt}
\makeatother

\begin{figure}[!]
\vspace{-10cm}
	\centering
	\hspace{-0.3cm}
\subfigure[{$L=0.035$, $E=23.83$}]{
\includegraphics[width=0.3\linewidth]{./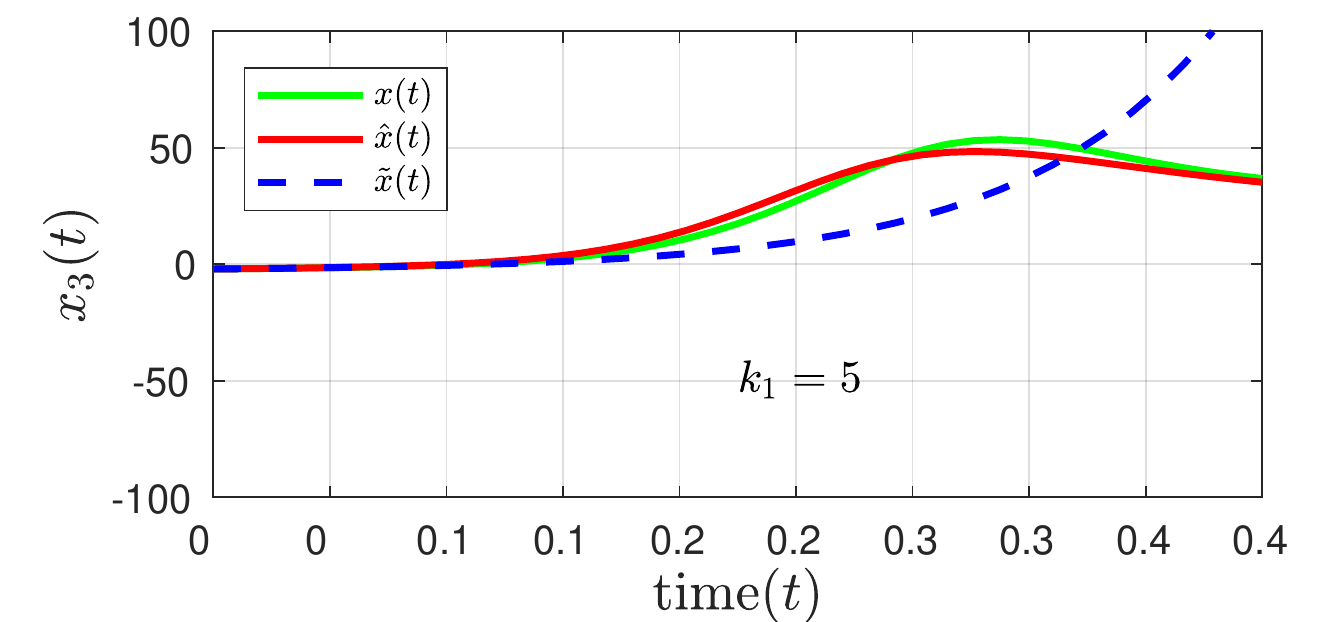}}
\subfigure[{$L=0.024$, $E=19.83$}]{
\includegraphics[width=0.3\linewidth]{./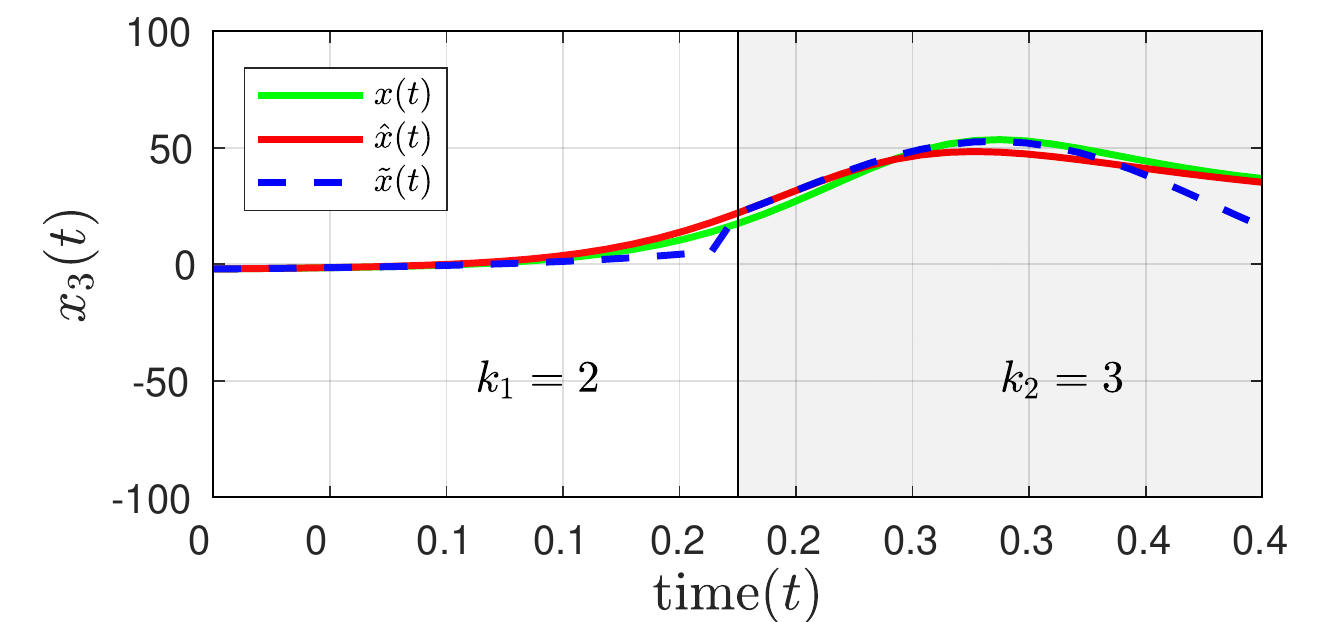}}
\subfigure[{$L=0.028$, $E=21.72$}]{
\includegraphics[width=0.3\linewidth]{./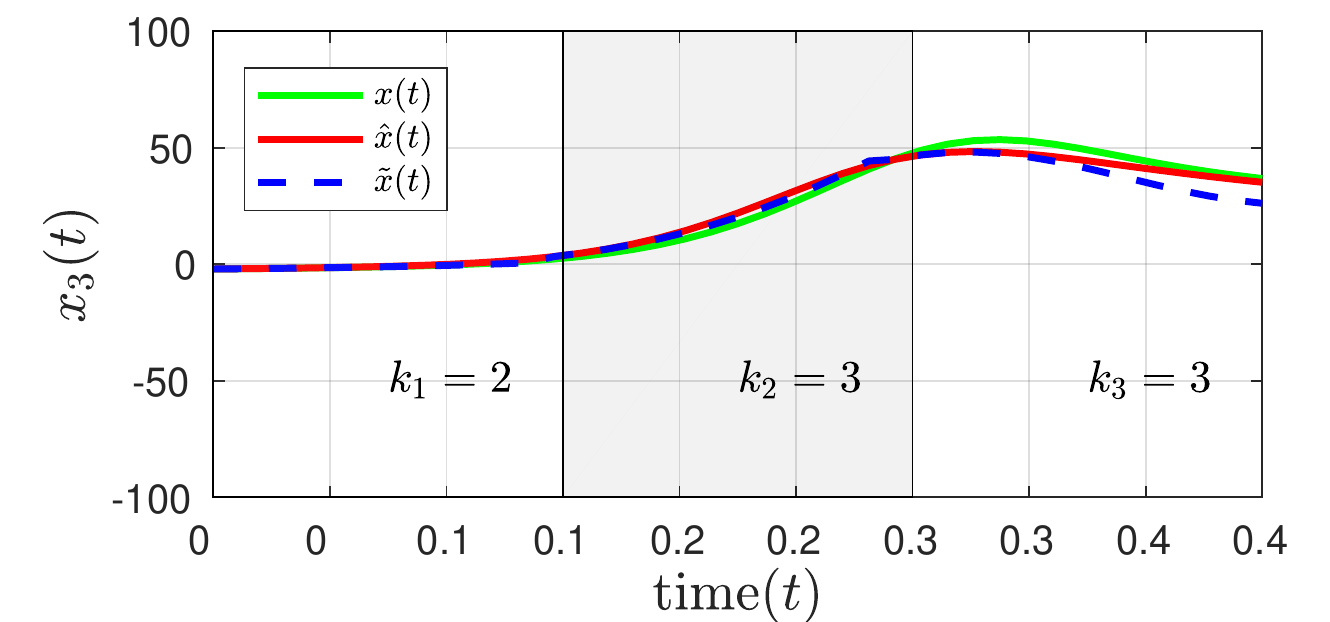}}\\
\subfigure[{$L=0.062$, $E=48.68$}]{
\includegraphics[width=0.3\linewidth]{./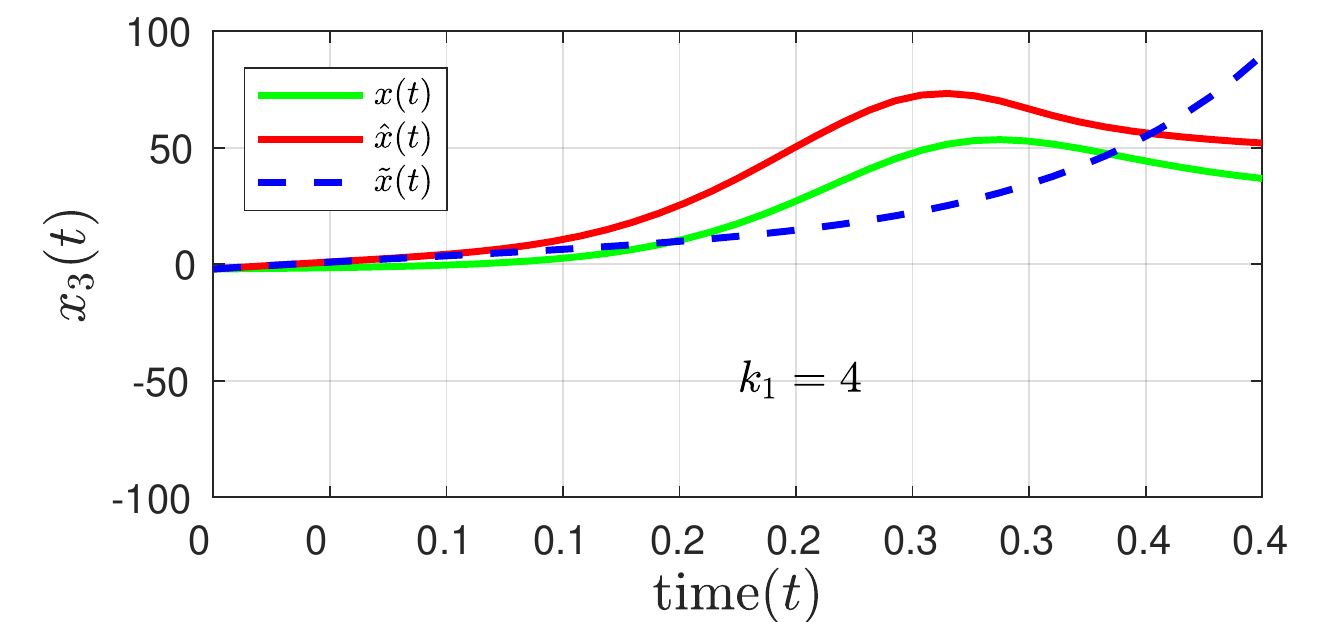}}
\subfigure[{$L=0.056$, $E=35.01$}]{
\includegraphics[width=0.3\linewidth]{./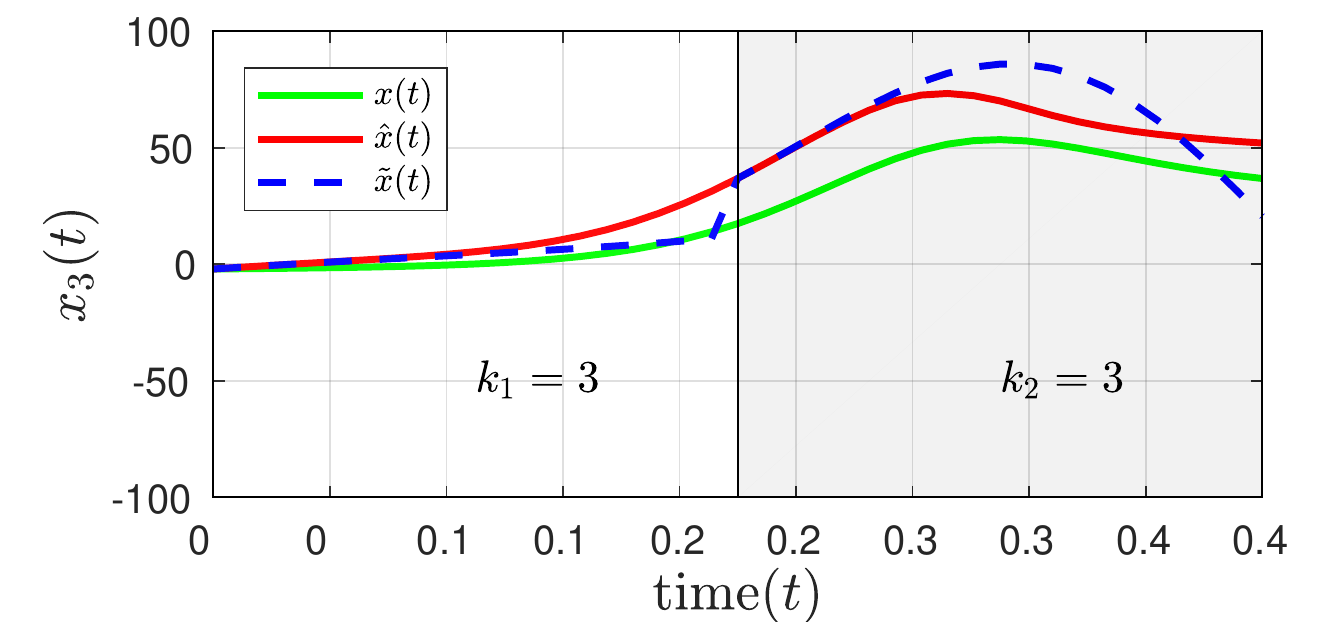}}
\subfigure[{$L=0.088$, $E=57.32$}]{
\includegraphics[width=0.3\linewidth]{./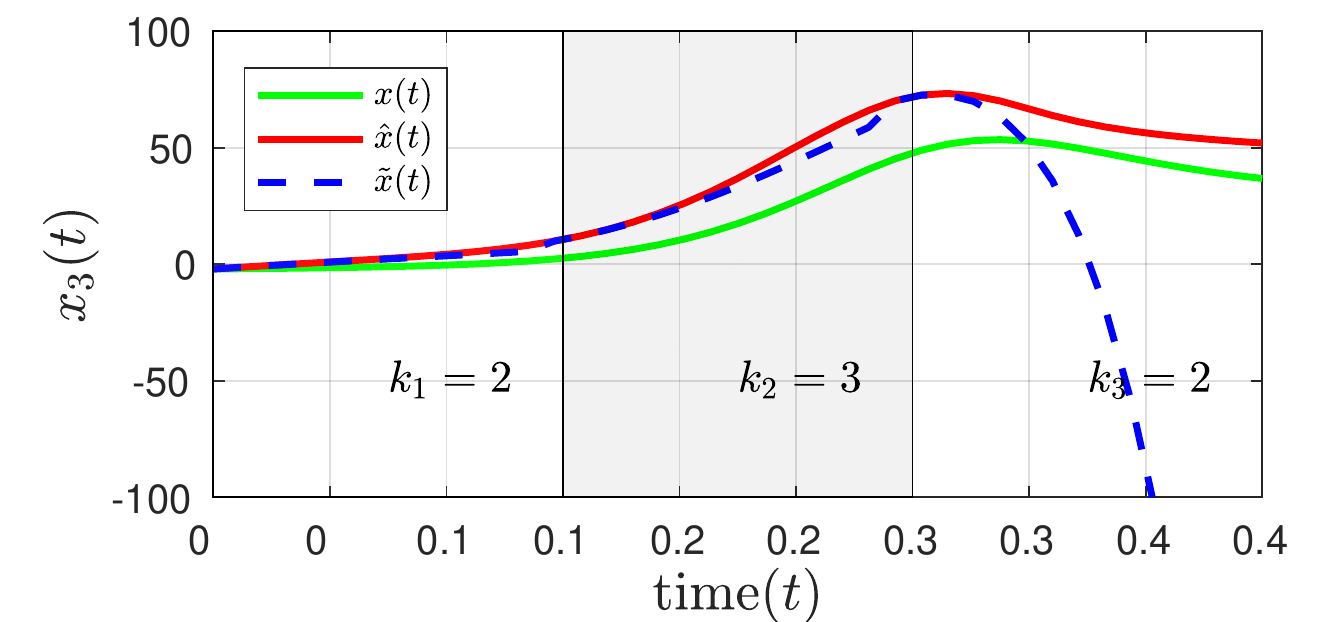}}\\
% \subfigure[4 sections]{
% \includegraphics[width=0.15\linewidth]{./figures_arxiv/nn_part_5_state_2_2d.pdf}}
\caption{Each figure shows the states of the true dynamics $f$ (solid), states of the learned dynamics $\hat{f}$ and states of the locally approximated dynamics $\tilde{f}$ (blue dashed). The neural network architecture NN = [50] for the top row and NN = [10, 10] for the bottom row.}
\vspace{-0.2cm}
\label{fig:lorenz_model_selection}
\end{figure}

\end{document}